\documentclass{article}
\PassOptionsToPackage{numbers,compress,sort&compress}{natbib}

\usepackage{microtype}
\usepackage{graphicx}
\usepackage{subfigure}
\usepackage{booktabs} 
\usepackage{array}
\usepackage{ragged2e}
\usepackage[subfigure]{tocloft}
\newcommand{\listappendixname}{Contents for Appendix}
\newlistof{appendix}{apc}{\listappendixname}
\usepackage{appendix}
\usepackage{upgreek}
\usepackage{mathrsfs}  
\usepackage{multirow}
\usepackage{multicol}
\usepackage{threeparttable}
\usepackage[normalem]{ulem}
\usepackage{hyperref}
\usepackage{enumitem}
\setlist{topsep=1.0pt,itemsep=0.5pt,parsep=0pt,partopsep=0pt}
\usepackage{cancel}
\usepackage{marvosym}

\usepackage{xcolor}
\definecolor{mylightblue}{RGB}{31, 119, 180} 

\usepackage[accepted]{icml2026}

\usepackage{wrapfig}
\usepackage{amsmath}
\usepackage{amssymb}
\usepackage{mathtools}
\usepackage{amsthm}
\usepackage{bbding}

\makeatletter
\def\thm@space@setup{%
  \thm@preskip=2pt plus 0pt minus 1pt%
  \thm@postskip=2pt plus 0pt minus 1pt%
}
\makeatother

\makeatletter
\renewcommand\paragraph{\@startsection{paragraph}{4}{\z@}%
  {0.3ex plus 0.2ex minus 0.1ex}
  {-1em}
  {\normalfont\normalsize\bfseries}}
\makeatother

\usepackage[capitalize,noabbrev]{cleveref}

\theoremstyle{plain}
\newtheorem{theorem}{Theorem}[section]
\newtheorem{proposition}[theorem]{Proposition}

\theoremstyle{definition}

\theoremstyle{remark}

\newtheorem*{lemma*}{Lemma}
\newtheorem*{proposition*}{Proposition}
\newtheorem*{proof*}{Proof}
\newtheorem*{theorem*}{Theorem}
\newtheorem*{corollary*}{Corollary}

\usepackage[textsize=tiny]{todonotes}

\icmltitlerunning{Rethinking the Flow-Based Gradual Domain Adaptation: A Semi-Dual Optimal Transport Perspective}

\begin{document}

\twocolumn[
  \icmltitle{Rethinking the Flow-Based Gradual Domain Adaptation: \\A Semi-Dual Optimal Transport Perspective}



  \icmlsetsymbol{equal}{*}

  \begin{icmlauthorlist}
    \icmlauthor{Zhichao Chen}{pku}
    \icmlauthor{Zhan Zhuang}{cityu}
    \icmlauthor{Yunfei Teng}{pku}
    \icmlauthor{Hao Wang}{zju}
    \icmlauthor{Fangyikang Wang}{mbzuai}
    \icmlauthor{Zhengnan Li}{cuhk}
    \icmlauthor{Tianqiao Liu}{tal}
    \icmlauthor{Haoxuan Li~\textsuperscript{\Letter}}{pku}
    \icmlauthor{Zhouchen Lin~\textsuperscript{\Letter}}{pku}
  \end{icmlauthorlist}

  \icmlaffiliation{pku}{State Key Lab of General AI, School of Intelligence Science and Technology, Peking University}
  \icmlaffiliation{cityu}{City University of Hong Kong}
\icmlaffiliation{zju}{Zhejiang University}
 \icmlaffiliation{mbzuai}{Mohamed bin Zayed University of Artificial Intelligence}
  \icmlaffiliation{cuhk}{The Chinese University of Hong Kong, Shenzhen}
  \icmlaffiliation{tal}{TAL Education Group}

 \icmlcorrespondingauthor{Haoxuan Li}{hxli@stu.pku.edu.cn}
\icmlcorrespondingauthor{Zhouchen Lin}{zlin@pku.edu.cn}

  \icmlkeywords{Gradual Domain Adaptation, Optimal Transport, Gradient Flow, Semi-Dual Transport, Convex Optimization}

  \vskip 0.3in
]



\printAffiliationsAndNotice{}  

\begin{abstract}
Gradual domain adaptation (GDA) aims to mitigate domain shift by progressively adapting models from the source domain to the target domain via intermediate domains. However, real intermediate domains are often unavailable or ineffective, necessitating the synthesis of intermediate samples. Flow-based models have recently been used for this purpose by interpolating between source and target distributions. Notably, their training typically relies on sample-based log-likelihood estimation, which can discard useful information and thus degrade GDA performance. The key to addressing this limitation is constructing the intermediate domains via samples directly. To this end, we propose an $\underline{\text{E}}$ntropy-regularized $\underline{\text{S}}$emi-dual $\underline{\text{U}}$nbalanced $\underline{\text{O}}$ptimal $\underline{\text{T}}$ransport (E-SUOT) framework to construct intermediate domains. Specifically, we reformulate flow-based GDA as a Lagrangian dual problem and derive an equivalent semi-dual objective that circumvents the need for likelihood estimation. However, the dual problem leads to an unstable min–max training procedure. To alleviate this issue, we further introduce the entropy regularization to convert it into a more stable sequential optimization procedure. Based on this, we propose a novel GDA training framework and provide theoretical analysis in terms of stability and generalization. Finally, extensive experiments are conducted to demonstrate the efficacy of the E-SUOT framework. 
\end{abstract}

\section{Introduction}
Unsupervised Domain Adaptation (UDA)~\citep{5288526,8099799,long2015learning,courty2014domain,courty2017joint,zhang2023domain,zhang2025domain,ERMYulongZhang,11391543}, which transfers knowledge from a well-trained source domain to a related yet unlabeled target domain, is of great importance across fundamental application areas. For example, in recommender systems~\citep{chen2024conformal,chen2024doubly,liu2025joint,10005854,li2024relaxing,10678864,zheng2026unified,zheng2025adaptive,zheng2026unveiling,li2024debiased}, a cold-start user has no interaction history with new items, so domain adaptation helps transfer user and item knowledge from an existing system to improve recommendations. Similar scenarios occur in machine translation, where a model trained on high-resource language pairs like English-French can be adapted to translate between English and low-resource languages with limited parallel data~\citep{gazdieva2023extremal,gazdieva2025robust,cheng2025empowering}. These scenarios highlight the importance of conducting UDA to bridge domain gaps and ensure reliable performance in real-world applications. 

Despite these methodological advances, directly performing UDA can be brittle when the source–target shift is substantial or class overlap is weak. In such cases, one-shot alignment often degrades discriminability and amplifies pseudo-label errors during self-training. This challenge motivates a transition from the traditional UDA setting to the Gradual Domain Adaptation (GDA) setting~\citep{he2024gradual}, where adaptation proceeds through a sequence of intermediate distributions that progressively bridge the domain gap. A key aspect of generating intermediate domains in GDA is to interpolate between the source and target domains. Various methods have been proposed to construct such intermediate domains, among which flow-based approaches~\citep{kobyzev2020normalizing,papamakarios2021normalizing,yan2026adjusting,zeng2025pave} have attracted increasing attention, primarily due to their property of preserving probability density along the transformation path, thereby enabling consistent and stable probability densities without distortion or loss of information. To drive the samples from the source domain towards those of the target domain, it is necessary to design an appropriate driving force, typically derived from a discrepancy metric. Among these metrics, $f$-divergence~\citep{sason2016f} is most widely used due to its computational efficiency, empirical effectiveness, and principled formulation within the framework of geometry for probability distributions~\citep{amari2016information}.

Despite the success of flow-based approaches in GDA~\citep{sagawa2025gradual,zhuang2024gradual,zeng2025pave}, we argue that directly applying standard flow-based models leads to suboptimal performance. Specifically, existing flow-based frameworks utilizing $f$-divergence often require the explicit estimation of target domain probability density functions (PDFs) from available target samples~\citep{vincent2011connection,santambrogio2017euclidean,ambrosio2005gradient}, whereas the subsequent GDA process relies on these estimated PDFs to drive the source‑to‑target transfer. For example, \citet{zhuang2024gradual} estimate the target domain PDF in the score function form and generate intermediate domains via Langevin dynamics. Consequently, the quality of the intermediate domain heavily depends on the accuracy of the estimated PDF; if this estimation is inaccurate, the performance of downstream task may suffer significantly. 

To address these limitations, we propose a novel flow-based GDA framework termed ``$\underline{\text{E}}$ntropy-regularized $\underline{\text{S}}$emi-dual $\underline{\text{U}}$nbalanced $\underline{\text{O}}$ptimal $\underline{\text{T}}$ransport'' (E-SUOT), which leverages the semi-dual formulation of gradient flows. Rather than explicitly estimating PDFs, we recast flow evolution as an optimization problem that combines an $f$-divergence term with a Wasserstein distance regularization term, enabling sample transport toward the target domain without reliance on PDF estimation. However, as the semi-dual reformulation inherently leads to an adversarial training paradigm that can compromise stability and performance, we introduce entropy regularization to the objective to guarantee the stability of the training process. Based on this, we summarize the algorithm for E-SUOT-based intermediate domain generation, and prove the convergence of our E-SUOT framework. Extensive experiments on the GDA and UDA tasks validate that E-SUOT achieves superior performance compared with existing methods.

\textbf{Contributions.} We summarize our contributions as follows: 
\begin{enumerate}[leftmargin=*]
\item{We develop a semi-dual formulation for intermediate domain generation in flow-based GDA, which eliminates the need for explicit estimation of the target domain PDF.}
\item{We introduce an entropy regularization term to address the unstable issue inherent in the semi-dual formulation, resulting in the novel and stable E-SUOT framework.}
\item{We conducted various experiments to demonstrate the superiority of the E-SUOT framework compared to prevalent approaches.}
\end{enumerate}
\textbf{Conflicts of Interest Disclosure.} No potential conflict of interest was reported by the authors.

\section{Preliminaries}
\textbf{Preliminary Note.} In this manuscript, we focus on analyzing the flow-based GDA \textit{per se}, rather than establishing that GDA is superior to UDA. We inherit the standard assumptions commonly adopted in GDA and do not discuss when these assumptions hold in practice. A related unbalanced optimal transport (UOT) formulation will be derived as a direct consequence of standard definitions in the flow-based GDA setting; we use it as an analytical characterization, not as an additional assumption, and we do not study the inherent superiority of UOT compared with the vanilla optimal transport (OT) formulation.

\subsection{Settings and Notations}
In GDA, we consider a labeled source domain, $T-1$ unlabeled intermediate domains, and an unlabeled target domain. Let the input space be $\mathcal{X}$ and the label space be $\mathcal{Y}$. We denote inputs as $x\in\mathcal{X}$ and labels as $y\in\mathcal{Y}$. We index the domains by $t\in\{0,\dots,T\}$, where $t=0$ denotes the source domain and $t=T$ denotes the target domain. Each domain induces a marginal distribution $p_t$ over $\mathcal{X}$. Let $\mathcal{H}$ be a hypothesis class of classifiers $h:\mathcal{X}\to\mathcal{Y}$. The GDA task assumes that each domain admits a labeling function $q_t\in\mathcal{H}$. Given a loss function $\mathcal{L}:\mathcal{Y}\times\mathcal{Y}\to\mathbb{R}_{\ge 0}$, the generalization error of $h$ on domain $t$ is defined as
$\varepsilon_{p_t}(h)\;=\;\mathbb{E}_{p_t(x)}\big[\mathcal{L}\big(h(x),q_t(x)\big)\big]$. A source classifier $q_0\in\mathcal{H}$ can be learned via supervised learning on the source domain with minimal error $\varepsilon_{p_0}(q_0)$. The objective of GDA is to evolve $q_0$ through the intermediate domains to a classifier $h_T$ so as to minimize the target error $\varepsilon_{p_T}(h_T)$.

\subsection{Flows for Intermediate Domain Generation}
A flow describes the time-dependent evolution of particles induced by a smooth invertible map. Based on this, the intermediate domains can be seen as a discretization of a continuous flow linking source and target distributions. This motivates flow-based models~\citep{chen2025regularized,11202603}, which evolve a distribution over a fixed time horizon while preserving normalization, and are thus well-suited for GDA. From the flow perspective, intermediate domains are generated by the following ordinary differential equation:
\begin{equation}\label{eq:gradFlowODe}
\frac{\mathrm{d}x_t}{\mathrm{d}t} = v_t(x_t) =  - \nabla \frac{\delta \mathbb{D}[p(x_t),p_T(x)]}{\delta p(x_t)}, \ x_{t=0}=x_0,
\end{equation}
where $p(x_t)$ is the PDF induced by $\{x_{t,i}\}_{i=1}^{\mathrm{N}}$, and we desire the law $p(x_T)$ to approximate the target $p_T(x)$. Here $v_t:\mathcal{X}\to\mathcal{X}$ is the velocity field. Notably, $\frac{\delta}{\delta p}$ denotes the first variation with respect to $p$, and the second equality sign is called ``gradient flow''.
The core design problem is to choose $v_t$ so that $p(x_t)\xrightarrow[t\to T]{}p_T(x)$. A principled approach is to define $v_t$ as the steepest descent direction of some discrepancy functional $\mathbb{D}[p(x_t),p_T(x)]$ between $p(x_t)$ and $p_T(x)$ as demonstrated in the second equal sign in~\Cref{eq:gradFlowODe}. 

Among various choices, $f$-divergences are favored in GDA for their task-aligned objectives, stable probability-preserving dynamics, and efficient computation when compared to alternatives such as Sinkhorn divergence~\citep{glaser2021kale}. For an $f$-divergence, we have:
\begin{equation}
\mathbb{D}_f[p(x_t),p_T(x)] =\int p_T(x) f(\frac{p(x_t)}{p_T(x)} )\mathrm{d}x,
\end{equation}
with $f:(0,\infty)\to\mathbb{R}$ convex and $f(x)=0$ if and only if $x=1$. A canonical example is the Kullback–Leibler (KL) divergence with $f(u)=u\log u$. In this case, we have:
\begin{equation}
    v_t(x_t) = \nabla \log p_T(x) - \nabla \log p(x_t),
\end{equation}
and, in the weak solution to the partial differential equations~\citep{evans2022partial,chen2024rethinking,liu2017stein}, the induced dynamics yield the classical \emph{Langevin dynamic}~\citep{santambrogio2017euclidean,wang2025unleashing}. 

Intuitively, applying the forward Euler scheme with step size $\eta$ to the gradient flow in \Cref{eq:gradFlowODe} under an $f$-divergence yields a discrete-time generation for the intermediate domain, which is equivalent to solving the following optimization problem with the squared 2-Wasserstein distance as the regularization term~\citep{jordan1998variational}; the derivation is provided in~\Cref{subsec:DerivationOfGradFlowResultsJKO}:
\begin{equation}\label{eq:forwardSimulationEquivalentForm}
\begin{aligned}
p(x_{t+\eta})
\!=\!\mathop{\arg\min\!}_{\rho \in\mathcal{P}_2(\mathbb{R}^{\mathrm{D}})}
\frac{1}{2\eta}\mathcal{W}_2^2(\rho(x), p(x_t))\!+\!\mathbb{D}_f[\rho(x), p_T(x)],
\end{aligned}
\end{equation}
where $\mathcal{P}_2(\mathbb{R}^{\mathrm{D}})$ denotes the Wasserstein space~\citep{villani2009optimal}, which is the set of the distributions with finite second moment. Here, $\mathcal{W}_2^2$ is the squared 2-Wasserstein distance, whose definition is given as follows:
\begin{equation}
\mathcal{W}_2^2(\rho,\xi)
= \inf_{\pi\in\Pi(\rho,\xi)} \iint \|x-y\|_2^2\, \pi(x,y)\,\mathrm{d}x\,\mathrm{d}y,
\end{equation}
and $\Pi(\rho,\xi)$ is the set of joint distribution on $\mathbb{R}^{\mathrm{D}}\times \mathbb{R}^{\mathrm{D}}$ with marginal distributions $\rho$ and $\xi$.

\section{Methodology}
\subsection{Motivation Analysis}\label{subsec:motivationAnalysisResults}
Flow-based approaches, exemplified by gradient-flow methods, interpolate between the source and target distributions by gradually minimizing a discrepancy measure, typically an $f$-divergence, between the two domains. The success of these methods in GDA tasks critically depends on accurately estimating the target distribution's (normalized/unnormalized) PDF. Given a reliable estimate, one can construct a velocity field that progressively pushes source samples toward the target distribution. 

However, estimating the target PDF from samples alone is ill-posed \citep{RN72,song2020sliced}. When the estimate is inaccurate, the resulting flow may drive samples into low-density regions, producing a noticeable mismatch between generated and true target samples and degrading downstream performance. To highlight this issue, we compare two transport strategies: (1) EstTrans, which first estimates the target PDF via denoised score matching \citep{vincent2011connection} and then applies gradient flow-based transport, and (2) DirTrans, which directly transports samples using a learned optimal transport map. The qualitative results and the Wasserstein distance to ground-truth target samples are reported in \Cref{subfig:samplingLang,subfig:samplingSUOT}. As shown, inaccurate PDF estimates lead EstTrans to produce samples that deviate from the target distribution and incur a larger Wasserstein distance, whereas DirTrans better recovers target samples and achieves a smaller distance. These observations motivate the following questions: \emph{How can we generate intermediate domains without estimating the PDF of target domain? How can robust intermediate domain generation be achieved within this framework? Does this approach improve the performance for GDA task?}

\begin{figure}[h]
    \centering
        \vspace{-0.3cm}
    \subfigure[EstTrans, $\mathcal{W}_2^2\approx 9.7\text{E}0$.\label{subfig:samplingLang}]{\includegraphics[width=0.47\linewidth]{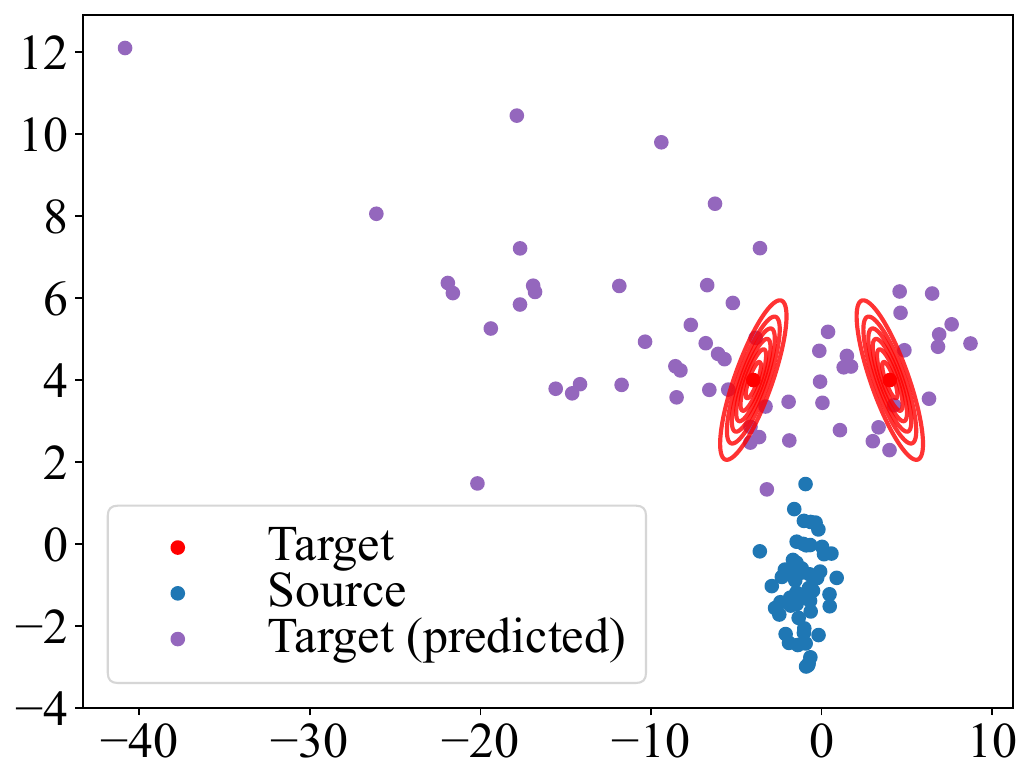}}
     \vspace{0.1cm}
     \subfigure[DirTrans, $\mathcal{W}_2^2\approx 7.8\text{E}-4$.\label{subfig:samplingSUOT}]{\includegraphics[width=0.47\linewidth]{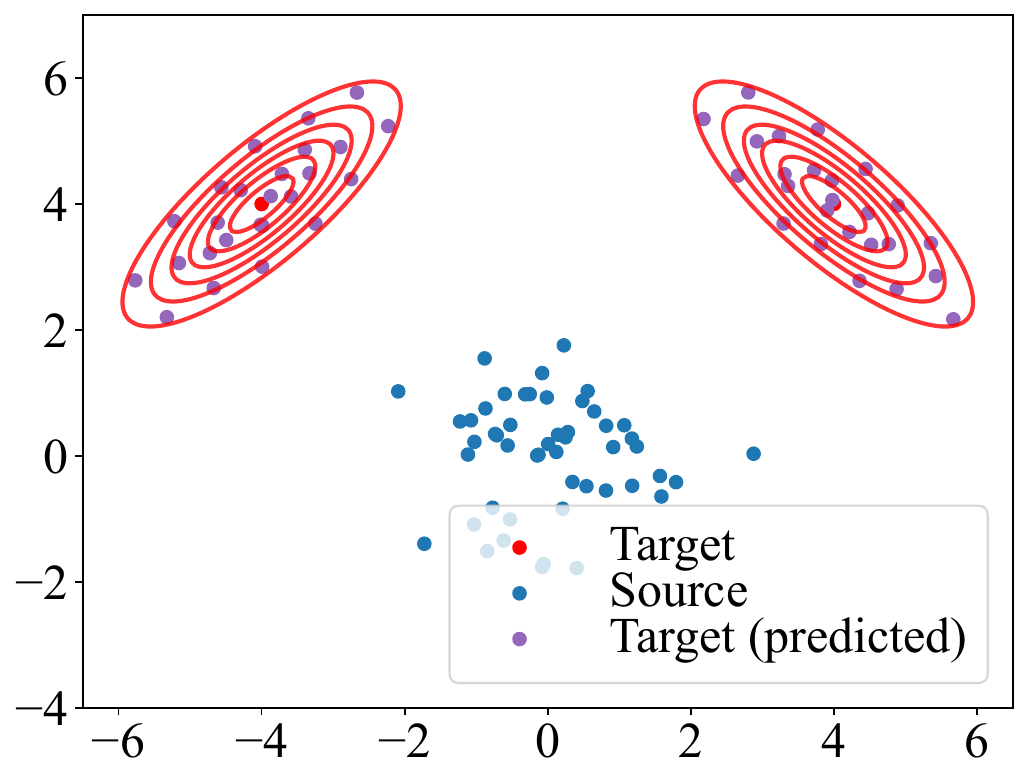}}
 \vspace{-0.2cm}
    \caption{Comparison between EstTrans and DirTrans.}\label{fig:placeholder}
     \vspace{-0.4cm}
\end{figure}

\subsection{Dual-Form Optimal Transport for GDA Task}\label{subsec:dualProblemInterMediateDomain}

As shown in \Cref{eq:forwardSimulationEquivalentForm}, simulating the gradient flow to generate intermediate domains is exactly equivalent to solving an optimization problem regularized by the Wasserstein distance. This insight opens up a practical alternative: \emph{instead of explicitly estimating the target domain’s probability density, one can guide source samples by directly tackling this optimization formulation}. Thus, we have the following proposition regarding the solution property of the problem defined in~\Cref{eq:forwardSimulationEquivalentForm}:

\begin{proposition}\label{thm:firstThmResultsDualProblem}
Consider the following primal problem:
\begin{equation}\label{eq:primalMinimumMovementScheme}
\mathcal{L}^{\text{Primal}}\!=\!\mathop{\arg\min}_{\rho(x)\in\mathcal{P}_2(\mathbb{R}^{\mathrm{D}})}
\frac{1}{2\eta}\,\mathcal{W}_2^2(\rho(x), p(x_t)) + \mathbb{D}_f[\rho(x), p_T(x)].
\end{equation}
This problem is equivalent to the following semi-dual formulation:
\begin{equation}\label{eq:dualLearningObjective}
\begin{aligned}
\mathcal{L}^{\text{SemiDual}}=\sup_{w} \mathbb{E}_{p(x_t)}[\inf_{\boldsymbol{T}}  &\frac{1}{2\eta}\Vert \boldsymbol{T}(x_t) - x_t \Vert_2^2  - w
(\boldsymbol{T}(x_t) )]\\
&  -\mathbb{E}_{p_T(x)}[f^\star(-w(x))],
\end{aligned}
\end{equation}
where $w:\mathbb{R}^{\mathrm{D}} \to \mathbb{R}$ is a measurable continuous function, $\boldsymbol{T}:\mathbb{R}^{\mathrm{D}}\to\mathbb{R}^{\mathrm{D}}$ is the transport map, and $f^\star \coloneqq \sup_{y \geq 0} \{ zy - f(y) \}$ denotes the convex conjugate of $f$.
\end{proposition}
Importantly, the structure of the semi-dual problem ensures that both $p_t(x)$ and $p_T(x)$ are involved only through expectation operators, rather than through explicit density evaluations. This enables the use of Monte Carlo methods to approximate all necessary integrals, thereby eliminating the need for access to the density function, particularly for the target domain, when constructing intermediate distributions. On this basis, following prior works~\citep{korotinneural,choi2023generative,choi2024scalable}, we can parameterize both the dual potential $w$ and the transport map $\boldsymbol{T}$ by neural networks, denoted as $w_\phi$ and $\boldsymbol{T}_\theta$ respectively. The models are trained in an alternating adversarial scheme to learn the sequence of maps $\{\boldsymbol{T}_{\theta,t}\}_{t=0}^{T-1}$, which can be applied to generate intermediate domains progressively.


\subsection{Robust Training for Semi-Dual Formulation}\label{subsec:RobustTrainingProcedureForDualFormTransportResult}

While \Cref{subsec:dualProblemInterMediateDomain} provides a semi-dual form of the gradient flow problem that avoids explicit PDF estimation in target domain, naively training $\mathcal{L}^{\text{SemiDual}}$ in \Cref{eq:dualLearningObjective} is intrinsically unstable because of its composite `$\sup$–$\inf$' structure. This instability is not merely algorithmic: the objective itself may be non-identifiable. We formalize this phenomenon by proving that the dual problem can have non-unique optima, as the following proposition shows:

\begin{proposition}\label{thm:notHaveUniqueSolution}
Consider the semi-dual objective in~\Cref{eq:dualLearningObjective} and assume it admits at least one optimizer $w$. There exist $a\in\mathbb{R}^\mathrm{D}$ and two points $b_1,b_2\in\mathbb{R}^\mathrm{D}$ such that $\|a-b_1\|_2=\|a-b_2\|_2$. Let $p_\gamma(x_t)=\mathcal{N}(a,\gamma I)$ and $p_{T,\gamma}(x)=\tfrac12\mathcal{N}(b_1,\gamma I)+\tfrac12\mathcal{N}(b_2,\gamma I)$. Then, for all sufficiently small $\gamma>0$, there exists a semi-dual optimizer $w_\gamma^*$ and a point $x_t^\gamma\in\mathbb{R}^\mathrm{D}$ (with $x_t^\gamma\to a$ as $\gamma\to 0$) such that $\arg\min_{x\in\mathbb{R}^\mathrm{D}}\{c(x_t^\gamma,x)-w_\gamma^*(x)\}$ is not a singleton. Equivalently, the inner $c$-transform (and thus $\boldsymbol T^*(x_t^\gamma)$) is non-unique.
\end{proposition}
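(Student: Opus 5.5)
Throughout, $c(x,y)=\frac{1}{2\eta}\|x-y\|_2^2$. The plan is to use the primal--dual correspondence of \Cref{thm:firstThmResultsDualProblem} together with a symmetry argument.

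\textbf{Step 1: optimality structure.} For every $\gamma$ the problem is invariant under the reflection $R$ across the hyperplane $H=\{x:\|x-b_1\|_2=\|x-b_2\|_2\}$. This hyperplane contains $a$, so $R$ fixes $p_\gamma(x_t)$, and $R$ swaps the two components of $p_{T,\gamma}$. Starting from an optimizer $w$, the function $w\circ R$ is also an optimizer. Since the semi-dual objective is concave in $w$, its value at the average of these two optimizers is at least their common value, so the symmetrized $w_\gamma^*=\tfrac12(w+w\circ R)$ is again optimal. It is $R$-invariant, so its $c$-transform argmin sets satisfy $\arg\min_x\{c(Rz,x)-w_\gamma^*(x)\}=R\,\arg\min_x\{c(z,x)-w_\gamma^*(x)\}$.

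\textbf{Step 2: the primal optimizer splits mass.} Let $\rho_\gamma$ be the primal optimizer. Complementary slackness from \Cref{thm:firstThmResultsDualProblem} says that the optimal coupling $\pi_\gamma$ between $p_\gamma(x_t)$ and $\rho_\gamma$ is supported on the $c$-subdifferential of $w_\gamma^*$. In other words, for $\pi_\gamma$-a.e. $(x_t,y)$, the point $y$ lies in $\arg\min_x\{c(x_t,x)-w_\gamma^*(x)\}$. The divergence term forces $\rho_\gamma$ to place mass near both $b_1$ and $b_2$; by symmetry, exactly half near each. I plan to show this by comparing the cost of the optimal value with that of any candidate that neglects one mode: for small $\gamma$ the $f$-divergence penalty of such a candidate stays bounded away from its optimal value. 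The relevant comparison is between pushing all mass into the region near $b_1$ and the objective evaluated at the symmetric split.

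\textbf{Step 3: extract a non-unique point.} Suppose, for contradiction, that the argmin is a singleton for every $x_t$ in a neighborhood $U$ of $a$. Then the inner minimizer $T(x_t)$ is a well-defined map on $U$. Using continuity of $w_\gamma^*$ and the argmin correspondence, I will show it is continuous there (closed graph plus local boundedness). The source $\mathcal{N}(a,\gamma I)$ concentrates in a ball $B(a,r_\gamma)$ with $r_\gamma\to0$, while the target mass sits near $\{b_1,b_2\}$, far from $H$. A continuous $T$ on the connected ball, taking values only near $b_1$ or $b_2$, must map essentially the whole ball to one side. By $R$-equivariance, however, the preimages of the two sides are reflections of each other, so each contains half the mass. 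That is a contradiction. Hence there is a point $x_t^\gamma\in B(a,r_\gamma)$, so $x_t^\gamma\to a$, at which the argmin is not a singleton. Concretely, I expect to find such a point on $H$, where equivariance yields two argmins $y$ and $Ry\ne y$.

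\textbf{Main obstacle.} The hardest step is making rigorous that the argmin values avoid a neighborhood of $H$. The issue is that $\rho_\gamma$ could in principle place mass near $H$, so the image of $T$ would not be confined to the regions near $b_1$ and $b_2$. To handle this I would first try a first-order optimality condition. In the primal, $f'(\rho_\gamma/p_{T,\gamma})$ equals $-w_\gamma^*$ on the support of $\rho_\gamma$. Gaussian tails then make mass near $H$ infinitely costly relative to mass near the modes as $\gamma\to0$. If that fails, I would fall back to a cruder argument: a direct contradiction via a discontinuity of the Brenier-type map at $H$, which exists because the two half-masses must move to separated locations.
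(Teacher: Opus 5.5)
\textbf{There is a genuine gap: Step~3 needs something Step~2 cannot give, and for KL the statement you are proving fails.}

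The gap is exactly the step you flag as the main obstacle, and your proposed route does not close it. Step~3 needs the inner minimizers for $x_t\in B(a,r_\gamma)$ to avoid a whole neighborhood of $H$ \emph{pointwise} (``taking values only near $b_1$ or $b_2$''). Step~2, like any cost comparison of that kind, only controls how much \emph{mass} $\rho_\gamma$ places near $H$. A continuous $R$-equivariant map can send the two halves of the ball to the two wells. It can also send a thin slab around $H\cap B(a,r_\gamma)$, of tiny $p_\gamma$-mass, onto a thin bridge through $H$; equivariance in fact forces $T(H)\subset H$. This is fully consistent with the half/half split, so no contradiction appears unless $\mathrm{supp}\,\rho_\gamma$ misses a slab around $H$. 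Your first-order condition shows the opposite in the paper's main case. For $f(u)=u\log u$ one has $f^\star(z)=e^{z-1}$, and Fenchel--Young equality gives $\rho_\gamma=p_{T,\gamma}\,e^{-w_\gamma^*-1}>0$ everywhere. Gaussian tails make the mass near $H$ exponentially small, but never zero.

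For KL this is not a technicality; the conclusion itself fails.
\begin{enumerate}
\item Since $f^\star$ is strictly increasing, replacing $w$ by $w^{c\bar c}\ge w$ leaves $\mathbb{E}_{p_\gamma}[w^c]$ unchanged and strictly improves $-\mathbb{E}_{p_{T,\gamma}}[f^\star(-w)]$ unless $w=w^{c\bar c}$ a.e. By continuity this equality then holds everywhere, so every optimizer is $c$-concave.
\item Consequently $\psi(y)=\tfrac12\|y\|_2^2-\eta\, w_\gamma^*(y)$ is convex, and the inner argmin at $x_t$ is exactly $\partial\psi^*(x_t)$.
\item In $\mathrm{D}=1$, $(\psi^*)'$ agrees a.e. with the monotone rearrangement $F_{\rho_\gamma}^{-1}\circ F_{p_\gamma}$. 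This map is continuous because $F_{\rho_\gamma}$ is strictly increasing.
\item A monotone function a.e. equal to a continuous one has no jumps. Hence $\psi^*\in C^1$ and every argmin is a singleton, for every $\gamma$ and every symmetric configuration.
\end{enumerate}
In higher dimension, regularity theory for Brenier maps between densities locally bounded above and below on all of $\mathbb{R}^{\mathrm{D}}$ appears to point the same way. This also rules out your fallback via a discontinuous Brenier map at $H$.

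Your strategy can only work for $f$ with $(f^\star)'=0$ on a half-line, e.g.\ $\chi^2$, where $\rho_\gamma=p_{T,\gamma}\max(0,1-w_\gamma^*/2)$. There the missing ingredient is a \emph{pointwise} estimate, such as $w_\gamma^*>2$ on a slab around $H$, giving $\mathrm{supp}\,\rho_\gamma\cap H=\varnothing$. With that estimate, Step~3's topology becomes unnecessary:
\begin{itemize}
\item For the symmetric $c$-concave optimizer and $z\in H$, the argmin set is convex and $R$-invariant.
\item If it were a singleton, its point would lie in $H$ by $R$-invariance, and also in $\mathrm{supp}\,\rho_\gamma$ as a limit of gradients.
\item That is a contradiction, so the argmin is not a singleton, and even $x_t^\gamma=a$ works.
\end{itemize}

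The paper's proof does not supply this step either. After the discrete symmetric case, it mollifies the potential and asserts that approximately equal values in the two wells force minimizers in both. It does not show that the mollified potential is optimal, or that the near-tie is exact.
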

To address this issue, we incorporate an entropy regularization term into the primal objective \Cref{eq:primalMinimumMovementScheme}, which leads to the following proposition:
\begin{proposition}\label{thm:entropyRegularizedResults}
Let $\kappa(x_t,x):=p(x_t)\,p_T(x)$ denote the reference joint PDF. The entropy-regularized primal problem can be formulated as follows:
\begin{equation}\label{eq:primalMinimumMovementSchemeEntropy}
\begin{aligned}
\mathcal{L}^{\text{E-Primal}}=& \mathop{\arg\min}_{\rho\in\mathcal{P}_2(\mathbb{R}^\mathrm{D})} \frac{1}{2\eta}\mathcal{W}_2^2(\rho(x), p(x_t))
+\mathbb{D}_f[\rho(x),p_T(x)]    \\
 & +
\epsilon \iint \pi(x_t,x)
[\log \frac{\pi(x_t,x)}{\kappa(x_t,x)}-1]\mathrm{d}x_t \mathrm{d}x 
,
\end{aligned}
\end{equation}
and is equivalent to the semi-dual optimization problem
\begin{equation}\label{eq:dualLearningObjectiveEntropy}
\begin{aligned}
&\mathcal{L}^{\text{E-SemiDual}} 
=\inf_{w}\indent \mathbb{E}_{p_T(x)}\![f^\star(-w(x))]
 \\
& + \epsilon\mathbb{E}_{p(x_t)}[
\log \mathbb{E}_{p_T(x)}(
\exp(\frac{w(x)-\frac{1}{2\eta}\|x-x_t\|_2^2}{\epsilon})
)] ,
\end{aligned}
\end{equation}
where $w$ and $f^\star$ are as defined in Proposition~\ref{thm:firstThmResultsDualProblem}.
\end{proposition}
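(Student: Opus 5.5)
I will prove the equivalence by rewriting the primal problem as an optimization over couplings and then dualizing. The coupling is $\pi(x_t,x)$, with first marginal fixed to $p(x_t)$ and second marginal $\rho$ left free. Writing $c(x_t,x)=\frac{1}{2\eta}\|x-x_t\|_2^2$, the entropy-regularized primal \Cref{eq:primalMinimumMovementSchemeEntropy} becomes
\begin{equation*}
\inf_{\pi\ge 0,\ \pi_1=p(x_t)} \iint c\,\mathrm{d}\pi + \mathbb{D}_f[\pi_2,p_T] + \epsilon\iint \pi\Big[\log\frac{\pi}{\kappa}-1\Big],
\end{equation*}
where $\pi_1,\pi_2$ are the marginals of $\pi$. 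The minimizing $\rho$ is recovered as $\pi_2^*$.

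\textbf{Step 1: Fenchel representation of the divergence.} I use the conjugate representation of the $f$-divergence,
\begin{equation*}
\mathbb{D}_f[\pi_2,p_T]=\sup_w\ \mathbb{E}_{\pi_2}[-w(x)]-\mathbb{E}_{p_T}[f^\star(-w(x))].
\end{equation*}
This is the same step that underlies \Cref{thm:firstThmResultsDualProblem}, and I take it as given, with the sign convention chosen to match \Cref{eq:dualLearningObjective}.

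\textbf{Step 2: exchange inf and sup.} Substituting gives a $\inf_\pi\sup_w$ problem. The Lagrangian is convex in $\pi$, because of the linear cost and the strictly convex entropy, and concave (linear minus convex) in $w$. I will invoke a minimax theorem, e.g.\ Sion's, with lower semicontinuity and compactness supplied by the entropy term. The entropy term makes the $\pi$-sublevel sets weakly compact: on them $\mathrm{KL}(\pi\|\kappa)$ is bounded, so they are uniformly integrable. This swap is the main obstacle. In particular, justifying it for unbounded $w$ and noncompact $\mathbb{R}^\mathrm{D}$ needs some care. I would first restrict to bounded continuous $w$ and then argue by density.

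\textbf{Step 3: solve the inner problem in closed form.} Fix $w$. The inner problem is
\begin{equation*}
\inf_{\pi:\pi_1=p(x_t)}\iint \big(c(x_t,x)-w(x)\big)\,\mathrm{d}\pi+\epsilon\iint\pi\Big[\log\frac{\pi}{\kappa}-1\Big].
\end{equation*}
I disintegrate $\pi(x_t,x)=p(x_t)\pi(x|x_t)$. For each $x_t$, this reduces to a Gibbs variational problem over conditional densities $\pi(x|x_t)$, with reference measure $p_T$. The minimizer is
\begin{equation*}
\pi(x|x_t)\propto p_T(x)\exp\!\Big(\frac{w(x)-c(x_t,x)}{\epsilon}\Big),
\end{equation*}
and the optimal value is $-\epsilon\log\mathbb{E}_{p_T}\exp\big((w-c)/\epsilon\big)$, up to an additive constant. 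That constant comes from the $-1$ term in the entropy and the normalization of the conditional. I will track it explicitly and check that it cancels or is absorbed; otherwise it is a harmless constant offset. Integrating over $p(x_t)$ gives the second term of \Cref{eq:dualLearningObjectiveEntropy}.

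\textbf{Step 4: assemble the dual.} Combining Steps 1 and 3, the dual value is
\begin{equation*}
\sup_w\ -\mathbb{E}_{p_T}[f^\star(-w)]-\epsilon\,\mathbb{E}_{p(x_t)}\log\mathbb{E}_{p_T}\exp\!\Big(\frac{w(x)-c(x_t,x)}{\epsilon}\Big).
\end{equation*}
Rewriting this supremum as the negative of an infimum yields exactly $\mathcal{L}^{\text{E-SemiDual}}$. The optimal $\rho$ is then recovered as the second marginal of the Gibbs coupling at the optimal $w$.

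If the full minimax justification proves heavy, my fallback is weak duality plus explicit attainment. Weak duality is direct. For attainment, I would show that the first-order conditions in $w$ give $f'$-type compatibility between $\pi_2$ and $p_T$, and that this forces the duality gap to be zero by convexity.
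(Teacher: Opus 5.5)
Your argument is correct and reaches the same formula, but it organizes the duality differently from the paper.

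\textbf{The paper's route.} It attaches two Lagrange multipliers: $u(x_t)$ to the $p(x_t)$-marginal and $w(x)$ to the $\rho$-marginal. Setting the first variation in $\pi$ to zero gives $\pi^*=\kappa\exp\big((u+w-c)/\epsilon\big)$. It then removes $\rho$ with the Legendre--Fenchel conjugate and maximizes the dual function over $u$ in closed form, obtaining $u^*(x_t)=-\epsilon\log Z(x_t)$ with $Z(x_t)=\mathbb{E}_{p_T}[\exp((w-c)/\epsilon)]$. Zero duality gap is taken for granted.

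\textbf{Your route.} You bring in $w$ through the variational representation of $\mathbb{D}_f$, which does the paper's multiplier-plus-conjugate step in one move. You keep the source marginal as a hard constraint and solve the inner problem by disintegration and the Gibbs (Donsker--Varadhan) variational principle. The two meet exactly: the paper's $u^*=-\epsilon\log Z$ is $-\epsilon$ times the log-normalizer of your conditional $\pi(x\,|\,x_t)\propto p_T(x)\exp((w-c)/\epsilon)$.

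\textbf{What each buys.} Your route certifies the inner value as a genuine minimum rather than a stationary point, and it needs one multiplier fewer. It also makes explicit the inf--sup interchange, which the paper never justifies. The entropy term supplies both $\pi_2\ll p_T$ (needed for the Fenchel representation of $\mathbb{D}_f$) and compactness. The paper's route is more mechanical and parallels its unregularized derivation, showing directly that $u$ becomes a soft-min $c$-transform.

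\textbf{Two precisions.} First, the constant you track is exactly $-\epsilon$, and it comes only from the $-1$ in the entropy term; the Gibbs principle itself gives exactly $-\epsilon\log Z(x_t)$. So the primal optimal value equals $-\mathcal{L}^{\text{E-SemiDual}}-\epsilon$, a term the paper silently drops.

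Second, for the swap, applying Sion on a truncated KL ball only reproduces weak duality, because truncation raises the inner infimum on the dual side. Cite instead the inf-compact version of the minimax theorem. It needs compactness of just one sublevel set $\{\pi:\,L(\pi,w_0)\le\alpha\}$ of the Lagrangian, with $\alpha$ above the dual value. A convenient choice is $w_0$ a constant at which $f^\star(-w_0)$ is finite; that sublevel set lies in a KL ball and is therefore compact.
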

We defer a more detailed discussion of the primal-dual relationship and the influence of the entropic regularizer to~\Cref{subsubsec:PrimalDualComparison} and~\Cref{subsubsec:EntropicRegularizationTerm}, respectively. Here, we focus on the theoretical analysis and practical implications of the entropic semi-dual formulation in~\Cref{eq:dualLearningObjectiveEntropy}. Notably, this formulation admits the following uniqueness guarantee:
\begin{proposition}\label{thm:uniquenessSolutionToEntropy}
The semi-dual formulation in~\Cref{eq:dualLearningObjectiveEntropy} admits a unique optimal solution under standard regularity 
conditions on $f^\star$ and suitable normalization of $w$.
\end{proposition}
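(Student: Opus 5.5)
The plan is to prove uniqueness by strict convexity of the objective in \Cref{eq:dualLearningObjectiveEntropy}, viewed as a functional of $w$ on a suitable function space such as $L^1(p_T)\cap\{w:\ \mathbb{E}_{p_T}[e^{w/\epsilon}]<\infty\}$, with functions identified $p_T$-a.e. Write the objective as
\begin{equation*}
J(w)=\mathbb{E}_{p_T(x)}[f^\star(-w(x))]+\epsilon\,\mathbb{E}_{p(x_t)}\big[\Lambda_{x_t}(w)\big],
\end{equation*}
where
\begin{equation*}
\Lambda_{x_t}(w)=\log \mathbb{E}_{p_T(x)}\big[\exp\big((w(x)-c(x_t,x))/\epsilon\big)\big],\qquad c(x_t,x)=\tfrac{1}{2\eta}\|x-x_t\|_2^2 .
\end{equation*}
The "standard regularity conditions on $f^\star$" I will use are: $f^\star$ is proper, convex, lower semicontinuous, coercive in the relevant direction, and \emph{strictly} convex on its effective domain. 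Strict convexity holds for instance when $f$ is differentiable on $(0,\infty)$, as for KL, where $f^\star(z)=e^{z-1}$.

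\textbf{Step 1: convexity of each term.} For fixed $x_t$, $\Lambda_{x_t}$ is a log-partition (log-sum-exp) functional, so it is convex in $w$ by Hölder's inequality. Equality in Hölder holds only when $w_1-w_2$ is constant $p_T$-a.e. (the Gaussian weight $e^{-c/\epsilon}>0$ has full support). Integrating over $p(x_t)$ preserves convexity. The term $w\mapsto \mathbb{E}_{p_T}[f^\star(-w)]$ is convex, and it is strictly convex along any direction $w_1\neq w_2$ on a set of positive $p_T$-measure, provided $f^\star$ is strictly convex.

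\textbf{Step 2: strict convexity and the normalization.} Suppose $w_1,w_2$ are both minimizers. Then $J(\tfrac{w_1+w_2}{2})\le \tfrac12(J(w_1)+J(w_2))$, and equality is forced in both terms. Equality in the $f^\star$ term gives $w_1=w_2$ $p_T$-a.e. when $f^\star$ is strictly convex. If $f^\star$ is only convex (e.g.\ affine pieces), I fall back on the log-sum-exp equality case, which yields $w_1=w_2+C$. The "suitable normalization of $w$" (e.g.\ $\mathbb{E}_{p_T}[w]=0$, or fixing $w$ at a point) then kills the constant. Because $p_T$ may not have full support, uniqueness is understood $p_T$-a.e.; off the support I extend $w$ canonically via the $c$-transform-type formula induced by the optimality condition.

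\textbf{Step 3: existence.} This is the step I expect to be the main obstacle, since uniqueness is only meaningful once a minimizer exists. The plan is to show $J$ is lower semicontinuous and coercive on the normalized subspace, then extract a minimizing sequence. Lower semicontinuity follows from Fatou's lemma for both terms, using convexity and lower semicontinuity of $f^\star$ and of $\exp$. Coercivity comes from the growth of $f^\star$ as $-w\to+\infty$, together with $\Lambda_{x_t}(w)\ge \mathbb{E}_{p_T}[w-c]/\epsilon$ by Jensen. Compactness is delicate in infinite dimensions, so I would work in the weak topology of an Orlicz space adapted to $e^{w/\epsilon}$. As a shortcut, I could take the minimizer to be the dual potential of the (strictly convex, hence uniquely solvable) entropic primal problem \Cref{eq:primalMinimumMovementSchemeEntropy} via Proposition~\ref{thm:entropyRegularizedResults}; its first-order condition gives $w$ explicitly in terms of the optimal plan, which again shows uniqueness up to the normalization.
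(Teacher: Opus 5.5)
Your Steps 1--2 are the paper's argument. The paper also restricts to an admissible class with finite expectations, the normalization $\mathbb{E}_{p_T}[w]=0$, and strictly convex $f^\star$. It uses the H\"older equality case to get strict concavity of the log-sum-exp term modulo constants, adds the strictly concave $f^\star$ term, and concludes there is at most one optimizer.

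Where you differ, you improve on the paper in two ways.
\begin{itemize}
\item \textbf{Normalization.} You note that strict convexity of $f^\star$ alone already forces $w_1=w_2$ $p_T$-a.e., so normalization is only a fallback. This matters. Since $\tfrac{\mathrm{d}}{\mathrm{d}s}J(w+s)=1-\mathbb{E}_{p_T}[(f^\star)'(-w)]$, the objective is not shift-invariant. The unconstrained optimizer must satisfy $\mathbb{E}_{p_T}[(f^\star)'(-w^*)]=1$, which is generally incompatible with $\mathbb{E}_{p_T}[w^*]=0$. For instance, with KL and $p_T$ a point mass at $b$, the optimizer is $w^*(b)=-1$. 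So the paper's hard constraint can exclude the actual optimizer.

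Keep this in mind in your own write-up. In the fallback, treat normalization as selecting a representative among minimizers that differ by constants, not as a constraint. In Step 3, prove coercivity on the whole space rather than on the normalized subspace. Your Jensen bound gives $J(w)\ge\mathbb{E}_{p_T}[f^\star(-w)+w]-\mathbb{E}_{p(x_t)p_T(x)}[c(x_t,x)]$, and $s\mapsto e^{-s-1}+s$ is coercive, so this works for KL.

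\item \textbf{Existence.} The paper gives no existence argument: it proves ``at most one'' maximizer and then asserts ``unique''. Your Step 3 therefore fills a real hole. The direct-method route is the one to pursue. The primal shortcut presupposes dual attainment, which the formal Lagrangian derivation behind Proposition~\ref{thm:entropyRegularizedResults} does not establish.
\end{itemize}

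One side claim is wrong: differentiability of $f$ on $(0,\infty)$ does not make $f^\star$ strictly convex. For the $\chi^2$ divergence in the paper's ablation, $f(y)=(y-1)^2$ is smooth on $(0,\infty)$, yet $f^\star\equiv-1$ on $(-\infty,-2)$. By Legendre duality (Rockafellar, Thm.~26.3), what you need is essential smoothness of $f$. That means you additionally need $f'(y)\to-\infty$ as $y\downarrow 0$. KL, with $f'(y)=\log y+1$, satisfies this.
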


Notably, as seen in \Cref{eq:dualLearningObjectiveEntropy}, the semi-dual objective depends solely on the potential $w$. Consequently, we can optimize a single model, which lowers the computational burden. We therefore parameterize $w$ by a neural network $w_\phi$ and carry out the optimization. 

Finally, conditioned on the resulting $w_\phi$, we subsequently optimize the transport map $\boldsymbol{T}_\theta(x)$ via the following objective based on~\Cref{eq:dualLearningObjective}:
\begin{equation}\label{eq:learningObjectiveTransportMapEntropy}
\mathop{\arg\min}_{\theta}~~\frac{1}{2\eta}\Vert x_t - \boldsymbol{T}_\theta(x_t) \Vert_2^2 - w_\phi(\boldsymbol{T}_\theta(x_t)).
\end{equation}
Notably, we denote our approach as ``E-SUOT'', as the derivation of $\boldsymbol{T}_\theta$ is grounded in the Entropy-regularized Semi-dual Unbalanced Optimal Transport framework.


\subsection{Overall Workflow for E-SUOT}
\begin{algorithm}[!h]
\caption{Overall Workflow for E-SUOT}
\label{algo:workflowImputePartOverall}
\textbf{Input:} Source domain samples: $\{(x_0^{(i)},y_0^{(i)})\}_{i=1}^{\mathrm{N}}$, target domain samples: $\{(x_T^{(i)},y_T^{(i)})\}_{i=1}^{\mathrm{N}}$, entropy regularization strength: $\epsilon$, step size: $\eta$, number of intermediate domains $T-1$, neural network batch size $\mathcal{B}$, and neural network training epochs: $\mathcal{E}$.
\\
\textbf{Output:} The set of optimal
transportation map: $\mathcal{T}=\{\boldsymbol{T}_{\theta,t}\}_{t=0}^{T-1}$. 
\begin{algorithmic}[1]
\STATE $\mathcal{T}\leftarrow \varnothing.$
\FOR{$t=0$ to $T-1$}
\FOR{$e = 1$ to $\mathcal{E}$}
\STATE Sample a batch $\{x_t^{(i)}\}_{i=1}^{\mathcal{B}}\sim \{(x_t^{(i)},y_t^{(i)})\}_{i=1}^{\mathrm{N}}$ and $\{x_T^{(i)}\}_{i=1}^{\mathcal{B}}\sim \{(x_T^{(i)},y_T^{(i)})\}_{i=1}^{\mathrm{N}}$.
\STATE Update $w_{\phi,t}$ by: $\phi \leftarrow   \mathop{\arg\min}_{\phi }\frac{\epsilon }{\mathcal{B}}\!\times\!\sum_{j=1}^{\mathcal{B}} \log\frac{1}{\mathcal{B}}$\\ \;\;\;$\sum_{i=1}^{\mathcal{B}}[\exp(\frac{w_{\phi,t}(x^{(j)}_T)-\frac{1}{2\eta}\Vert x_t^{(j)} - x^{(i)}_T \Vert_2^2}{\epsilon})]
+ \frac{1}{\mathcal{B}}$\\ \;\;\;$\sum_{j=1}^{\mathcal{B}}f^\star(-w_{\phi,t}(x^{(j)}_T)).
$
\ENDFOR
\FOR{$e = 1$ to $\mathcal{E}$}
\STATE Sample a batch $\{x_t^{(i)}\}_{i=1}^{\mathcal{B}}\sim \{(x_t^{(i)},y_t^{(i)})\}_{i=1}^{\mathrm{N}}$.
\STATE Update $\boldsymbol{T}_{\theta,t}$ by: $\theta \leftarrow \mathop{\arg\min}_{\theta} \frac{1}{\mathcal{B}}\sum_{i=1}^{\mathcal{B}}\frac{1}{2\eta}$\\ \;\;\;$\Vert x_t^{(i)} - \boldsymbol{T}_{\theta,t}(x_t^{(i)}) \Vert_2^2 - w_{\phi,t}(\boldsymbol{T}_{\theta,t}(x_t^{(i)}))$.
\ENDFOR
\STATE $x_{t+1}^{(i)}\leftarrow \boldsymbol{T}_{\theta,t}(x_t^{(i)}),\forall i\in\{1,\ldots,\mathrm{N}\}.$
\STATE $\mathcal{T}\leftarrow \mathcal{T} \cup\{\boldsymbol{T}_{\theta,t}\}$

\ENDFOR
\end{algorithmic}
\end{algorithm}
\begin{figure*}[htbp]
    \centering
    \includegraphics[width=1.0\linewidth]{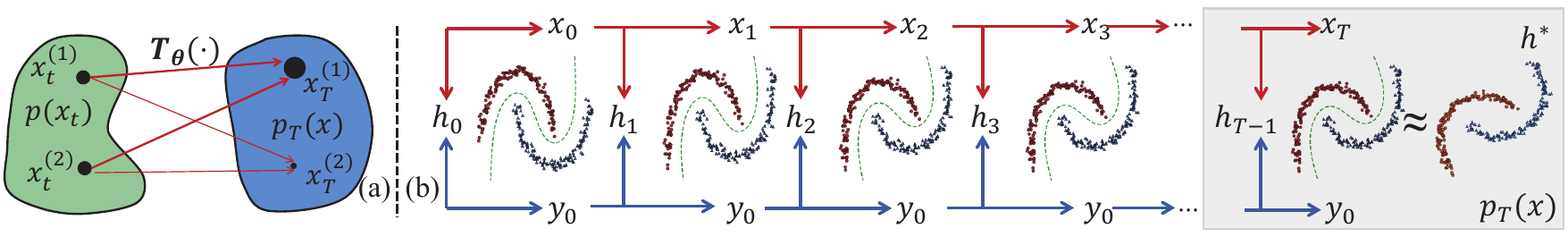}
    \caption{The illustration of the proposed E-SUOT: (a) the unbalanced OT formulation used to solve the transport map $\boldsymbol{T}_\theta(\cdot)$ at time $t$, where thicker arrows and larger points indicate higher mass flows, and (b) the evolution process from the source to the target domain. This figure is conceptually inspired by previous works on GDA and OT tasks~\citep{zhuang2024gradual,he2024gradual,wangunbiased}.}\label{fig:illustration}
\end{figure*}
Although~\Cref{subsec:dualProblemInterMediateDomain,subsec:RobustTrainingProcedureForDualFormTransportResult} have presented the E-SUOT framework for intermediate domain generation, they do not provide a unified view of the overall workflow for generating intermediate domains. To address this, we summarize the complete procedure in Algorithm~\ref{algo:workflowImputePartOverall} (Due to space limitations, other detailed information are summarized in Appendix~\ref{appendix-sec:experimentProtocols}) and the corresponding illustration is given in~\Cref{fig:illustration}. As shown in the algorithm, the construction of $w_\phi$ and $\boldsymbol{T}_\theta$ are performed as separate steps, corresponding to~\Cref{fig:illustration}(a), and are illustrated in \textsl{Lines 3--6} and \textsl{Lines 7--10}, respectively. By iteratively executing the procedure described in \textsl{Lines 3--10}, we obtain a sequence of transport maps, $\mathcal{T} = \{\boldsymbol{T}_{\theta,t}\}_{t=0}^{T-1}$, which progressively transport samples from the source domain to the target domain, as we demonstrate in ~\Cref{fig:illustration}(b).

Once the transport map sequence $\mathcal{T} = \{\boldsymbol{T}_{\theta,t}\}_{t=0}^{T-1}$ has been obtained, we proceed to train the classifier $h$ in a stage-wise manner along the transport path. Specifically, at each intermediate step $t$, we first map samples $x_t$ from the current domain to the next intermediate domain $x_{t+1}$ using the corresponding transport map $\boldsymbol{T}_{\theta,t}$. We then update or train the model $h_t$ using the mapped data $x_{t+1}$ as input. By iteratively applying this procedure for $t=0,\ldots,T-1$, the model is progressively adapted along the sequence of intermediate domains, ultimately bridging the source and target domains.

\subsection{Theoretical Analysis}
Notably, our derivation sidesteps the explicit estimation of the PDF of the target domain by leveraging the semi-dual formulation. This leads to two important questions: (1) Can the proposed E-SUOT framework transport the source domain sufficiently close to the target domain? (2) How does the model perform on the target domain after transport?

To address the first question, we present the following proposition, which qualitatively characterizes the discrepancy between $\rho(x)$ and $p_T(x)$:
\begin{proposition}\label{thm:approximationAccuracy}
The optimal solution $\rho^*(x)$ to the problem defined in~\Cref{eq:primalMinimumMovementSchemeEntropy} satisfies the following bound when $\mathcal{W}_2(p(x_t),p_T(x)) \le 2\eta$:
\begin{equation}\label{eq:inequalityUpperBoundApproximation}
     \mathbb{D}_f[\rho^*(x), p_T(x)] \le \mathcal{W}_{2}(p(x_t), p_T(x)). 
\end{equation}
\end{proposition}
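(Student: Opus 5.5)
The plan is to bound the optimal value of the entropy-regularized primal problem in \Cref{eq:primalMinimumMovementSchemeEntropy} by plugging in a well-chosen competitor, and then to discard the nonnegative terms. Write $J(\rho)$ for the objective in \Cref{eq:primalMinimumMovementSchemeEntropy}. Here $\pi$ is the coupling between $p(x_t)$ and $\rho$ that attains (or nearly attains) the transport part.

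The natural competitor is $\rho=p_T(x)$ itself, for which $\mathbb{D}_f[p_T,p_T]=0$. Optimality of $\rho^*$ gives
\begin{equation*}
\mathbb{D}_f[\rho^*,p_T]\le J(\rho^*)-\tfrac{1}{2\eta}\mathcal{W}_2^2(\rho^*,p(x_t))-\epsilon\,\mathrm{Ent}(\pi^*)\le J(p_T)-\tfrac{1}{2\eta}\mathcal{W}_2^2(\rho^*,p(x_t))-\epsilon\,\mathrm{Ent}(\pi^*),
\end{equation*}
and $J(p_T)=\frac{1}{2\eta}\mathcal{W}_2^2(p(x_t),p_T)+\epsilon\,\mathrm{Ent}(\pi_T)$.

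The entropic terms have to be handled so that they do not spoil the inequality. The term $\epsilon\iint\pi[\log(\pi/\kappa)-1]$ equals $\epsilon(\mathrm{KL}(\pi\|\kappa)-1)$ when $\pi$ is a probability measure. For the competitor $\rho=p_T$, I will choose the coupling $\pi=\kappa=p(x_t)p_T(x)$ itself, whose entropic term is exactly $-\epsilon$. For $\rho^*$, the term is at least $-\epsilon$, since $\mathrm{KL}\ge0$. The $-\epsilon$ contributions therefore cancel in the comparison, up to the usual caveat that the transport cost must then be evaluated under the independent coupling rather than the optimal one. To avoid this mismatch, I will interpret the $\mathcal{W}_2^2$ term as the cost of the same plan $\pi$ (the convention implicit in the derivation of \Cref{eq:dualLearningObjectiveEntropy}), or else simply bound by the primal-optimal coupling and absorb the KL term. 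The main obstacle is precisely this bookkeeping between the optimal coupling and the entropic reference. If it fails, the fallback is to state the bound for the regime where the $\epsilon$-terms cancel, as above.

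This reduces the problem to $\mathbb{D}_f[\rho^*,p_T]\le\frac{1}{2\eta}\mathcal{W}_2^2(p(x_t),p_T)$, after dropping the nonnegative $\frac{1}{2\eta}\mathcal{W}_2^2(\rho^*,p(x_t))$. The final step uses the hypothesis $\mathcal{W}_2(p(x_t),p_T)\le2\eta$:
\begin{equation*}
\tfrac{1}{2\eta}\mathcal{W}_2^2(p(x_t),p_T)=\mathcal{W}_2(p(x_t),p_T)\cdot\tfrac{\mathcal{W}_2(p(x_t),p_T)}{2\eta}\le\mathcal{W}_2(p(x_t),p_T),
\end{equation*}
which yields \Cref{eq:inequalityUpperBoundApproximation}. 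Existence of $\rho^*$ and finiteness of $J(p_T)$ follow from $p(x_t),p_T\in\mathcal{P}_2$, so every term is finite.
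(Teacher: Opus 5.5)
Your comparison argument is the paper's proof. You lower-bound $J(\rho^*,\pi^*)\ge \mathbb{D}_f[\rho^*,p_T]-\epsilon$ using $\mathcal{W}_2^2\ge 0$ and $\mathrm{KL}(\pi^*\|\kappa)\ge 0$. You upper-bound $J(\rho^*,\pi^*)\le J(p_T,\kappa)=\frac{1}{2\eta}\mathcal{W}_2^2(p(x_t),p_T)-\epsilon$, cancel the $\epsilon$'s, and finish with $\mathcal{W}_2\le 2\eta$.

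The gap is the step you yourself flag as the main obstacle: you leave it unresolved, and the resolution you propose goes the wrong way. The competitor $(p_T,\kappa)$ is admissible with exactly that value only under the reading the paper's proof uses. There, \Cref{eq:primalMinimumMovementSchemeEntropy} is a \emph{joint} minimization over $(\rho,\pi)$. The coupling $\pi\in\Pi(p(x_t),\rho)$ is a free variable that appears only in the entropic term, and $\mathcal{W}_2^2(\rho,p(x_t))$ is the exact Wasserstein distance, a function of $\rho$ alone. Under that reading, $J(\rho^*,\pi^*)\le J(p_T,\kappa)$ is just optimality, and there is no mismatch. Your setup instead ties $\pi$ to $\rho$ (``the coupling that attains the transport part''). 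With that definition you are not entitled to evaluate $J(p_T)$ at $\pi=\kappa$.

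Neither of your remedies works.

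\emph{Same-plan cost.} Suppose the transport term is the cost $\frac{1}{2\eta}\iint\|x_t-x\|_2^2\,\pi$ of the same plan, as in the paper's derivation of \Cref{eq:dualLearningObjectiveEntropy}. Then the competitor $\kappa$ contributes $\frac{1}{2\eta}\mathbb{E}_{p(x_t)\otimes p_T}\|x_t-x\|_2^2$, which in general exceeds $\frac{1}{2\eta}\mathcal{W}_2^2(p(x_t),p_T)$. The hypothesis $\mathcal{W}_2\le 2\eta$ then no longer closes the argument, and in fact the statement is false under this reading. Take $p(x_t)=p_T=\mathcal{N}(0,1)$ and $f(u)=u\log u$, and consider a Gaussian coupling with second-marginal variance $s^2$ and correlation $r<1$ (the entropy term forces $r<1$). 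Both KL terms are stationary in $s$ at $s=1$, while the cost $\frac{1}{2\eta}(1+s^2-2rs)$ has $s$-derivative $(1-r)/\eta>0$ there. The best plan with $\rho=p_T$ (the entropic self-coupling of $\mathcal{N}(0,1)$) is itself such a Gaussian, so shrinking $s$ strictly lowers the objective. Hence $\rho^*\neq p_T$ and $\mathbb{D}_f[\rho^*,p_T]>0=\mathcal{W}_2(p(x_t),p_T)$.

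\emph{$\mathcal{W}_2$-optimal plan for the competitor.} Here there is nothing to absorb. Between absolutely continuous marginals that plan is concentrated on the graph of a map, so it is singular with respect to $\kappa$ and $\mathrm{KL}(\pi_{\mathrm{OT}}\|\kappa)=+\infty$.

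So commit explicitly to the decoupled reading, which is exactly how the paper's $J(\rho,\pi)$ is set up; with it, the rest of your proposal goes through verbatim.
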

In view of Proposition~\ref{thm:approximationAccuracy}, we note that as $t$ increases, the transported density $\rho(x)$ progressively approaches the target distribution $p_T(x)$. A detailed analysis of this statement is provided in~\Cref{subsec:derivationApproximationAccuracy}. In addition, related discussions on the selection of $\eta$ for the case where $\mathbb{D}_f$ is the KL divergence are given in~\Cref{subsec:StepSizeSelectionKLDivergence}.

Finally, we present the following proposition to demonstrate the model's performance on the target domain: 
\begin{proposition}\label{thm:generalizationError}
Under mild assumptions, the E-SUOT-based GDA ensures that the target domain generalization error is upper-bounded by the following inequality:
\begin{equation}
\varepsilon_{p_T}(h_T) \le \varepsilon_{p_0}(h_0) +\varepsilon_{p_0}(h_T^*) + \iota  \zeta\mathcal{C} + \mathcal{S}_{\text{stat}},
\end{equation}
where $\iota$ is the Lipschitz constant of the loss function, $\zeta$ is the Lipschitz constant bound for hypotheses in $\mathcal{H}$, $\mathcal{C}$ aggregates the cumulative domain transportation and label continuity costs along the gradual domain adaptation path, $\mathcal{S}_{\text{stat}}$ is the statistical error term, and $h_T^*\in\mathcal{H}$ is a reference hypothesis used to characterize the source-domain approximation gap.
\end{proposition}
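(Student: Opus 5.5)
Our plan follows the standard gradual-self-training analysis of \citet{he2024gradual} (in the spirit of Kumar et al.\ and Wang et al.'s $\mathcal{W}_p$-based bounds). We specialise the transport terms to the maps $\{\boldsymbol{T}_{\theta,t}\}_{t=0}^{T-1}$ produced by Algorithm~\ref{algo:workflowImputePartOverall}.

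\textbf{Assumptions.} The ``mild assumptions'' we fix are:
\begin{enumerate}
\item[(i)] the loss $\mathcal{L}$ is $\iota$-Lipschitz in its first argument, and every $h\in\mathcal{H}$ is $\zeta$-Lipschitz;
\item[(ii)] the labeling functions $q_t$ vary continuously along the path, with $\mathbb{E}_{p_{t+1}}\|q_t-q_{t+1}\|\le \lambda_t$;
\item[(iii)] $\mathcal{H}$ has finite complexity (Rademacher complexity $\mathfrak{R}_{\mathrm{N}}(\mathcal{H})$), so that each empirical self-training step is accurate up to $O(\mathfrak{R}_{\mathrm{N}}(\mathcal{H})+\sqrt{\log(T/\delta)/\mathrm{N}})$.
\end{enumerate}

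\textbf{One-step comparison.} The first step is a lemma. Let $\pi_t$ be the coupling induced by $x_{t+1}=\boldsymbol{T}_{\theta,t}(x_t)$. Then for any $h$,
\[
|\varepsilon_{p_{t+1}}(h)-\varepsilon_{p_t}(h)|\le \iota\zeta\,\mathbb{E}_{\pi_t}\|x_{t+1}-x_t\|_2+\iota\lambda_t\le \iota\zeta\,\mathcal{W}_2(p_t,p_{t+1})+\iota\lambda_t .
\]
This follows by writing both errors as expectations under $\pi_t$, applying the triangle inequality inside $\mathcal{L}$, and using Jensen's inequality to pass from $\mathcal{W}_1$ to $\mathcal{W}_2$.

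\textbf{Telescoping.} Next we telescope over the self-training iterates $h_{t+1}\in\arg\min$ of the empirical pseudo-label loss on $p_{t+1}$. At each step the error increment is bounded by the one-step comparison plus the statistical deviation from (iii). Summing over $t=0,\dots,T-1$ gives $\varepsilon_{p_T}(h_T)\le \varepsilon_{p_0}(h_0)+\iota\zeta\,\mathcal{C}+\mathcal{S}_{\text{stat}}$, with
\[
\mathcal{C}=\sum_{t}\Bigl(\mathcal{W}_2(p_t,p_{t+1})+\lambda_t/\zeta\Bigr)
\]
and $\mathcal{S}_{\text{stat}}$ collecting the summed generalization gaps.

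\textbf{Approximation gap.} The term $\varepsilon_{p_0}(h_T^*)$ enters where we compare the pseudo-label minimiser with the best-in-class hypothesis. We use the triangle inequality $\mathcal{L}(h,q)\le \mathcal{L}(h,h^*)+\mathcal{L}(h^*,q)$ for a reference $h_T^*$, transported back to the source. This is done by the same Lipschitz/coupling argument, and the resulting transport costs are absorbed into $\mathcal{C}$.

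\textbf{E-SUOT-specific control of $\mathcal{C}$.} To make $\mathcal{C}$ specific to E-SUOT, we bound each $\mathcal{W}_2(p_t,p_{t+1})$ by optimality of $\rho^*$ in \Cref{eq:primalMinimumMovementSchemeEntropy}. Comparing the objective at $\rho^*$ with its value at $\rho=p(x_t)$ gives
\[
\tfrac{1}{2\eta}\mathcal{W}_2^2(\rho^*,p_t)\le \mathbb{D}_f[p_t,p_T]+O(\epsilon),
\]
so every step is finite and controlled. Proposition~\ref{thm:approximationAccuracy} then controls the remaining gap to $p_T$ at the final step.

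\textbf{Main obstacle.} We expect the hardest part to be the entropic coupling. The learned $\pi$ is not induced by a deterministic map, while the classifier is trained on $\boldsymbol{T}_{\theta,t}(x_t)$. We would handle this by bounding the transport term using the deterministic map's actual displacement $\mathbb{E}\|\boldsymbol{T}_{\theta,t}(x_t)-x_t\|$, and folding the map-fitting error of \Cref{eq:learningObjectiveTransportMapEntropy} into $\mathcal{S}_{\text{stat}}$.
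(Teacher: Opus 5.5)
Your architecture matches the paper's. You telescope $\varepsilon_{p_T}(h_T)-\varepsilon_{p_0}(h_0)$ over the steps, charge the hypothesis update to a statistical term $s_t$, bound the domain-shift part by $\iota\zeta$ times a transport cost plus $\iota$ times the label drift, and sum into $\mathcal{C}$ and $\mathcal{S}_{\text{stat}}$.

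The one-step lemma, however, is not justified as written. For the coupling $\pi_t$ induced by $\boldsymbol{T}_{\theta,t}$, the cost $\mathbb{E}_{\pi_t}\|x_{t+1}-x_t\|_2$ is an \emph{upper} bound on $\mathcal{W}_1(p_t,p_{t+1})$, so you cannot pass through $\mathcal{W}_1$ to reach $\mathcal{W}_2$. What Jensen actually gives is $\mathbb{E}_{\pi_t}\|x_{t+1}-x_t\|_2\le(\mathbb{E}_{\pi_t}\|x_{t+1}-x_t\|_2^2)^{1/2}$. The right side equals $\mathcal{W}_2(p_t,p_{t+1})$ only when $\pi_t$ is $\mathcal{W}_2$-optimal. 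That holds for an exact minimiser of $x\mapsto\frac{1}{2\eta}\|x_t-x\|_2^2-w(x)$, whose graph is $c$-cyclically monotone. It does not hold for the fitted network you actually couple with: a map that reflects a symmetric $p_t$ onto itself has $\mathcal{W}_2=0$ but positive displacement.

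The paper avoids couplings entirely. It treats $x\mapsto\mathcal{L}(h(x),q_{t-1}(x))$ as $\iota\zeta$-Lipschitz and applies Kantorovich--Rubinstein duality to get $\iota\zeta\,\mathcal{W}_1(p_{t-1},p_t)$ directly; you may relax this to $\mathcal{W}_2$ if you like. Alternatively, define $\mathcal{C}$ with the actual displacement $\mathbb{E}\|\boldsymbol{T}_{\theta,t}(x_t)-x_t\|_2$, as your last paragraph suggests.

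There is a further issue in the label term. Under $\pi_t$ it is really $\iota\,\mathbb{E}_{\pi_t}|q_{t+1}(x_{t+1})-q_t(x_t)|$. Turning this into $\iota\lambda_t$ needs Lipschitzness of $\mathcal{L}$ in its second argument, which your (i) omits and the paper also uses tacitly. It also costs a second $\iota\zeta\,\mathbb{E}_{\pi_t}\|x_{t+1}-x_t\|_2$, using $q_t\in\mathcal{H}$.

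Your approximation-gap step is unnecessary and, as proposed, unsupported. Once telescoping gives $\varepsilon_{p_T}(h_T)\le\varepsilon_{p_0}(h_0)+\iota\zeta\,\mathcal{C}+\mathcal{S}_{\text{stat}}$, the statement follows immediately: $\mathcal{L}\ge 0$ implies $\varepsilon_{p_0}(h_T^*)\ge 0$, and adding this term is all the paper does. Your route through $\mathcal{L}(h,q)\le\mathcal{L}(h,h^*)+\mathcal{L}(h^*,q)$ needs a triangle inequality for the loss. That is not among your assumptions and fails for cross-entropy, so drop it.

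Likewise, your E-SUOT-specific control of $\mathcal{C}$ is not needed, since the paper simply assumes $\mathcal{C}<\infty$. As written, it also bounds $\mathcal{W}_2(\rho^*,p_t)$ for the exact entropic JKO minimiser, not for $(\boldsymbol{T}_{\theta,t})_{\#}p_t$, which is the distribution that actually enters $\mathcal{C}$.
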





%

\section{Main Experimental Results}

\subsection{Experimental Setup}
We conduct case studies on four datasets. Specifically, for the GDA task, we use the Portraits dataset~\citep{kumar2020understanding} as well as MNIST 45$^\circ$ and MNIST 60$^\circ$~\citep{lecun1998mnist}. Following prior work~\citep{zhuang2024gradual,sagawa2025gradual}, to ensure a fair comparison, we use the UMAP embeddings~\citep{mcinnes2018umap} provided by~\citet{zhuang2024gradual}. In addition, to demonstrate the scalability of the proposed E-SUOT, we conduct experiments on the Office-Home dataset~\citep{venkateswara2017deep} for UDA task. We use the classification accuracy (denoted as ``Accuracy'') as the evaluation metric, defined as the proportion of correctly predicted target samples:
\begin{equation*}
   \text{Accuracy} = [\frac{1}{\mathrm{N}_{\text{tgt}}} \sum_{i=1}^{\mathrm{N}_{\text{tgt}}} \mathbb{I}(\hat{y}_i = y_i) ]\times 100\%, 
\end{equation*}
where $\mathrm{N}_{\text{tgt}}$ is the number of target test samples, $\hat{y}_i$ denotes the predicted label of the $i$‑th sample, $y_i$ is the corresponding ground‑truth label, and $\mathbb{I}(\cdot)$ is the indicator function that equals $1$ when the condition holds and $0$ otherwise. A larger accuracy value corresponds to better adaptation performance. Other details about these datasets are provided in~\Cref{subsec:datasetDescription}. Due to space limitations, additional experimental results are provided in~\Cref{sec:additionalExperResults}.

\subsection{Baseline Comparison Results}

\textbf{GDA Task.} We first compare our proposed approach with several existing GDA-based methods, including Self-training~\citep{weitheoretical}, GST (with 4 intermediate domains)~\citep{kumar2020understanding}, GOAT~\citep{he2024gradual}, CNF~\citep{sagawa2025gradual}, MMDSW~\citep{bonetflowing}, SMMD~\citep{hertrich2024generative}, STDW~\citep{wang2026SelfDynamicWeight}, AST~\citep{shi2023adversarial}, and GGF~\citep{zhuang2024gradual}. The experimental results are listed in~\Cref{tab:comparisonUDAApproaches}.

\begin{table}[htbp]
\centering
\caption{GDA accuracy (\%) comparison results.}\label{tab:comparisonUDAApproaches}
  \small\setlength{\tabcolsep}{2pt}\renewcommand\arraystretch{0.95}
  \resizebox{0.93\linewidth}{!}{
    \begin{threeparttable}
   \small\setlength{\tabcolsep}{2pt}\renewcommand\arraystretch{0.95}
   \centering
\begin{tabular}{l|ll|ll|ll}
\toprule
\multicolumn{1}{c|}{\multirow{2}{*}{Method}} & \multicolumn{2}{c|}{Portraits} & \multicolumn{2}{c|}{MNIST 45$^\circ$} & \multicolumn{2}{c}{MNIST 60$^\circ$} \\ \cmidrule{2-7} 
\multicolumn{1}{c|}{} & Acc. (\%) & $\Delta$ & Acc. (\%) & $\Delta$ & Acc. (\%) & $\Delta$ \\ \midrule
Source & 71.2 & - & 58.4 & - & 36.8 & - \\
Self Train & 77.4 & $\uparrow$8.8\% & 58.7 & $\uparrow$0.5\% & 39.9 & $\uparrow$8.6\% \\
GST (4) & 76.1 & $\uparrow$6.9\% & 59.2 & $\uparrow$1.3\% & 39.9 & $\uparrow$8.5\% \\
GOAT & 74.9 & $\uparrow$5.3\% & \uline{65.0} & $\uparrow$11.3\% & 37.2 & $\uparrow$1.1\% \\
CNF & 80.0 & $\uparrow$12.4\% & 57.6 & $\downarrow$1.4\% & 41.8 & $\uparrow$13.5\% \\
MMDSW & 80.2 & $\uparrow$12.6\% & 57.9 & $\downarrow$0.9\% & 42.2 & $\uparrow$14.7\% \\
SMMD & 79.8 & $\uparrow$12.0\% & 57.9 & $\downarrow$0.8\% & 42.2 & $\uparrow$14.7\% \\
STDW & \uline{84.3} & $\uparrow$18.4\% & 60.3 & $\uparrow$3.2\% & \uline{43.9} & $\uparrow$19.2\% \\
AST & 84.2 & $\uparrow$18.3\% & 58.3 & $\downarrow$0.2\% & 41.3 & $\uparrow$12.2\% \\
GGF & 83.4 & $\uparrow$17.2\% & 57.7 & $\downarrow$1.2\% & 40.8 & $\uparrow$11.0\% \\
E-SUOT & \textbf{86.4} & $\uparrow$21.5\% & \textbf{72.1} & $\uparrow$23.4\% & \textbf{51.0} & $\uparrow$38.6\% \\ \bottomrule
\end{tabular}
 \begin{tablenotes}
        \item{\textit{Kindly Note}: \textbf{Bolded} results are the best results. \uline{Underlined} results are the second best results. ${\Delta}$ denotes performance change percentage of the classifier trained on the source domain.  
        }
        \end{tablenotes}
\end{threeparttable}
}
\end{table}

\begin{table*}[!h]
\centering
\caption{UDA accuracy (\%) comparison on the Office-Home dataset.}\label{tab:udaTaskResults}
    \begin{threeparttable}
     \small\setlength{\tabcolsep}{1.5pt}\renewcommand\arraystretch{0.95}
\begin{tabular}{l|lllllllllllll} \toprule Method &  Ar$\rightarrow$Cl &  Ar$\rightarrow$Pr &  Ar$\rightarrow$Rw &  Cl$\rightarrow$Ar &  Cl$\rightarrow$Pr &  Cl$\rightarrow$Rw &  Pr$\rightarrow$Ar &  Pr$\rightarrow$Cl &  Pr$\rightarrow$Rw &  Rw$\rightarrow$Ar &  Rw$\rightarrow$Cl &  Rw$\rightarrow$Pr &  Avg.  \\ \midrule DANN  & 45.6 & 59.3 & 70.1 & 47.0 & 58.5 & 60.9 & 46.1 & 43.7 & 68.5 & 63.2 & 51.8 & 76.8 & 57.6 \\ \midrule MSTN  & 49.8 & 70.3 & 76.3 & 60.4 & 68.5 & 69.6 & 61.4 & 48.9 & 75.7 & 70.9 & 55.0 & 81.1 & 65.7 \\ \midrule GVB-GD  & 57.0 & 74.7 & 79.8 & 64.6 & 74.1 & 74.6 & 65.2 & 55.1 & 81.0 & 74.6 & 59.7 & 84.3 & 70.4 \\ \midrule RSDA  & 53.2 & 77.7 & 81.3 & 66.4 & 74.0 & 76.5 & \uline{67.9} & 53.0 & 82.0 & 75.8 & 57.8 & 85.4 & 70.9 \\ \midrule LAMDA  & 57.2 & 78.4 & \uline{82.6} & 66.1 & \textbf{80.2} & \textbf{81.2} & 65.6 & 55.1 & 82.8 & 71.6 & 59.2 & 83.9 & 72.0 \\ \midrule SENTRY  & \textbf{61.8} & 77.4 & 80.1 & 66.3 & 71.6 & 74.7 & 66.8 & \textbf{63.0} & 80.9 & 74.0 & \textbf{66.3} & 84.1 & 72.3 \\ \midrule FixBi  & 58.1 & 77.3 & 80.4 & 67.7 & 79.5 & \uline{78.1} & 65.8 & 57.9 & 81.7 & \uline{76.4} & 62.9 & \textbf{86.7} & 72.7 \\ \midrule CST  & 59.0 & \textbf{79.6} & \textbf{83.4} & \textbf{68.4} & 77.1 & 76.7 & \textbf{68.9} & 56.4 & \textbf{83.0} & 75.3 & 62.2 & 85.1 & 72.9 \\ \midrule CoVi  & 58.5 & 78.1 & 80.0 & \uline{68.1} & \uline{80.0} & 77.0 & 66.4 & 60.2 & 82.1 & \textbf{76.6} & \uline{63.6} & \uline{86.5} & \uline{73.1} \\ \midrule GGF  & 59.4 & 75.6 & 81.7 & 67.6 & 77.6 & 78.0 & 67.4 & 61.0 & 82.7 & 75.9 & 62.4 & 85.4 & 72.9 \\ \midrule MMDSW  & 59.2 & 75.6 & 81.7 & 67.6 & 77.4 & 77.9 & 67.4 & 61.0 & 82.6 & 75.9 & 62.4 & 85.4 & 72.8 \\ \midrule SMMD  & 58.8 & 74.8 & 81.7 & 67.6 & 77.4 & 77.9 & 67.4 & 61.0 & 82.6 & 75.9 & 62.4 & 85.4 & 72.7 \\ \midrule STDW  & 59.5 & 75.6 & 81.7 & 67.6 & 77.6 & 78.0 & 67.4 & 61.0 & 82.7 & 75.9 & 62.5 & 85.4 & 72.9 \\ \midrule AST  & 58.9 & 75.5 & 81.7 & 67.6 & 77.4 & 77.9 & 67.4 & 61.0 & 82.6 & 75.9 & 62.4 & 85.4 & 72.8 \\ \midrule E-SUOT  & \uline{61.6} & \uline{79.3} & 81.8 & 67.6 & 77.7 & 78.1 & 67.4 & \uline{61.2} & \uline{82.9} & 76.3 & 62.5 & 85.2 & \textbf{73.5}\\ \midrule Win Counts & 13 & 13 & 12 & 6 & 11 & 12 & 7 & 13 & 13 & 12 & 10 & 6 & 14 \\ \bottomrule \end{tabular} 
    \begin{tablenotes}
        \item{\textit{Kindly Note}:  \textbf{Bolded} and \uline{underlined} results are the first and second best results, respectively. ``Avg.'' is short for ``Average.''
        }
        \vspace{-0.4cm}
        \end{tablenotes}
\end{threeparttable}
\end{table*}




As shown in~\Cref{tab:comparisonUDAApproaches}, our proposed E-SUOT consistently achieves the best performance across all evaluated datasets. Specifically, E-SUOT improves over the source-only baseline by 21.5\%, 23.4\%, and 38.6\% on Portraits, MNIST 45$^\circ$, and MNIST 60$^\circ$, respectively, demonstrating its effectiveness in constructing reliable intermediate domains for gradual domain adaptation. Compared with recent GDA approaches, E-SUOT also shows clear advantages. Although STDW achieves the second-best performance on Portraits and MNIST 60$^\circ$, and GOAT obtains the second-best result on MNIST 45$^\circ$, their performance gains are not consistent across all datasets. Similarly, several competitive methods, including CNF, AST, and GGF, perform well on certain tasks but occasionally underperform the source-only baseline or provide only marginal improvements. In particular, the occasional degradation of flow-based methods such as CNF and GGF may be attributed to the difficulty of accurately estimating target domain densities, which is consistent with our motivation analysis in~\Cref{subsec:motivationAnalysisResults}. 

\textbf{UDA Task.} We further evaluate E-SUOT on the Office-Home dataset under the UDA setting to demonstrate its scalability. We compare E-SUOT with a broad range of representative baselines, including UDA methods such as DANN~\citep{ganin2015unsupervised}, MSTN~\citep{xie2018learning}, GVB-GD~\citep{cui2020gradually}, RSDA~\citep{Gu_2020_CVPR,9733209}, LAMBDA~\citep{le2021lamda}, SENTRY~\citep{prabhu2021sentry}, FixBi~\citep{na2021fixbi}, CST~\citep{liu2021cycle}, and CoVi~\citep{na2022contrastive}, as well as GDA methods including MMDSW~\citep{bonetflowing}, SMMD~\citep{hertrich2024generative}, STDW~\citep{wang2026SelfDynamicWeight}, AST~\citep{shi2023adversarial}, and GGF~\citep{zhuang2024gradual}. The evaluation is conducted on the Office-Home dataset~\citep{venkateswara2017deep}. In our UDA experiments, we use CoVi as the embedding feature extraction backbone, with additional experimental details provided in~\Cref{subsec:experimentalSettings}. The corresponding experimental results are summarized in~\Cref{tab:udaTaskResults}.

From \Cref{tab:udaTaskResults}, we observe that E-SUOT achieves the best average accuracy on the Office-Home dataset. Although it is not the top method on every transfer direction, it remains consistently competitive and obtains several second-best results, e.g., on Ar$\to$Cl, Ar$\to$Pr, Pr$\to$Cl, and Pr$\to$Rw, with only small gaps to the best-performing approaches. This observation suggests that the strength of E-SUOT lies in its stable cross-task performance rather than in dominating a few individual transfers.

\subsection{Ablation Studies}
%
\subsubsection{Ablation on $\boldsymbol{T}_{\theta}$ Training.} 
To ensure fair comparisons under the same feature representations (UMAP embeddings from \citet{zhuang2024gradual}), we conduct the ablation study in this part on the three datasets in the GDA task only. In this part, we perform ablation studies from two perspectives: the training strategy for $\boldsymbol{T}_\theta$ and the choice of $f$-divergence. For the \emph{training strategy}, we 1). examine the effect of removing the entropy regularization term, reducing the method to the adversarial training strategy in~\Cref{eq:dualLearningObjective} to support Proposition~\ref{thm:uniquenessSolutionToEntropy}, and 2). evaluate a barycentric projection approach analogous to flow matching~\citep{lipmanflow}, where the transport plan is first estimated and then used to project source samples toward the target, subsequently being refined during training. For the objective \emph{functional}, we study different parameterizations of $f^\star$, such as employing non-decreasing convex functions like 1) \texttt{SftPls} (softplus function), and also compare the 2) $\chi^2$ divergence and the 3) identity function. More detailed information on these experiments' implementation is provided in Appendix~\ref{appendix-subsec:ablationStudyResults}. The ablation study results are summarized in~\Cref{tab:comparisonGDAApproachesPValue}.

\begin{table}[!h]  
\centering
\caption{Ablation study results on GDA setting.}\label{tab:comparisonGDAApproachesPValue}

\resizebox{\linewidth}{!}{
    \begin{threeparttable}
   \small\setlength{\tabcolsep}{1pt}\renewcommand\arraystretch{0.95}
   \centering
\begin{tabular}{cll|ll|ll|ll}
\toprule
\multicolumn{3}{c|}{Dataset}                                                                              & \multicolumn{2}{c|}{Portraits}                                  & \multicolumn{2}{c|}{MNIST 45$^\circ$}                           & \multicolumn{2}{c}{MNIST 60$^\circ$}                            \\ \midrule
\multicolumn{3}{c|}{Metric}                                                                               & Acc. (\%)                   & $\Delta$          & Acc. (\%)                   & $\Delta$              & Acc. (\%)                   & $\Delta$          \\ \midrule
\multicolumn{1}{c|}{\multirow{2}{*}{Training}}   & \multicolumn{1}{l|}{Adversarial} & KL                  & 74.8   & $\downarrow$13.4\%  & 52.0  & $\downarrow$27.8\%   & 34.9   & $\downarrow$31.5\%   \\
\multicolumn{1}{c|}{}                            & \multicolumn{1}{l|}{Barycentric} & KL                  & 83.9   & $\downarrow$3.0\%   & 62.5  & $\downarrow$13.3\%   & 38.3   & $\downarrow$24.8\%   \\ \midrule
\multicolumn{1}{c|}{\multirow{4}{*}{Functional}} & \multicolumn{1}{l|}{Entropy}     & $\texttt{SftPls}$ & 80.1   & $\downarrow$7.3\%   & 59.7  & $\downarrow$17.2\%   & 38.2   & $\downarrow$25.1\%   \\
\multicolumn{1}{c|}{}                            & \multicolumn{1}{l|}{Entropy}     & $\chi^2$            & 79.8   & $\downarrow$7.7\%   & 60.2  & $\downarrow$16.5\%   & 42.4   & $\downarrow$16.9\%   \\
\multicolumn{1}{c|}{}                            & \multicolumn{1}{l|}{Entropy}     & Identity            & 81.2   & $\downarrow$6.1\%   & 59.6  & $\downarrow$17.4\%   & 39.6   & $\downarrow$22.3\%   \\
\multicolumn{1}{c|}{}                            & \multicolumn{1}{l|}{Entropy}     & KL                  & 86.4   & -                   & 72.1  & -                    & 51.0   & -                  \\ \bottomrule
\end{tabular}
    \begin{tablenotes}
        \item{\textit{Kindly Note}: ${\Delta}$ denotes performance change percentage compared to E-SUOT with entropy regularization and KL divergence.
        }
        \end{tablenotes}
\end{threeparttable}
}
\end{table}
\begin{table*}[!h]
\centering
\caption{Accuracy (\%) improvement over different UDA feature extractor backbones.}\label{tab:comparisonUDAApproachesImprove}
    \begin{threeparttable}
     \small\setlength{\tabcolsep}{2pt}\renewcommand\arraystretch{0.95}
\begin{tabular}{l|lllllllllllll}
\toprule
Method      & Ar$\to$Cl        & Ar$\to$Pr       & Ar$\to$Rw       & Cl$\to$Ar         & Cl$\to$Pr         & Cl$\to$Rw       & Pr$\to$Ar         & Pr$\to$Cl        & Pr$\to$Rw       & Rw$\to$Ar         & Rw$\to$Cl         & Rw$\to$Pr         & Avg.            \\ \midrule
MSTN         & 49.8             & 70.3            & 76.3            & 60.4              & 68.5              & 69.6            & 61.4              & 48.9             & 75.7            & 70.9              & 55.0                & 81.1              & 65.7            \\
E-SUOT+MSTN  & 57.8             & 75.9            & 79.6            & 65.5              & 75.9              & 74.8            & 64.5              & 58.5             & 81.4            & 73.7              & 59.5              & 84.4              & 71.0              \\ 
$\Delta$     & $\uparrow$16.1\% & $\uparrow$8.0\% & $\uparrow$4.3\% & $\uparrow$8.4\%   & $\uparrow$10.8\%  & $\uparrow$7.5\% & $\uparrow$5.0\%   & $\uparrow$19.6\% & $\uparrow$7.5\% & $\uparrow$3.9\%   & $\uparrow$8.2\%   & $\uparrow$4.1\%   & $\uparrow$8.1\% \\ \midrule 
RSDA        & 53.2             & 77.7            & 81.3            & 66.4              & 74.0                & 76.5            & 67.9              & 53.0               & 82.0              & 75.8              & 57.8              & 85.4              & 70.9            \\
E-SUOT+RSDA & 61.5             & 78.8            & 81.7            & 67.6              & 77.3              & 77.6            & 67.2              & 61.0               & 82.7            & 76.0                & 62.4              & 85.3              & 73.3            \\
$\Delta$ & $\uparrow$15.7\% & $\uparrow$1.4\% & $\uparrow$0.5\% & $\uparrow$1.8\%   & $\uparrow$4.4\%   & $\uparrow$1.5\% & $\downarrow$1.0\% & $\uparrow$15.2\% & $\uparrow$0.9\% & $\uparrow$0.3\%   & $\uparrow$7.9\%   & $\downarrow$0.1\% & $\uparrow$3.3\% \\ \midrule
FixBi        & 58.1             & 77.3            & 80.4            & 67.7              & 79.5              & 78.1            & 65.8              & 57.9             & 81.7            & 76.4              & 62.9              & 86.7              & 72.7            \\
E-SUOT+FixBi & 61.7             & 79.1            & 81.7            & 67.6              & 77.6              & 78.2            & 67.3              & 61.3             & 82.7            & 76.0                & 62.5              & 85.3              & 73.4            \\ 
$\Delta$  & $\uparrow$6.2\%  & $\uparrow$2.3\% & $\uparrow$1.6\% & $\downarrow$0.1\% & $\downarrow$2.4\% & $\uparrow$0.1\% & $\uparrow$2.3\%   & $\uparrow$5.9\%  & $\uparrow$1.2\% & $\downarrow$0.5\% & $\downarrow$0.6\% & $\downarrow$1.6\% & $\uparrow$1.0\%\\ \midrule
CoVi         & 58.5             & 78.1            & 80.0              & 68.1              & 80.0                & 77.0              & 66.4              & 60.2             & 82.1            & 76.6              & 63.6              & 86.5              & 73.1            \\
E-SUOT+CoVi  & 61.6             & 79.3            & 81.8            & 67.6              & 77.7              & 78.1            & 67.4              & 61.2             & 82.9            & 76.3              & 62.5              & 85.2              & 73.5            \\ 
$\Delta$     & $\uparrow$5.3\%  & $\uparrow$1.5\% & $\uparrow$2.2\% & $\downarrow$0.7\% & $\downarrow$2.9\% & $\uparrow$1.4\% & $\uparrow$1.5\%   & $\uparrow$1.7\%  & $\uparrow$1.0\% & $\downarrow$0.4\% & $\downarrow$1.7\% & $\downarrow$1.5\% & $\uparrow$0.5\% \\ \bottomrule
\end{tabular}
    \begin{tablenotes}
        \item{\textit{Kindly Note}: $\Delta$ denotes the relative accuracy change of E-SUOT over the corresponding vanilla UDA feature extractor backbone. 
        \vspace{-0.4cm}
        }
        \end{tablenotes}
\end{threeparttable}
\end{table*}

From~\Cref{tab:comparisonGDAApproachesPValue}, we find that adversarial training performs the worst, underscoring the importance of entropy regularization for robust model training in~\Cref{subsec:RobustTrainingProcedureForDualFormTransportResult}. While barycentric mapping is competitive, it struggles on complex datasets such as MNIST 45$^\circ$ and MNIST 60$^\circ$, highlighting the need for the semi-dual formulation. Additionally, alternatives to KL divergence, especially \texttt{SftPls}, cause significant performance drops, emphasizing the importance of proper divergence selection for conducting GDA task. We also observe that replacing KL divergence with alternatives such as $\chi^2$ divergence, the identity function, or particularly Softplus results in substantial performance degradation, further illustrating that choosing a suitable discrepancy to drive the evolution of source domain to target domain is critical for promising the performance of GDA.

\begin{figure*}[!h]
    \centering
\subfigure[Target Domain: Ar.\label{subfig:umapArDomain}]{\includegraphics[width=0.245\linewidth]{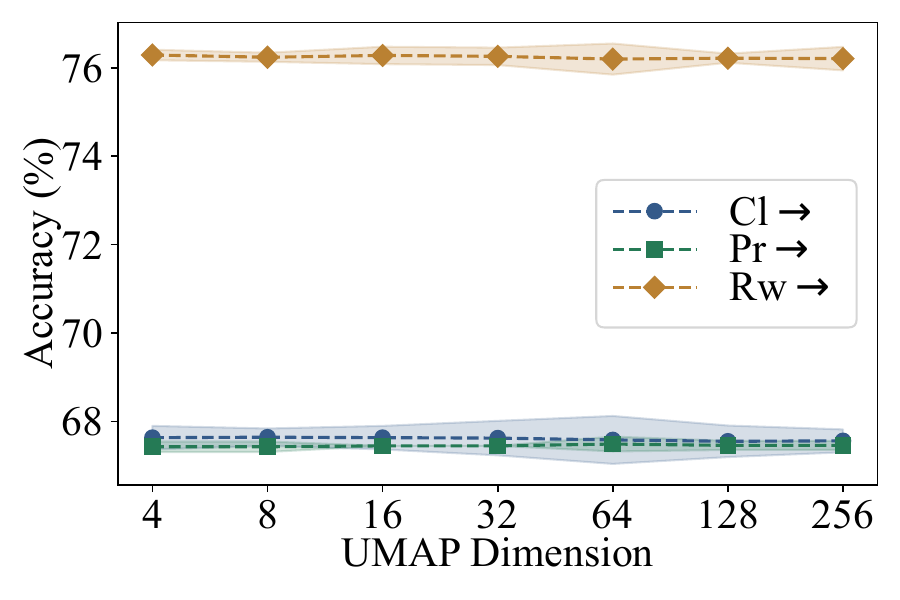}}       
\subfigure[Target Domain: Cl.\label{subfig:umapClDomain}]{\includegraphics[width=0.245\linewidth]{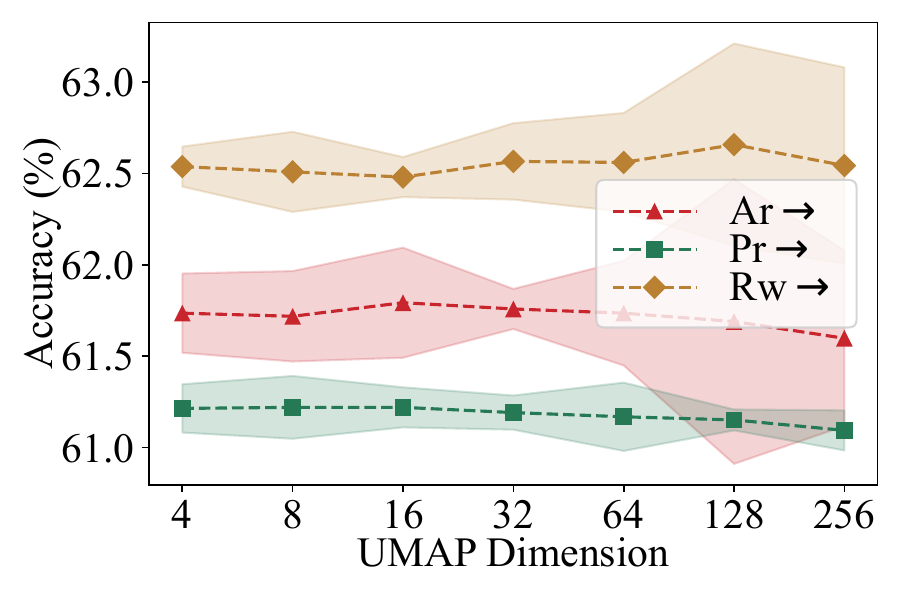}}
\subfigure[Target Domain: Pr.\label{subfig:umapPrDomain}]{\includegraphics[width=0.245\linewidth]{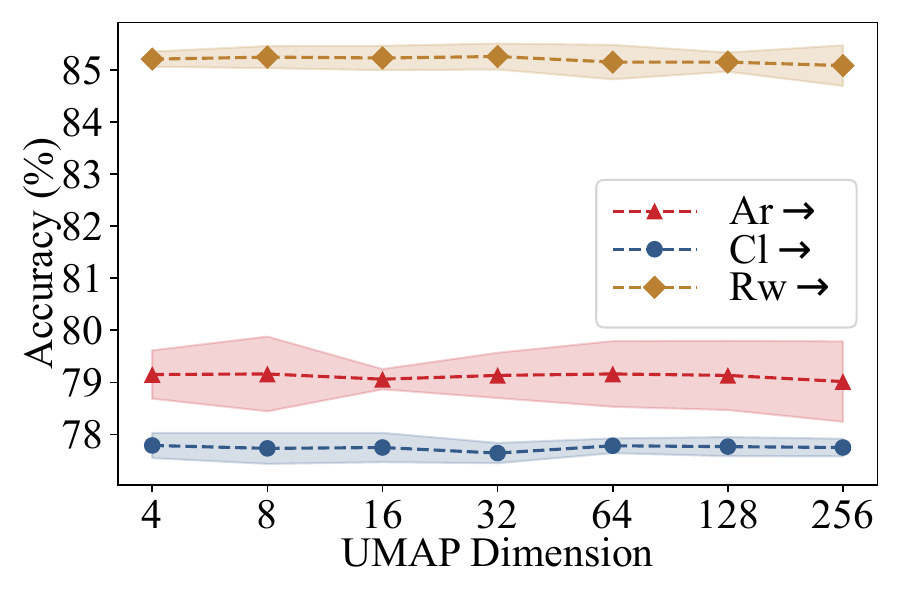}}
\subfigure[Target Domain: Rw.\label{subfig:umapRwDomain}]{\includegraphics[width=0.245\linewidth]{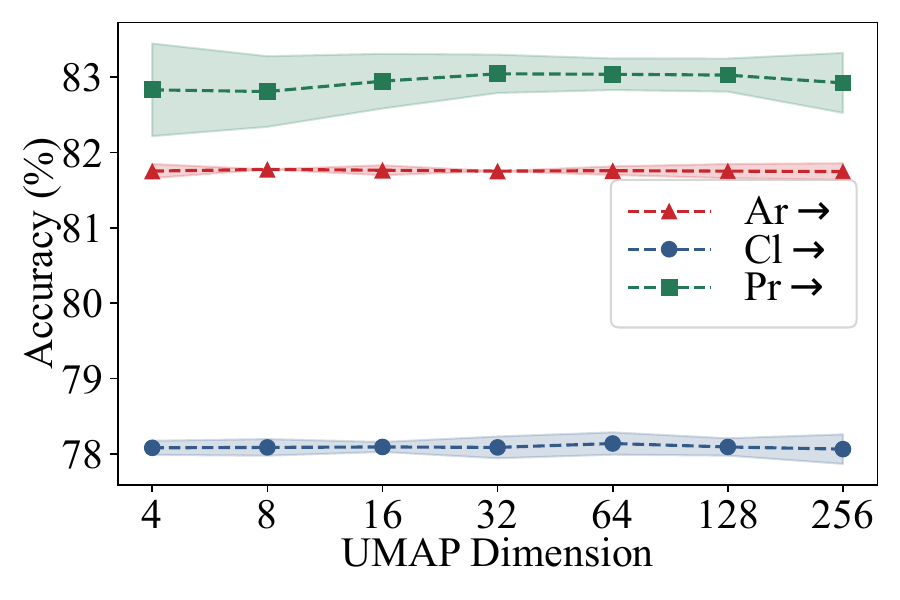}}
    \vspace{-0.1cm}
\caption{Ablation study of E-SUOT performance with varying UMAP embedding dimensions on the Office-Home dataset. For dimension 256, the vanilla backbone features are used without UMAP. The shaded area indicates the $\pm$ 5.0 times standard deviation error.
}\label{fig:umapSensAnalysis}
   \vspace{-0.43cm}
\end{figure*}
\subsubsection{Ablation on Embedding Feature} 
Since our E-SUOT uses UMAP embeddings obtained from backbones pretrained with different UDA algorithms, we further conduct ablations on the Office-Home dataset from two aspects: (1) the pretraining strategy and (2) the UMAP embedding dimensionality. The corresponding results are reported in the following contents.

\paragraph{Pretraining Strategy.} The ablation study on the pretraining strategy is reported in \Cref{tab:udaTaskResults}. We evaluate the proposed E-SUOT by integrating it with four representative UDA algorithms, namely MSTN, RSDA, FixBi, and CoVi, on the Office-Home dataset. For a fair comparison, we apply UMAP to obtain 8-dimensional embeddings. The corresponding results are listed in~ \Cref{tab:comparisonUDAApproachesImprove}. 

As shown in \Cref{tab:comparisonUDAApproachesImprove}, E-SUOT consistently improves all baselines, with average gains of 8.1\%, 3.3\%, 1.0\%, and 0.5\% over MSTN, RSDA, FixBi, and CoVi, respectively. The largest improvement (+16.1\%) occurs on the challenging Ar$\to$Cl transfer, suggesting that E-SUOT is particularly beneficial under large domain shifts. Notably, E-SUOT remains effective when combined with strong recent methods (e.g., FixBi and CoVi), indicating that it serves as a complementary module. To gain further insight, we provide additional discussions in~\Cref{subsec:additionalDiscussionsOnImprovement} regarding the results reported in~\Cref{tab:udaTaskResults}. Overall, these consistent gains across different pretrained feature extractors demonstrate the scalability and robustness of E-SUOT for UDA.

\paragraph{UMAP Embedding Dimension.} 
We further conduct an ablation study to analyze the sensitivity of E-SUOT with respect to the dimensionality of the UMAP embedding. Specifically, we vary the reduced feature dimension from 4 to 256, where 256 corresponds to using the original 256-dimensional backbone features (i.e., without UMAP). Through this experiment, we aim to evaluate the impact of different embedding dimensions on the performance of E-SUOT. The corresponding results are presented in~\Cref{fig:umapSensAnalysis}.

From~\Cref{fig:umapSensAnalysis}, we observe that as the embedding dimension changes, the classification accuracy tends to fluctuate within a relatively small range across all target domains. In detail, for each target domain, the performance remains quite stable as we vary the UMAP dimension from 4 to 256, it can be observed that no drastic drops are observed. In some domains (e.g., Ar in~\Cref{fig:umapSensAnalysis}(a), Pr in~\Cref{fig:umapSensAnalysis}(c) and Rw in~\Cref{fig:umapSensAnalysis}(d)), the accuracy curves are almost flat, indicating the proposed method is insensitive to UMAP dimension choices in these cases. For domain Cl in~\Cref{fig:umapSensAnalysis}(b), although the standard deviation is higher, the main trend is still relatively stable. It suggests that E-SUOT retains strong robustness to the UMAP embedding dimension and does not rely heavily on finetuning this hyperparameter. In summary, the ablation analysis in~\Cref{fig:umapSensAnalysis} shows that E-SUOT remains robust to variations in the UMAP embedding dimension. 
\section{Related Works}
\subsection{Gradual Domain Adaptation}
GDA seeks to bridge the distributional gap between source and target domains by leveraging a sequence of intermediate domains, thereby enabling more fine-grained adaptation. Early works have explored self-training strategies~\citep{kumar2020understanding}, adversarial objectives~\citep{wang2020continuously,shi2023adversarial}, and provided generalization bounds under gradual distribution shifts~\citep{kumar2020understanding,dongalgorithms,wang2022understanding}. However, these approaches often depend on the availability of discrete intermediate domains~\citep{chen2021gradual}. To address this, optimal transport approaches~\citep{abnar2021gradual,he2024gradual} have been leveraged to construct intermediate domains along the Wasserstein geodesic, ensuring minimal distributional discrepancy in the adaptation process. More recently, flow-based GDA has emerged, which explicitly models domain evolution and synthesizes continuous intermediate distributions via parametric flows. For instance, \citet{sagawa2025gradual} uses continuous normalizing flows to parameterize domain trajectories as ODEs in the data space, while \citet{zhuang2024gradual} incorporates label information into this evolution and employs gradient flows to realize the steepest transformation from source to target domain. 

Nevertheless, flow-based GDA approaches still require explicit estimation of the target domain's PDF to guide the evolution, and inaccuracies in this estimation can lead to performance drops in target domain. To address this limitation, we reformulate the flow-based approach from a semi-dual formulation (see Proposition~\ref{thm:firstThmResultsDualProblem}), which unifies the flow-based and optimal transport methods. Building on this, we further propose a convergence-guaranteed approach with the help of entropy regularization (Proposition~\ref{thm:entropyRegularizedResults}) and analyze its generalization error (see Proposition~\ref{thm:generalizationError}).
\subsection{Semi-Dual Formulation of Gradient Flows}
Gradient flow~\citep{santambrogio2017euclidean}, which seeks to optimize a specified functional in the space of probability measures, has played a critical role in both sampling and optimization algorithm design. For gradient flows induced by $f$-divergences (with the KL divergence being the notable example), such as Langevin sampling~\citep{welling2011bayesian,xu2026diffusion}, have been extensively explored to generate samples that progressively transition from the source domain toward the target domain. However, these methods typically assume access to an exact (unnormalized) PDF for the target distribution~\citep{liu2016stein,liu2017stein}, which is often infeasible in practice when only samples are available. To overcome this, several approaches have explored dual formulations of $f$-divergence~\citep{nguyen2007estimating,nguyen2010estimating}, which avoid explicit density estimation for the target domain and instead optimize primal formulation~\citep{korotinneural,routgenerative,fan2022variational,gazdieva2023extremal,choi2023generative,choi2024scalable,xu2024sparse,10637293}. These dual-formulation methods, however, generally require adversarial optimization characterized by a composite ``$\sup$-$\inf$'' structure in order to properly approximate the dual objective when implemented with neural networks~\citep{nowozin2016f,arjovsky2017wasserstein}. In addition, recent works attempt to introduce entropy regularization into UOT-based generative modeling. For example, \citet{choi2024scalableSimulationFree} addresses the resulting inner entropy-regularized problem via the Schr\"odinger bridge formulation~\citep{7160692,7160709,chen2021stochastic,weiguo2026ProximalSampler}. 

To our knowledge, the proposed E-SUOT differs from these approaches in three key aspects. First, we provide a theoretical analysis from the perspective of the non-uniqueness of optimal solutions in Proposition~\ref{thm:notHaveUniqueSolution}, highlighting that such adversarial formulations can suffer from this issue, which may hinder training stability. 
Second, building upon this insight, we introduce an entropy regularization term that transforms the adversarial game into a sequential optimization paradigm, as formalized in Proposition~\ref{thm:entropyRegularizedResults}. We further prove that this regularization improves stability by ensuring the uniqueness of the optimum in Proposition~\ref{thm:uniquenessSolutionToEntropy}, and establish its convergence in Proposition~\ref{thm:approximationAccuracy}. Third, our inner $\sup$ problem adopts a relative-entropy regularization with respect to the reference measure $\kappa(x_t,x)=p(x_t)p_t(x)$, rather than the Brownian endpoint reference commonly induced by the classical Schr\"odinger bridge formulation~\citep{7160692,7160709,chen2021stochastic}. The Brownian endpoint reference typically has the form $\kappa_{\text{Brown}}(x_t,x)=p(x_t)p(x| x_t)$, where $p(x| x_t)$ is the transition kernel for Brownian motion. Since $p(x| x_t)$ depends on $x_t$, the two endpoints are generally coupled under this reference, and thus $p(x_t)$ does not factorize as $p(x_t)p_t(x)$. In contrast, our reference measure explicitly factorizes into the product of the source and target marginals, corresponding to an independent endpoint reference.

\section{Conclusions}
In this paper, we addressed the challenge in flow-based GDA, namely the reliance on explicit estimation of the target domain PDF inherited from traditional $f$-divergence formulations. To address this issue, we recast the flow simulation as a recursive optimization problem with Wasserstein distance regularization, leading to the Wasserstein proximal recursion. Building on this, we derived a novel semi-dual formulation that avoids explicit estimation of the target density. However, we observed that the resulting semi-dual structure introduces instability due to its composite `$\sup$-$\inf$' structure. To address this issue, we proposed an entropy regularization term that eliminates the inner $\inf$ operator, thereby restoring stability of the optimal solution solving process theoretically. Based on these insights, we developed a new GDA framework called ``E-SUOT'' and provided theoretical guarantees for its convergence and generalization. Finally, extensive experiments validate the effectiveness and practical advantages of our approach.

\paragraph{Limitations.} Although E-SUOT shows promising empirical performance, several limitations remain. First, the current formulation mainly performs marginal feature alignment and assumes label invariance along the transport path, without explicitly incorporating pseudo-label information~\citep{courty2017joint}. This may limit robustness under strong covariate shift, label shift, or concept drift. Second, while entropy regularization improves computational efficiency, it may blur the sparsity of the transport plan and affect the interpretability of the learned transport potential. Third, this work adopts the Wasserstein distance as the main discrepancy measure; exploring alternative geometries, such as Fisher-Rao~\citep{wang2023gadpvi,zhuinclusive} or Bures-Wasserstein discrepancies~\citep{wang2026distdf}, may further improve the construction of intermediate domains. Finally, E-SUOT currently relies on a multi-network, multi-stage offline training pipeline, which leaves room for more parameter-efficient designs~\citep{hulora}. Due to space limitations, detailed discussions and the corresponding possible alleviation strategies are provided in~\Cref{sec:limitationsDiscussions}.

\clearpage
\section*{Acknowledgements}
Zhouchen Lin was supported by the Beijing Natural Science Foundation under Grant No. L257007, the National Natural Science Foundation of China under Grant No. 62276004, and the Beijing Major Science and Technology Project under Grant No. Z251100008425006. Zhichao Chen was supported by the China Postdoctoral Science Foundation under Grant No. 2025M781449. Yunfei Teng was supported by Beijing Postdoctoral Research Foundation. The first author, Zhichao Chen, would like to express his gratitude to Dr. Chunyuan Zheng from Peking University for his valuable guidance on organizing the rebuttal language during the conference rebuttal stage, and to Dr. Jaemoo Choi of Georgia Institute of Technology, whose relevant works inspired this research. The authors acknowledge the anonymous reviewers for their insightful comments and suggestions.



\section*{Impact Statement}
GDA addresses a critical challenge in machine learning: transferring knowledge from a labeled source domain to an unlabeled target domain when there is a substantial gap between the two. Rather than relying on abrupt, one-shot shifts, GDA interpolates through a series of intermediate domains, allowing for a smoother and more effective adaptation process. This paradigm has direct implications for many real-world applications. For example, in recommender systems, GDA enables knowledge transfer to serve cold-start users or to integrate new items, and in language processing it allows models trained on high-resource languages to adapt more robustly to low-resource languages. Our work advances the field of GDA by unifying flow-based methods and optimal transport within the semi-dual formulation, identifying fundamental issues of stability and generalization that have limited previous approaches. We further propose theoretically-grounded regularization strategies that improve the robustness and reliability of the adaptation process. These advances not only deepen the theoretical understanding of GDA but also offer practical benefits for deploying adaptable machine learning systems in diverse settings. We believe our findings will help catalyze the development of more general, stable, and information-preserving domain adaptation methods, with impact across fields ranging from recommendation to broader AI applications.

\bibliography{iclr2026_conference,ref}
\bibliographystyle{icml2026}

\newpage
\appendix
\onecolumn

\section{Nomenclature}\label{sec:nomenclatureTable}
To facilitate reading our manuscript, we provide the major technical terminology we use during our derivation in~\Cref{tab:terminologyTableResults}.
\begin{table}[!h] \centering \caption{Technical terminology table.}\label{tab:terminologyTableResults} \begin{tabular}{>{\RaggedRight\arraybackslash}p{2.5cm}>{\RaggedRight\arraybackslash}p{5.0cm}>{\RaggedRight\arraybackslash}p{2.5cm}>{\RaggedRight\arraybackslash}p{5.0cm}} \toprule Symbol & Description & Symbol & Description \\ \midrule$T$ &  Intermediate domain number & $\mathrm{softmax}$ &  Softmax function \\ $\Delta$ &  Performance difference & $\mathscr{A}$ &  Upper bound on the $L_2$ norm of the first variation of the KL divergence \\ $\Vert \Vert_2$ &  $L_2$ norm & $\mathscr{B}$ &  Upper bound on the $L_2$ norm of the gradient of the first variation of the KL divergence \\ $\arg\min$ &  Argument of the minimum & $\mathscr{H}_0$ &  Light-tail constant \\ $\boldsymbol{T}$ &  Transportation map & $\nabla$ &  Gradient operator \\ $\boldsymbol{T}^{-1}$ &  Inverse transformation of the transportation map & $\omega$ &  Parameter of classifier \\ $\delta$ &  Variation operator & $\phi$ &  Parameter of potential function \\ $\det$ &  Determinant & $\pi$ &  Transportation plan \\ $\epsilon$ &  Entropy regularization coefficient & $\rho^*(x)$ &  Optimal PDF \\ $\eta$ &  Discretization stepsize & $\sup$ &  Supremum \\ $\exp$ &  Exponential function & $\theta$ &  Parameter of transportation map \\ $\inf$ &  Infimum & $\updelta$ &  Dirac delta measure \\ $\iota$ &  Lipschitz constant of the loss function & $\varepsilon$ &  Generalization error \\ $\kappa(x,y)$ & Reference joint PDF & $\widehat{h}$ &  Logit layer of classifier \\ $\lambda_1$ &  Coefficient of unbalanced optimal transport & $\widehat{y}$ &  Predicted label \\ $\lambda_2$ &  Coefficient of unbalanced optimal transport & $\zeta$ &  Lipschitz constant bound for hypotheses in $\mathcal{H}$ \\ $\mathbb{D}_f[\rho(x),p_T(x)]$ & $f$-divergence of distribution $q(x)$ with respect to distribution $p(x)$ & $c$ &  Cost matrix \\ $\mathbb{D}_{\rm{KL}}[\rho(x),p_T(x)]$ & Kullback-Leibler divergence of distribution $q(x)$ with respect to distribution $p(x)$ & $f^\star(x)$ &  Convex conjugate function of $f(x)$ \\ $\mathbb{E}_q(x)[f(x)]$ &  Expectation of function $f(x)$ with respect to distribution $q(x)$ & $h$ &  Classifier \\ $\mathbb{I}$ &  indicator function & $t$ &  Time index \\ $\mathcal{B}$ &  Batch size & $u$ &  Kantorovich potential function \\ $\mathcal{C}$ &  Cumulative domain transportation and label continuity costs along the adaptation path & $v_t$ &  Velocity field \\ $\mathcal{H}$ &  Hypothesis class for classifier & $w$ &  Kantorovich potential function \\ $\mathcal{K}$ &  Kernel Function & $x$ &  Input \\ $\mathcal{L}$ &  Loss function & $y$ &  Label \\ $\mathcal{P}_2(\mathbb{R}^{\mathrm{D}})$ &  $\mathrm{D}$-dimensional Wasserstein space & GDA &  Gradual domain adaptation \\ $\mathcal{S}_{\text{stat}}$ &  Statistical error term. & JKO &  Jordan-Kinderlehrer-Otto \\ $\mathcal{T}$ &  Set of transportation map & KL divergence &  Kullback-Leibler divergence \\ $\mathcal{W}_p$ &  $p$-Wasserstein distance & OT &  Optimal transport \\ $\mathcal{X}$ &  Input space & PDF &  Probability density function \\ $\mathcal{Y}$ &  Label space & UDA &  Unsupervised domain adaptation \\ $\mathrm{N}$ &  Sample size & UOT &  Unbalanced optimal transport \\ $\mathrm{d}$ &  Differential operator &   &   \\ \bottomrule \end{tabular} \end{table}

\newpage
\section{Mathematical Foundations of Optimal Transport}\label{sec:BackGroundOptimalTransportation}
We begin by reviewing the relevant background of optimal transport, based on references~\citep{villani2009optimal,peyre2019computational}. Assume continuous variables with densities: source $\rho(x)$ supported on $\mathcal X$, target $\xi(y)$ supported on $\mathcal Y$, and a cost $c(x,y)\ge 0$. We search for a joint probability density function which is called transport plan $\pi(x,y)\ge 0$ such that:
\begin{subequations}
    \begin{align}
        \int\pi(x,y)\,\mathrm{d}y = \rho(x),\\
        \int\pi(x,y)\,\mathrm{d}x = \xi(y),
    \end{align}
\end{subequations}
and minimize expected cost:
\begin{equation}
  \inf_{\pi\ge 0}\ \iint c(x,y)\,\pi(x,y)\,\mathrm{d}y\,\mathrm{d}x,
\end{equation}
where $c(x,y)$ is the cost function, for example, squared Euclidean norm: $c(x,y) = \Vert x - y \Vert_2^2$. Notably, when $c(x, y)$ is chosen as the squared Euclidean distance, the resulting optimal transport cost corresponds to the squared Wasserstein-2 distance between the two PDFs.

Introducing potentials $u(x)$ and $w(y)$ as Lagrange multipliers for the marginal constraints, we get:
\begin{equation}
\sup_{u,w}\ [\int u(x)\,\rho(x)\,\mathrm{d}x + \int w(y)\,\xi(y)\,\mathrm{d}y]
\quad \text{s.t.}\quad u(x)+w(y)\le c(x,y)\ \ \forall x,y.
\end{equation}
Intuitively, $u$ and $w$ are ``prices'', and the constraint ensures the total ``price'' never exceeds the cost function. In addition, $u$ and $w$ are also called ``(Kantorovich) potential'' in optimal transport~\citep{peyre2019computational}. 

Based on this, we can eliminate one potential via the $c$-transform~\citep{villani2009optimal} as follows:
\begin{equation}
w^{c}(x) := \inf_{y} c(x,y)-w(y).
\end{equation}
Based on this, we get the semi-dual formulation of optimal transport problem~\citep{korotin2021neural,korotinneural,choi2023generative,choi2024scalable,choiovercoming} which maximizes over one potential:
\begin{equation}
\sup_{w} \int w^{c}(x)\,\rho(x)\,\mathrm{d}x + \int w(y)\,\xi(y)\,\mathrm{d}y.
\end{equation}

Notably, when total mass may differ or we allow creation/destruction of mass, we can relax marginal constraints using the $f$-divergence-based penalty terms~\citep{chizat2018unbalanced,ijcai2022p679}.
Specifically, we still want to optimize $\pi(x,y)\ge 0$, but we will penalize deviations of the induced marginals
$\tilde{\rho}(x):=\int \pi(x,y)\,\mathrm{d}y$ and $\tilde{\xi}(y):=\int \pi(x,y)\,\mathrm{d}x$ from $\rho(x)$ and $\xi(y)$:
\begin{equation}\label{eq:discussionsOnLambdaSelection}
\inf_{\pi\ge 0}\ \iint c(x,y)\,\pi(x,y)\,\mathrm{d}y\,\mathrm{d}x 
+ \lambda_1\,\mathbb{D}_f(\tilde{\rho} (x), \rho(x))
+ \lambda_2\,\mathbb{D}_f(\tilde{\xi}(y) , \xi(y)),
\end{equation}
where $\mathbb{D}_f(\tilde{\rho}(x) , \rho(x))=\int {\rho}(x)\,f\!\big(\tfrac{\tilde{\rho}(x)}{{\rho}(x)}\big)\,\mathrm{d}x$ and $\lambda_{1,2}>0$.

In addition, using the convex conjugate $f^\star$, the dual problem becomes
\begin{equation}
\sup_{u,w} 
 -\int \rho(x)\,f_1^\star\!\big(-u(x)\big)\,\mathrm{d}x
 -\int \xi(y)\,f_2^\star\!\big(-w(y)\big)\,\mathrm{d}y
\quad \text{s.t.}\quad u(x)+w(y)\le c(x,y)\ \forall x,y,
\end{equation}
where $f_1,f_2$ are the chosen divergences on each side.

Similarly, we can eliminate one potential via the $c$-transform as follows:
\begin{equation}
\sup_{w}\ 
 -\int \rho(x)\,f_1^\star\!\big(-w^{c}(x)\big)\,\mathrm{d}x
 -\int \xi(y)\,f_2^\star\!\big(-w(y)\big)\,\mathrm{d}y
,
 w^{c}(x)=\inf_{y}\{c(x,y)-w(y)\}.
\end{equation}
Based on this, we obtain the semi-dual formulation of the unbalanced optimal transport problem.

\section{Theoretical Derivation}\label{sec:PreliminariesDetailed}
\subsection{Derivation of Equation~\eqref{eq:forwardSimulationEquivalentForm}}\label{subsec:DerivationOfGradFlowResultsJKO}

In this subsection, we want to derive the following equivalent relationship:
\begin{equation}
\label{appendix-eq:forwardSimulationEquivalentForm}
x_{t+\eta}
= x_t - \eta\, \nabla \frac{\delta \mathbb{D}_f[p(x_t),p_T]}{\delta p(x_t)}
\Rightarrow 
p(x_{t+\eta})
= \mathop{\arg\min}_{\rho(x)\in\mathcal{P}_2(\mathbb{R}^{\mathrm{D}})}
\frac{1}{2\eta}\,\mathcal{W}_2^2(\rho(x), p(x_t)) + \mathbb{D}_f[\rho(x), p_T(x)].
\end{equation}

Notably, the optimization problem given by the right-hand-side of the abovementioned equation is also called Jordan-Kinderlehrer-Otto canonical form~\citep{jordan1998variational,8890903,9359467} or minimum movement scheme~\citep{park2023deep}. Before conducting the derivation, it is necessary to introduce the definition of Wasserstein distance. The squared 2-Wasserstein distance $ \mathcal{W}_2^2$ can be defined by finding a transport map $\boldsymbol{T}: \mathbb{R}^{\mathrm{D}} \to \mathbb{R}^{\mathrm{D}}$ that minimizes the average cost of transporting mass from $\rho(x)$ to $\xi(x)$ as follows:
\begin{equation}
    \mathcal{W}_2^2(\rho, \xi) = \inf_{ \boldsymbol{T}:{\boldsymbol{T}}_{\#} \rho(x) = \xi(x)} \int \| x - \boldsymbol{T}(x)\|_2^2 \, \rho(x)\mathrm{d}x,
\end{equation}
where $\boldsymbol{T}_\#$ indicates the pushforward measure, and the expression for $\boldsymbol{T}(x)$ is defined as follows:
\begin{equation}\label{eq:transportPerturbationResult}
   \boldsymbol{T}(x) = x + \eta v_t(x).
\end{equation}

Meanwhile, during the transportation, the differential equation that delineates PDF of the evolution process driven by~\Cref{eq:gradFlowODe} is called \textit{continuity equation}~\citep{ambrosio2005gradient,villani2009optimal}, defined as follows:
\begin{equation}\label{eq:continuityEquation}
    \frac{\partial \rho(x_t)}{\partial t} = -\nabla \cdot [v_t(x_t)  \rho(x_t)].
\end{equation}


Building on~\Cref{eq:transportPerturbationResult,eq:continuityEquation}, and discretizing the continuity equation in the time domain using the forward Euler scheme~\citep{butcher2016numerical,evans2022partial}, we obtain:

Taking the functional derivative of $\mathbb{D}_{f}[\rho(x), p_T(x)]$ with respect to $\rho(x)$, we have the following result:
\begin{equation}
\begin{aligned}
      & \frac{\mathrm{d}}{\mathrm{d}\eta}\mathbb{D}_{f}[\rho(x), p_T(x)]\\
    = & \frac{\mathrm{d}}{\mathrm{d}\eta}\int{p_T(x) f({\frac{\rho(x)}{p_T(x)})}\mathrm{d}x}\\
    = & \underbrace{\int{\frac{\partial p_T(x)}{\partial \eta}  f({\frac{\rho(x)}{p_T(x)})}\mathrm{d}x }}_{=0} + \int{p_T(x)\frac{\partial }{\partial \eta}  f({\frac{\rho(x)}{p_T(x)})}\mathrm{d}x } \\
= & \int{\cancel{p_T(x)} \frac{1}{\cancel{p_T(x)}}[\frac{\partial }{\partial \rho(x)}  f({\frac{\rho(x)}{p_T(x)})}] \frac{\partial \rho(x)}{\partial \eta}\mathrm{d}x }  \\
   \overset{\text{(i)}}{=} & \int{[\frac{\partial }{\partial \rho(x)}  f({\frac{\rho(x)}{p_T(x)})} ][-\nabla\cdot ( v_t(x) \rho(x) )]\mathrm{d}x }  \\ 
 \overset{\text{(ii)}}{=} & \int{  [\rho(x)v_t(x)]^\top[\nabla \frac{\partial }{\partial \rho(x)}f(\frac{\rho(x)}{p_T(x)})] - \nabla\cdot [\frac{\partial}{\partial \rho(x)} f(\frac{\rho(x)}{p_T(x)}) v_t(x) \rho(x)]  \mathrm{d}x}    \\ 
 \overset{\text{(iii)}}{=} & \int{  [\rho(x)v_t(x)]^\top[\nabla \frac{\delta \mathbb{D}_{f}[\rho(x), p_T(x)]}{\delta \rho(x) } ] - \nabla\cdot [\frac{\delta \mathbb{D}_{f}[\rho(x), p_T(x)]}{\delta \rho(x) } v_t(x) \rho(x)]  \mathrm{d}x}    \\ 
 \overset{\text{(iv)}}{=} & \int{  [\rho(x)v_t(x)]^\top[\nabla \frac{\partial }{\partial \rho(x)}f(\frac{\rho(x)}{p_T(x)})]   \mathrm{d}x}    \\ 
= & \int{\rho(x)v^\top(x) \nabla \frac{\delta \mathbb{D}_{f}[\rho(x), p_T(x)]}{\delta \rho(x) } \mathrm{d}x} , 
\end{aligned}
\end{equation}
where `(i)' is based on~\Cref{eq:continuityEquation}, `(ii)' is based on the following chain rule:
\begin{equation}
\begin{aligned}
& \begin{aligned}
   &  \nabla \cdot [\frac{\partial }{\partial \rho(x)} f(\frac{\rho(x)}{p_T(x)}) \rho(x) v_t(x)] \\
   =& [\rho(x)] [v_t(x) ]^\top[\nabla\frac{\partial }{\partial \rho(x)} f(\frac{\rho(x)}{p_T(x)})] + [\nabla\cdot v_t(x)] [\rho(x)][ \frac{\partial }{\partial \rho(x)}{ f(\frac{\rho(x)}{p_T(x)})} ] +  [ v_t(x)]^\top[\nabla\rho(x)][ \frac{\partial }{\partial \rho(x)}{ f(\frac{\rho(x)}{p_T(x)})} ]\\
\end{aligned} \\
\Rightarrow &  \\
& \begin{aligned}
   &  \nabla \cdot [\frac{\partial }{\partial \rho(x)} f(\frac{\rho(x)}{p_T(x)}) \rho(x) v_t(x)] - [\rho(x)] [v_t(x) ]^\top[\nabla\frac{\partial }{\partial \rho(x)} f(\frac{\rho(x)}{p_T(x)})]  
   =  [\nabla(v_t(x)\rho(x) )][ \frac{\partial }{\partial \rho(x)}{ f(\frac{\rho(x)}{p_T(x)})} ],
\end{aligned}
\end{aligned}
\end{equation}
`(iii)' is based on the following equation:
\begin{equation}
\frac{\delta \mathbb{D}_{f}[\rho(x),p_T(x)]}{\delta \rho(x)} = \frac{\partial }{\partial \rho(x)} f(\frac{\rho(x)}{p_T(x)}),
\end{equation}
and `(iv)' is based on the mild assumption on $\frac{\delta \mathbb{D}_{f}[\rho(x), p_T(x)]}{\delta \rho(x) } v(x) \rho(x)$~\cite{abraham2012manifolds,johnson2018composite,liu2019understanding,shi2021sampling,11477839}, for example, rapid decay as $x\to\infty$, so that we have:
\begin{equation}
    \int{   \nabla\cdot [\frac{\delta \mathbb{D}_{f}[\rho(x), p_T(x)]}{\delta \rho(x) } v_t(x) \rho(x)]  \mathrm{d}x}= 0 .
\end{equation}

Based on this, $\mathbb{D}_{f}[\rho(x), p_T(x)]$ can be expanded as follows when $\eta\to0 $:
\begin{equation}\label{eq:klDivergenceResult2}
 \mathbb{D}_{f}[\rho(x), p_T(x)] =  \mathbb{D}_{f}[p(x_t), p_T(x)] + \eta \int{p(x_t)v^\top(x_t) \nabla \frac{\delta \mathbb{D}_{f}[p(x_t), p_T(x)]}{\delta p(x_t)} \mathrm{d}x} .
\end{equation}

For the squared 2-Wasserstein distance~\citep{villani2009optimal}, we have the following inequality bound according to the Benamou-Brenier formulation~\citep[Chapter~17]{ambrosio2021lectures}:
\begin{equation}\label{eq:nonOptimalTransportResult}
    \mathcal{W}_2^2(\rho(x),p(x_t)) = \int{p(x_t)\Vert x - \boldsymbol{T}^*(x_t)\Vert_2^2\mathrm{d}x} = \eta^2 \int{p(x_t)\Vert  {v}^*_t(x_t)\Vert_2^2\mathrm{d}x} \le \eta^2 \int{p(x_t)\Vert  {v}_t(x_t)\Vert_2^2\mathrm{d}x}  ,
\end{equation}
where $\boldsymbol{T}^*(x)$ and $v_t^*(x)$ are the optimal transportation map and optimal velocity field. Since $v_t(x)$ is not the optimal velocity filed, we obtain the last inequality. Based on~\Cref{eq:nonOptimalTransportResult,eq:klDivergenceResult2}, we finally reach the following result:
\begin{equation}\label{eq:klDivExpansionResult}
    \begin{aligned}
       & \mathbb{D}_{f}[\rho(x),p_T(x)] + \frac{1}{2\eta} \mathcal{W}_2^2(\rho(x),p(x_t))-  \mathbb{D}_{f}[p(x_t),p_T(x)] \\
     \le  & \cancel{ \mathbb{D}_{f}[\rho(x),p_T(x)] }+ \dfrac{\eta}{2}\mathbb{E}_{p(x_t)}[\Vert{v_t(x_t)}\Vert^2_2 ] + \eta\int \cdot[{p(x_t) v_t^\top(x_t)}\nabla\dfrac{\delta \mathbb{D}_{f}[p(x_t),p_T(x)]}{\delta p(x_t)}]\mathrm{d}x
     -  \cancel{\mathbb{D}_{f}[\rho(x),p_T(x)]}
     \\
     \le & \dfrac{\eta}{2}\underbrace{\mathbb{E}_{p(x_t)}[\Vert \nabla\dfrac{\delta \mathbb{D}_{f}[p(x_t),p_T(x)]}{\delta p(x_t)}]
     \Vert^2_2 ]}_{\ge 0} + \dfrac{\eta}{2}\mathbb{E}_{p(x_t)}[\Vert{v_t(x_t)}\Vert^2_2 ]  + \eta\int {p(x_t) v_t^\top(x_t)} \nabla\dfrac{\delta \mathbb{D}_{f}[p(x_t),p_T(x)]}{\delta p(x_t)}\mathrm{d}x\\
      = & \frac{\eta}{2}\mathbb{E}_{p(x_t)}\{\Vert v_t(x_t) +\nabla\dfrac{\delta \mathbb{D}_{f}[p(x_t),p_T(x)]}{\delta p(x_t)} \Vert_2^2 \}.
\end{aligned}
\end{equation}
Consequently, the optimal velocity field that reduces the upper bound of the optimization problem defined by the right-hand-side of~\Cref{eq:forwardSimulationEquivalentForm} can be given as follows: 
\begin{equation}
    v_t^*(x_t) = -\nabla \dfrac{\delta \mathbb{D}_{f}[p(x_t),p_T(x)]}{\delta p(x_t)},
\end{equation}
which implies that the left-hand side of~\Cref{appendix-eq:forwardSimulationEquivalentForm} is a sufficient condition for the optimality of its right-hand side.
\subsection{Derivation of Proposition~\ref{thm:firstThmResultsDualProblem}}
\begin{proposition*}[\ref{thm:firstThmResultsDualProblem}]
Consider the following primal problem:
\begin{equation}\label{appendix-eq:primalMinimumMovementScheme}
\mathcal{L}^{\text{Primal}}=\mathop{\arg\min}_{\rho(x)\in\mathcal{P}_2(\mathbb{R}^{\mathrm{D}})}
\frac{1}{2\eta}\,\mathcal{W}_2^2(\rho(x), p(x_t)) + \mathbb{D}_f[\rho(x), p_T(x)].
\end{equation}
This problem is equivalent to the following semi-dual formulation:
\begin{equation}\label{appendix-eq:dualLearningObjective}
\begin{aligned}
\mathcal{L}^{\text{SemiDual}} = 
{\sup_{w} \mathbb{E}_{p(x_t)}\left[\inf_{\boldsymbol{T}} \left(\Vert \boldsymbol{T}(x_t) - x_t \Vert_2^2  - w
(\boldsymbol{T}(x_t)) \right)\right]} 
-\mathbb{E}_{p_T(x)}[f^\star(-w(x))],
\end{aligned}
\end{equation}
where $w:\mathbb{R}^{\mathrm{D}} \to \mathbb{R}$ is a measurable continuous function, $\boldsymbol{T}:\mathbb{R}^{\mathrm{D}}\to\mathbb{R}^{\mathrm{D}}$ is the transport map, and $f^\star \coloneqq \sup_{y \geq 0} \{ zy - f(y) \}$ denotes the convex conjugate of $f$.
\end{proposition*}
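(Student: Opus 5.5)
The plan is to view the primal problem in \Cref{appendix-eq:primalMinimumMovementScheme} as a semi-relaxed unbalanced optimal transport problem, and then to apply the Lagrangian/conjugate duality recalled in \Cref{sec:BackGroundOptimalTransportation}. The key observation is that optimizing over $\rho$ and over a coupling $\pi\in\Pi(p(x_t),\rho)$ at once is the same as optimizing over nonnegative plans $\pi(x_t,x)$ whose first marginal is pinned to $p(x_t)$, with the second marginal $\tilde\rho(x)=\int\pi(x_t,x)\,\mathrm{d}x_t$ left free. The problem then reads
\begin{equation*}
\inf_{\pi\ge 0,\ \pi_1=p(x_t)} \iint \tfrac{1}{2\eta}\|x_t-x\|_2^2\,\pi(x_t,x)\,\mathrm{d}x_t\mathrm{d}x + \mathbb{D}_f[\tilde\rho, p_T].
\end{equation*}
This is exactly \Cref{eq:discussionsOnLambdaSelection} with $c=\tfrac{1}{2\eta}\|\cdot\|_2^2$, $\lambda_2=1$, and $\lambda_1=\infty$, the last meaning a hard constraint on the source side. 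It is also why the paper calls the framework "unbalanced".

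\textbf{Step 1.} Write the divergence through its conjugate, $\mathbb{D}_f[\tilde\rho,p_T]=\sup_{w}\{\int w\,\tilde\rho\,\mathrm{d}x-\mathbb{E}_{p_T}[f^\star(w)]\}$. I will adopt the sign convention $w\mapsto -w$ so that the statement's $-\mathbb{E}_{p_T}[f^\star(-w)]$ appears. This step uses the restriction $y\ge0$ in the definition of $f^\star$, which encodes $\tilde\rho\ge0$.

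\textbf{Step 2.} Swap $\inf_\pi$ and $\sup_w$. The resulting Lagrangian is linear in $\pi$ and concave in $w$. The feasible set (plans with a fixed first marginal in $\mathcal{P}_2$) is convex and tight, so it is weakly compact. With $w$ continuous and of controlled growth, the map $\pi\mapsto\int w\,\mathrm{d}\pi$ is lower semicontinuous. A Sion/Fenchel–Rockafellar minimax theorem therefore gives zero duality gap. I expect this to be the main obstacle. In particular, I would first restrict to bounded continuous $w$ to get compactness and semicontinuity, and then extend by density or monotone convergence, at least for $f$ being KL.

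\textbf{Step 3.} For fixed $w$, minimize over $\pi$ under the single-marginal constraint. By disintegration, $\pi(x_t,x)=p(x_t)\pi(x\mid x_t)$, and the inner problem decouples pointwise in $x_t$: the infimum of $\tfrac{1}{2\eta}\|x-x_t\|^2-w(x)$ over conditionals $\pi(\cdot\mid x_t)$ is attained at Dirac masses and equals the $c$-transform $w^c(x_t)=\inf_{x}\{\tfrac{1}{2\eta}\|x-x_t\|^2-w(x)\}$. A measurable selection argument (the integrand is continuous) then yields a map $\boldsymbol{T}$ with $\mathbb{E}_{p(x_t)}[\inf_{\boldsymbol{T}}(\cdot)]$ equal to $\mathbb{E}_{p(x_t)}[w^c]$, which gives \Cref{appendix-eq:dualLearningObjective}.

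\textbf{Step 4.} Recover the primal minimizer from the dual optimizer via $\rho^*=\boldsymbol{T}^*_\#p(x_t)$. The first-order condition $\tilde\rho/p_T=(f^\star)'(-w)$ holds on the support, which justifies calling the two problems equivalent in value and in solution.
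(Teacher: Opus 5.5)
Your proposal is correct in outline and takes a genuinely different route from the paper. The paper keeps both marginal constraints, $\int\pi\,\mathrm{d}x=p(x_t)$ and $\int\pi\,\mathrm{d}x_t=\rho$, and attaches two multipliers $u(x_t)$ and $w(x)$. It minimizes the Lagrangian over $\rho$, which gives $-\mathbb{E}_{p_T}[f^\star(-w)]$ via the pointwise conjugate in the ratio $r=\rho/p_T$. It minimizes over unnormalized $\pi\ge 0$, which gives the constraint $u(x_t)+w(x)\le\frac{1}{2\eta}\|x_t-x\|_2^2$. It then asserts strong duality by invoking a Fenchel--Rockafellar constraint qualification with the feasible pair $(u,w)=(-1,-1)$. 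Only afterwards does it eliminate $u$ by complementary slackness, $u^*=(w^*)^c$, and define $\boldsymbol{T}^*$ as the inner argmin.

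You instead absorb $\rho$ into the free second marginal and use a single dual variable from the conjugate representation of $\mathbb{D}_f$. The $c$-transform and the map $\boldsymbol{T}$ then come directly from disintegration plus measurable selection. This buys several things:
\begin{itemize}
\item weak duality comes for free, since it is just Fenchel--Young;
\item no $u$-elimination is needed (the paper's complementary-slackness step identifies $u$ only at an optimum, whereas for each fixed $w$ the best feasible $u$ is simply $w^c$);
\item the $\inf_{\boldsymbol{T}}$ inside the expectation gets a precise meaning;
\item the semi-relaxed structure (hard source marginal, penalized target marginal) becomes explicit.
\end{itemize}
The paper's route instead mirrors the two-potential Kantorovich/UOT dual of Appendix~\ref{sec:BackGroundOptimalTransportation} and exposes the constraint $u(x_t)+w(x)\le c(x_t,x)$ behind the adversarial scheme. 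Your derivation also produces the correctly scaled cost $\frac{1}{2\eta}\|\boldsymbol{T}(x_t)-x_t\|_2^2$ of the main-text version; the missing factor in this restatement is a typo.

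One claim in your Step 2 is false and must be replaced. The set of plans with first marginal $p(x_t)$ and free second marginal is \emph{not} tight: take $\pi_n=p(x_t)\otimes\delta_{x_n}$ with $\|x_n\|_2\to\infty$. Hence it is not weakly compact, and restricting $w$ to bounded continuous functions does not change this.

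Compactness has to come from the coercive cost instead. For $w\in C_b$, the sublevel set $\{\pi:\pi_1=p(x_t),\ \iint(c-w)\,\mathrm{d}\pi\le M\}$ satisfies $\iint\|x-x_t\|_2^2\,\mathrm{d}\pi\le 2\eta(M+\|w\|_\infty)$. So it is tight, because $p(x_t)$ is, and it is closed, because $c$ is continuous and bounded below. A minimax theorem that needs only one compact sublevel set (take $w_0=0$) then yields the interchange. That is more than the paper checks for its own strong-duality step, where it only exhibits a feasible pair.

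Your planned extension from $C_b$ to general continuous $w$ by density or monotone convergence is unnecessary. Weak duality bounds the semi-dual, over any larger class on which it is well defined, by the primal value, so equality over $C_b$ already forces equality.

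Finally, Step 4 is not needed for the value identity and is heuristic as written. Recovering $\rho^*=\boldsymbol{T}^*_\#p(x_t)$ requires that a dual optimizer exists and that the inner argmin is unique $p(x_t)$-a.e. Uniqueness holds when $p(x_t)$ is absolutely continuous, since $\frac{1}{2\eta}\|x_t\|_2^2-w^c(x_t)$ is convex and hence a.e.\ differentiable. It can fail at individual points, cf.\ Proposition~\ref{thm:notHaveUniqueSolution}.
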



\begin{proof}
\Cref{appendix-eq:primalMinimumMovementScheme} can be reformulated as follows:
\begin{subequations}
\begin{align}\label{eq:reformFDivergenceReg}
     & \mathop{\inf}_{\pi\in\mathbb{R}^{\mathrm{D}\times\mathrm{D}}_+} \quad \frac{1}{2\eta}\iint \Vert x_t - x \Vert_2^2 \pi(x_t, x)\mathrm{d}x_t\mathrm{d}x + \int{f(\frac{\rho(x)}{p_T(x)})p_T(x)\mathrm{d}x},\\
    & {\mathrm{s.t.}}\quad p(x_t)=\int{\pi(x_t,x)\mathrm{d}x}, \quad \rho(x)=\int{\pi(x_t,x)\mathrm{d}x_t}.\label{appendix-eq:constraintEquality}
\end{align}
\end{subequations}
Based on this, we introduce the Lagrange multiplier~\cite{boyd2004convex,biegler2010nonlinear} $u(x_t)$ and $w(x)$ to handle the equality constraints given by~\Cref{appendix-eq:constraintEquality} as follows:
\begin{equation}\label{eq:primalLagrangianFunction}
\begin{aligned}
    \mathcal{L}=& \frac{1}{2\eta}\iint \Vert x_t - x \Vert_2^2 \pi(x_t, x)\mathrm{d}x_t\mathrm{d}x + \int{f(\frac{\rho(x)}{p_T(x)})p_T(x)\mathrm{d}x} \\
    &\quad  + \int u(x_t)[p(x_t) - \int{\pi(x_t,x)\mathrm{d}x}]\mathrm{d}x_t + \int w(x)[\rho(x) - \int{\pi(x_t,x)\mathrm{d}x_t}]\mathrm{d}x\\
    =& \iint{[\frac{1}{2\eta}\Vert x_t - x \Vert_2^2 - u(x_t) - w(x)]\pi(x_t, x)\mathrm{d}x_t\mathrm{d}x} + \int{u(x_t)p(x_t)\mathrm{d}x_t} + \int {w(x)\rho(x) + f(\frac{\rho(x)}{p_T(x)})p_T(x)}\mathrm{d}x.
\end{aligned}
\end{equation}

As a result, the following dual function can be given as follows due to the linear independent structure of problem defined by~\Cref{eq:primalLagrangianFunction}:
\begin{equation}\label{appendix-eq:dualLagrangianFunction}
\begin{aligned}
    &g(u,w)\\ 
    =& 
    \inf_{\pi} \left\{
    \iint [\frac{1}{2\eta}\|x_t - x \|_2^2 - u(x_t) - w(x)]\pi(x_t, x)\mathrm{d}x_t\mathrm{d}x 
    + \int u(x_t)p(x_t)\mathrm{d}x_t \right\} \\
 &\quad \quad    + \inf_{\rho}  \left\{\int [w(x)\frac{\rho(x)}{p_T(x)} 
        + f(\frac{\rho(x)}{p_T(x)})] p_T(x)\mathrm{d}x \right\}  \\
    = & \inf_{\pi} 
    \left\{\iint [\frac{1}{2\eta}\|x_t - x \|_2^2 - u(x_t) - w(x) ]\pi(x_t, x) \mathrm{d}x_t\mathrm{d}x 
    + \int u(x_t)p(x_t)\mathrm{d}x_t\right\}
    - \int p_T(x)\, f^\star(-w(x))\,\mathrm{d}x ,
\end{aligned}
\end{equation}
where the last line uses the Legendre–Fenchel conjugate~\citep{shi2021sampling,touchette2005legendre,8890903}. Specifically, without loss of generality, we define the density ratio function $r(x)\coloneqq \frac{\rho(x)}{p_T(x)} $. On this basis, since $p_T(x) > 0$ always holds, we have the following results based on the definition of conjugate function for $f$:
\begin{equation}\label{appendix-eq:LF_conjugate}
\begin{aligned}
& \inf_{\rho}\indent   \int [w(x)\frac{\rho(x)}{p_T(x)}
      + f(\frac{\rho(x)}{p_T(x)})] p_T(x)\mathrm{d}x\\
= & \inf_{r }\indent   \int [w(x) r(x)
      + f(r(x))] p_T(x)\mathrm{d}x\\
=& \int  \inf_{r}\indent [ w(x)r(x) + f(r(x) )]p_T(x)\mathrm{d}x \\
=& - \int \sup_{r} [-w(x) r(x) - f(r(x)) ] p_T(x)\mathrm{d}x \\
= &- \int p_T(x)\, f^\star\!\big(-w(x)\big)\,\mathrm{d}x .
\end{aligned}
\end{equation}

Suppose that $\frac{1}{2\eta}\|x_t - x\|_2^2 - u(x_t) - w(x) < 0$ for some pair $(x_t, x)$. In this case, concentrating all the mass of $\pi(x_t,x)$ at this point drives the Lagrangian in~\Cref{appendix-eq:dualLagrangianFunction} to $-\infty$. To avoid such degenerate solutions, it is necessary to impose the condition $\frac{1}{2\eta}\|x_t - x\|_2^2 - u(x_t) - w(x) \ge 0$ almost everywhere. On this basis, the dual problem can be reformulated as
\begin{equation}\label{eq:reformulatedDualProblem}
\begin{aligned}
\sup_{u,w}\left\{ \int u(x_t)\,p(x_t)\,dx_t-\int p_T(x)\,f^\star(-w(x)) \mathrm{d}x \right\},
\quad
\mathrm{s.t.}\quad
u(x_t)+w(x)\le \frac{1}{2\eta}\|x_t-x\|_2^2 .
\end{aligned}
\end{equation}
Equivalently, we can introduce a convex indicator function $\ell(\cdot)$ to incorporate the constraint into the objective. Specifically, we define the convex indicator $\ell$ as follows:
\begin{equation*}
\ell [u(x_t)+w(x)\le \frac{1}{2\eta}\|x_t-x\|_2^2]
=
\begin{cases}
0, & u(x_t)+w(x)\le \frac{1}{2\eta}\|x_t-x\|_2^2,\\
+\infty, & \text{otherwise},
\end{cases}
\end{equation*}
so that infeasible pairs are ruled out. After that,~\Cref{eq:reformulatedDualProblem} can be equivalently written as follows:
\begin{equation}\label{eq:convexIndicatorResultsDualForm}
    \sup_{u,w} \left\{
    \int u(x_t)p(x_t)\,dx_t - \int p_T(x) f^\star(-w(x))\,dx
    - \ell[u(x_t)+w(x)\le \frac{1}{2\eta}\|x_t - x\|_2^2]\right\}.
\end{equation}

To justify the ensuing strong duality, note that if the inequality constraint is violated at some $(x_t,x)$, then by concentrating the coupling mass at such a point the corresponding primal value becomes unbounded from below, i.e., the dual feasible set must enforce $u(x_t)+w(x)\le \frac{1}{2\eta}\|x_t-x\|_2^2$ $\pi$-almost everywhere for admissible optimal couplings. In addition, since $\frac{1}{2\eta}\|x_t-x\|^2\ge 0$, there exists a feasible pair $(u,w)=(-1,-1)$ such that the constraint qualification of Fenchel–Rockafellar theorem~\cite{bauschke2017fenchel} holds (the feasible set is non-empty and the corresponding infimal convolution is well-posed). Therefore, strong duality follows.

Moreover, by complementary slackness, the inequality in \eqref{eq:reformulatedDualProblem} is tight on the support of the optimal coupling $\pi^*$. In other words, the slack $\frac{1}{2\eta}\|x_t-x\|_2^2-u^*(x_t)-w^*(x)$ is $\pi^*$-almost surely zero, which yields the following result:
\begin{equation}\label{eq:tightness}
\frac{1}{2\eta}\|x_t-x\|_2^2 = u^*(x_t)+w^*(x)\qquad \pi^*\text{-almost everywhere}.
\end{equation}
Combining~\Cref{eq:tightness} with the feasibility condition $u^*(x_t)+w^*(x)\le \frac{1}{2\eta}\|x_t-x\|_2^2$, we have:
\begin{equation*}
u^*(x_t)\le \frac{1}{2\eta}\|x_t-x\|_2^2-w^*(x),
\end{equation*}
and the tightness condition ensures equality is attained on the $\pi^*$-support. Therefore, we arrive at the following result:
\begin{equation}\label{appendix-eq:uStarInfProblem}
u^*(x_t)=\inf_x\left\{\frac{1}{2\eta}\|x_t-x\|_2^2-w^*(x)\right\}.
\end{equation}
Substituting~\Cref{appendix-eq:uStarInfProblem} into~\Cref{appendix-eq:dualLagrangianFunction}, we obtain the semi-dual formulation:
\begin{equation}\label{eq:convexIndicatorResultsDualForm2}
    \sup_{w} \left\{\int \inf_{x}\big[\frac{1}{2\eta}\|x_t - x\|_2^2 - w(x)\big]\,p(x_t)\,dx_t
    - \int p_T(x) f^\star(-w(x))\,dx\right\}.
\end{equation}
Defining the transport map $\boldsymbol{T}$ via the $c$-transform as follows:
\begin{equation*}\label{appendix-eq:cTransformResults0}
\boldsymbol{T}^*(x_t) \in \mathop{\arg\min}_{x} \left\{\frac{1}{2\eta}\|x_t - x\|_2^2 - w(x)\right\},
\end{equation*}
we can reformulate the inner problem of \Cref{eq:convexIndicatorResultsDualForm2} as follows:
\begin{equation}\label{appendix-eq:cTransformResults}
\inf_x   \left\{\frac{1}{2\eta}\|x_t - x\|_2^2 - w(x)\right\}
    =\frac{1}{2\eta} \|x_t - \boldsymbol{T}^*(x_t)\|_2^2 - w(\boldsymbol{T}^*(x_t)).
\end{equation}
As such, substituting~\Cref{appendix-eq:cTransformResults} into~\Cref{eq:convexIndicatorResultsDualForm2}, we obtain the final semi-dual objective as follows:
\begin{equation}\label{eq:semidualPerfectAssume}
\mathcal{L}^{\text{SemiDual}}
= 
{\sup_{w}  \mathbb{E}_{p(x_t)}[ \Vert\frac{1}{2\eta} \boldsymbol{T}^*(x_t) - x_t \Vert_2^2  - w
(\boldsymbol{T}^*(x_t)) ]} 
-\mathbb{E}_{p_T(x)}[f^\star(-w(x))],
\end{equation}
It should be pointed out that there is no closed-form expression of the optimal $\boldsymbol{T}^*(x_t)$ for each $w(x)$~\citep{korotinneural,choi2023generative}. Consequently, the optimization $\boldsymbol{T}(x_t)$ for each $w(x)$ is required, and we finally reach the final semi-dual objective as follows based on~\Cref{eq:semidualPerfectAssume}:
   \begin{equation*}
\mathcal{L}^{\text{SemiDual}}
= 
{\sup_{w} \mathbb{E}_{p(x_t)} \{ \inf_{\boldsymbol{T}} [\frac{1}{2\eta}\Vert \boldsymbol{T}(x_t) - x_t \Vert_2^2  - w
(\boldsymbol{T}(x_t)) ] \} } 
-\mathbb{E}_{p_T(x)}[f^\star(-w(x))].
\end{equation*} 
\end{proof}

\subsection{Derivation of Proposition~\ref{thm:notHaveUniqueSolution}}

\begin{proposition*}[\ref{thm:notHaveUniqueSolution}]
Consider the semi-dual objective in~\Cref{eq:dualLearningObjective} and assume it admits at least one optimizer $w$. There exist $a\in\mathbb{R}^\mathrm{D}$ and two points $b_1,b_2\in\mathbb{R}^\mathrm{D}$ such that $\|a-b_1\|_2=\|a-b_2\|_2$. Let $p_\gamma(x_t)=\mathcal{N}(a,\gamma I)$ and $p_{T,\gamma}(x)=\tfrac12\mathcal{N}(b_1,\gamma I)+\tfrac12\mathcal{N}(b_2,\gamma I)$. Then, for all sufficiently small $\gamma>0$, there exists a semi-dual optimizer $w_\gamma^*$ and a point $x_t^\gamma\in\mathbb{R}^\mathrm{D}$ (with $x_t^\gamma\to a$ as $\gamma\to 0$) such that $\arg\min_{x\in\mathbb{R}^\mathrm{D}}\{c(x_t^\gamma,x)-w_\gamma^*(x)\}$ is not a singleton. Equivalently, the inner $c$-transform (and thus $\boldsymbol T^*(x_t^\gamma)$) is non-unique.
\end{proposition*}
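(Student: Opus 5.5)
The plan is to combine a symmetry argument with the first-order structure of the semi-dual problem in Proposition~\ref{thm:firstThmResultsDualProblem}. Take $c(x_t,x)=\frac{1}{2\eta}\|x_t-x\|_2^2$. Choose $a=0$ and $b_2=-b_1\neq 0$, so that $\|a-b_1\|_2=\|a-b_2\|_2$ holds automatically. The configuration is then invariant under the reflection $R$ across the hyperplane orthogonal to $b_1$: $R$ fixes $a$ and swaps $b_1\leftrightarrow b_2$. Because $p_\gamma$ and $p_{T,\gamma}$ are both $R$-invariant, $w\mapsto w\circ R$ maps optimizers of~\Cref{eq:dualLearningObjective} to optimizers.

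The semi-dual objective is concave in $w$: the $c$-transform $w^c$ is an infimum of affine functionals of $w$, and $-f^\star(-w)$ is concave. Hence $w_\gamma^*:=\tfrac12(w+w\circ R)$ is again an optimizer and is $R$-symmetric. I will work with this $w_\gamma^*$. For a suitable class of potentials (restricted to a compact set, which is harmless after truncation because the Gaussians concentrate), I would also argue that $w_\gamma^*$ is $c$-concave and locally Lipschitz.

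Next I show that the optimal transport cannot send the mass near $a$ to a single cluster. First-order optimality in $w$ gives that the pushforward $\rho^*=(\boldsymbol T^*)_\#p_\gamma$ satisfies $\rho^*/p_{T,\gamma}=(f^\star)'(-w_\gamma^*)$. For small $\gamma$ this is an honest density ratio on both Gaussian bumps, so $\rho^*$ places mass near $b_1$ and also near $b_2$. By the symmetry, it places mass $\tfrac12$ near each bump, up to $o(1)$ as $\gamma\to0$.

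Now set $A_i=\{x_t:\arg\min_x\{c(x_t,x)-w_\gamma^*(x)\}\cap B(b_i,r)\neq\emptyset\}$. Each $A_i$ is closed because $w_\gamma^*$ is upper semicontinuous and the argmin correspondence is upper hemicontinuous. Both sets are nonempty, and both contain points within $O(\sqrt\gamma)$-concentration of $a$. Together with the complementary set where the minimizer lands elsewhere, they cover $\mathbb R^{\mathrm D}$. The key step is connectedness: $\mathbb R^{\mathrm D}$, or a small ball around $a$ that meets both $A_1$ and $A_2$, cannot be written as a disjoint union of two nonempty closed sets. So there must be a point $x_t^\gamma$ in $A_1\cap A_2$, or a boundary point of the third region. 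At such a point the argmin contains two distinct minimizers, and so is not a singleton. By $R$-symmetry, the natural candidate is the fixed hyperplane of $R$. Choosing $x_t^\gamma$ as the point of that hyperplane nearest to $a$ inside the ball gives $x_t^\gamma\to a$.

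The main obstacle is excluding the third region, that is, ruling out that points near $a$ on the symmetry hyperplane are transported to some middle point $x\approx a$ with a unique minimizer. To handle this, I would use that $\rho^*$ has negligible mass away from the two bumps, since $f^\star$-optimality forces $\rho^*\ll p_{T,\gamma}$. I would combine this with the semiconcavity of $x_t\mapsto w^{c}(x_t)$. Its superdifferential is single-valued exactly where the minimizer is unique, and $\nabla w^c(x_t)=(x_t-\boldsymbol T^*(x_t))/\eta$. A unique minimizer everywhere would make $\boldsymbol T^*$ continuous on a connected neighborhood of $a$ carrying positive mass. By the continuity/intermediate-value argument above, the continuous image would then have to travel between the bumps, contradicting the negligible mass of $\rho^*$ between the bumps once $\gamma$ is small.
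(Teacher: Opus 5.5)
\paragraph{Where the proposal breaks.} Your symmetrization is sound. It is the same device the paper uses, but the paper applies it only to the Dirac configuration $p=\updelta_a$, $\rho=\tfrac12\updelta_{b_1}+\tfrac12\updelta_{b_2}$, where $\bar w^*(b_1)=\bar w^*(b_2)$ gives an exact tie, and then passes to $\gamma>0$ by a perturbation argument. Your connectedness argument would also produce a point of $A_1\cap A_2$ if the third region were empty.

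The gap is the step meant to exclude that third region. Continuity of $\boldsymbol T^*$ on a connected neighbourhood of $a$ only forces its image to be connected, so the image must \emph{pass through} the region between the bumps. It does not force $\rho^*=\boldsymbol T^*_{\#}p_\gamma$ to put non-negligible mass there. The $\rho^*$-mass of the middle region equals the $p_\gamma$-mass of its preimage. That preimage can be a thin slab around the fixed hyperplane $H$ of $R$, with $p_\gamma$-mass exactly as small as the small but positive mass $\rho^*$ carries between the bumps. So ``negligible mass'' yields no contradiction.

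You would need $\rho^*$ to vanish on an open set separating the bumps. Then a continuous $\boldsymbol T^*$ would map the connected space $\mathbb{R}^{\mathrm D}$ into a disconnected $\operatorname{supp}\rho^*$ while hitting both components, which is impossible. The condition $\rho^*\ll p_{T,\gamma}$ gives nothing of this kind. Your fallback is also unjustified: at a boundary point of the third region, upper hemicontinuity only gives one minimizer in $\overline{B}(b_i,r)$ and one outside $B(b_i,r)$, and these may coincide on the sphere.

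\paragraph{Why the step cannot be repaired as stated.} Take the KL generator, $f^\star(z)=e^{z-1}$. The optimality condition you wrote gives $\rho^*=e^{-w^*-1}p_{T,\gamma}>0$ everywhere. Now work in $\mathrm D=1$ and set $S(x_t)=\arg\min_x\{c(x_t,x)-w^*(x)\}$. This correspondence is monotone: adding the two minimality inequalities gives $(x_t-x_t')(y-y')\ge 0$ for $y\in S(x_t)$ and $y'\in S(x_t')$.

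Suppose $S(x_t)\ni y_1<y_2$. The optimal plan is concentrated on the graph of $S$, so by monotonicity only the single point $x_t$ can send mass into $(y_1,y_2)$. Hence $\rho^*((y_1,y_2))\le p_\gamma(\{x_t\})=0$, contradicting positivity of $\rho^*$. So for every $\gamma>0$ and every optimizer, each argmin is a singleton.

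In this 1D symmetric setting, the symmetric point $a$ is sent to the middle point $a$. That is exactly the third-region scenario you wanted to rule out. In higher dimension, Caffarelli-type regularity gives the same conclusion, because $\rho^*$ is locally bounded away from zero on the convex set $\mathbb{R}^{\mathrm D}$.

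So no argument along these lines can close the gap without an extra hypothesis forcing $\rho^*$ to vanish between the bumps. An exact tie genuinely exists only in the Dirac limit. The paper's own passage to $\gamma>0$ does not supply the missing step either. It only asserts that a mollified, approximately symmetric potential keeps the two wells ``nearly tied''. That does not yield two exact minimizers, and the mollified potential is not shown to be an optimizer of the $\gamma$-problem.
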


\begin{proof}
Let us first consider the discrete case. Fix $a,b_1,b_2$ with $r\coloneqq \|a-b_1\|_2=\|a-b_2\|_2$ and consider the following Dirac delta measure $\updelta$:
\begin{equation*}
p=\updelta_a,\qquad \rho=\frac12\updelta_{b_1}+\frac12\updelta_{b_2}.
\end{equation*}
The discrete inner problem reduces to $\min\{\,c(a,b_1)-w(b_1),\ c(a,b_2)-w(b_2)\,\}$ up to constants.
Let $S$ be the exchange map swapping $b_1\leftrightarrow b_2$. Since
$c(a,b_1)=c(a,b_2)$ and $\rho(b_1)=\rho(b_2)$, the semi-dual is invariant under exchanging $b_1$ and $b_2$. Therefore, for any optimizer $w^*$, the symmetrized potential
\begin{equation*}
\bar w^*(x):=\frac{1}{2}[w^*(x)+w^*(Sx)]
\end{equation*}
is also an optimizer. 

In particular, $\bar w^*(b_1)=\bar w^*(b_2)$. Hence, we have $c(a,b_1)-\bar w^*(b_1)=c(a,b_2)-\bar w^*(b_2)$, so both $b_1$ and $b_2$ are minimizers of the inner objective. Thus the discrete inner $\arg\min$ is not a singleton.

Now consider the continuous case and define
\begin{equation*}
p_\gamma(x_t)=\mathcal{N}(a,\gamma I),\qquad 
p_{T,\gamma}(x)=\frac12\mathcal{N}(b_1,\gamma I)+\frac12\mathcal{N}(b_2,\gamma I).
\end{equation*}
Let $x_t^\gamma\sim p_\gamma$ and let $w_\gamma^*$ be a semi-dual optimizer.
Since $p_\gamma,p_{T,\gamma}$ converge weakly to the discrete measures as $\gamma\to 0$,
the expected semi-dual objective is a continuous perturbation of the discrete one when $w$ is continuous.
Moreover, for a fixed $w$, the function
$x\mapsto c(x_t^\gamma,x)-w(x)$ is continuous in $x$, so its minimizers cannot disappear abruptly under the small perturbations induced by $\gamma$:
there exist small neighborhoods $U_1$ and $U_2$ of $b_1$ and $b_2$ such that, for all sufficiently small $\gamma$,
every global minimizer must lie in $U_1\cup U_2$ and the two symmetric wells remain nearly tied.

To make this explicit, take the symmetric discrete optimizer $\bar w^*$ from above and build a smooth (e.g., mollified) approximation $w_\gamma^*$ that remains approximately symmetric, so that $w_\gamma^*(b_1)\approx w_\gamma^*(b_2)$. Then for $x_t^\gamma$ close to $a$, the inner values at $x\approx b_1$ and $x\approx b_2$ stay approximately equal, implying that the inner minimizer set contains at least one point in $U_1$ and at least one point in $U_2$. Therefore, $\arg\min_{x}\{c(x_t^\gamma,x)-w_\gamma^*(x)\}$ contains at least two minimizers. Hence the inner $c$-transform is non-unique, which implies that $\boldsymbol T^*(x_t^\gamma)$ is non-unique. \end{proof}

\subsection{Analysis and Discussions on the Entropic Problem}
\subsubsection{Derivation of Proposition~\ref{thm:entropyRegularizedResults}}

\begin{proposition*}[\ref{thm:entropyRegularizedResults}]
Let $\kappa(x_t,x)\coloneqq p(x_t) p_T(x)$ denote the reference joint PDF. The entropy-regularized primal problem is
\begin{equation}\label{appendix-eq:primalMinimumMovementSchemeEntropy}
\begin{aligned}
\mathcal{L}^{\text{E-Primal}}
=\mathop{\arg\min}_{\rho\in\mathcal{P}_2(\mathbb{R}^{\mathrm{D}})}
 \frac{1}{2\eta}\,\mathcal{W}_2^2(\rho(x), p(x_t))
+ \mathbb{D}_f [\rho(x),p_T(x) ]  + \epsilon \iint \pi(x_t,x) [\log \frac{\pi(x_t,x)}{\kappa(x_t,x)}] \mathrm{d}x_t \mathrm{d}x ,
\end{aligned}
\end{equation}
and is equivalent to the semi-dual optimization problem
\begin{equation}\label{appendix-eq:dualLearningObjectiveEntropy}
\begin{aligned}
\mathcal{L}^{\text{E-SemiDual}}
=\sup_{w}&
-\epsilon\,\mathbb{E}_{p(x_t)}[
\log \mathbb{E}_{p_T(x)}(
\exp(\frac{w(x)-\frac{1}{2\eta}\|x-x_t\|_2^2}{\epsilon})
)] -\mathbb{E}_{p_T(x)}[f^\star(-w(x))],
\end{aligned}
\end{equation}
where $f^\star$ denotes the convex conjugate of $f$.
\end{proposition*}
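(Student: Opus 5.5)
I will follow the same Lagrangian route as in the proof of Proposition~\ref{thm:firstThmResultsDualProblem}, with the entropy term taking over the role of the hard constraint $u(x_t)+w(x)\le \frac{1}{2\eta}\|x_t-x\|_2^2$. As there, I first rewrite the primal \Cref{appendix-eq:primalMinimumMovementSchemeEntropy} as a joint minimization over the coupling $\pi\ge 0$ and the free marginal $\rho$. The constraints are $\int \pi(x_t,x)\,\mathrm{d}x=p(x_t)$ and $\int\pi(x_t,x)\,\mathrm{d}x_t=\rho(x)$. The objective becomes linear cost plus $\epsilon\,\mathrm{KL}$-type entropy plus $\mathbb{D}_f[\rho,p_T]$, and it is jointly convex in $(\pi,\rho)$. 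I introduce multipliers $u(x_t)$ and $w(x)$ exactly as in \Cref{eq:primalLagrangianFunction}. The $\rho$-part then separates and, by the pointwise Legendre–Fenchel computation \Cref{appendix-eq:LF_conjugate}, produces $-\mathbb{E}_{p_T}[f^\star(-w(x))]$. This step is reused verbatim.

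The new step is the infimum over $\pi$ of
\begin{equation*}
\iint\Big[\tfrac{1}{2\eta}\|x_t-x\|_2^2-u(x_t)-w(x)\Big]\pi\,\mathrm{d}x_t\mathrm{d}x+\epsilon\iint\pi\Big[\log\tfrac{\pi}{\kappa}-1\Big]\mathrm{d}x_t\mathrm{d}x .
\end{equation*}
It is strictly convex and pointwise in $\pi$. Setting the first variation to zero gives the Gibbs form
\begin{equation*}
\pi^*(x_t,x)=\kappa(x_t,x)\exp\Big(\tfrac{u(x_t)+w(x)-\frac{1}{2\eta}\|x_t-x\|_2^2}{\epsilon}\Big).
\end{equation*}
(This is where the ``$-1$'' in the regularizer is convenient.) Plugging $\pi^*$ back in, the dual function becomes
\begin{equation*}
g(u,w)=\mathbb{E}_{p(x_t)}[u]-\epsilon\iint\pi^*\,\mathrm{d}x_t\mathrm{d}x-\mathbb{E}_{p_T}[f^\star(-w)].
\end{equation*}
This is a smooth unconstrained problem; the hard constraint of \Cref{eq:reformulatedDualProblem} has been replaced by a soft exponential penalty.

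To pass to the semi-dual, I eliminate $u$ by maximizing $g$ over $u$ pointwise in $x_t$. Writing $\kappa=p(x_t)p_T(x)$, the first-order condition is
\begin{equation*}
p(x_t)=p(x_t)\,e^{u(x_t)/\epsilon}\,\mathbb{E}_{p_T(x)}\Big[\exp\Big(\tfrac{w(x)-\frac{1}{2\eta}\|x-x_t\|_2^2}{\epsilon}\Big)\Big].
\end{equation*}
This gives the soft $c$-transform
\begin{equation*}
u^*(x_t)=-\epsilon\log\mathbb{E}_{p_T}\Big[\exp\Big(\tfrac{w(x)-\frac{1}{2\eta}\|x-x_t\|_2^2}{\epsilon}\Big)\Big].
\end{equation*}
It is the entropic analogue of \Cref{appendix-eq:uStarInfProblem}. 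Substituting back, the mass term $\epsilon\iint\pi^*$ equals the constant $\epsilon$ and can be dropped. What remains is exactly \Cref{appendix-eq:dualLearningObjectiveEntropy}. Finally, flipping the sign, $\sup_w(\cdot)=-\inf_w(-\cdot)$, yields the $\inf$-form \Cref{eq:dualLearningObjectiveEntropy} stated in the main text.

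The main obstacle is justifying strong duality, i.e.\ that the sup over $(u,w)$ equals the primal infimum and not merely bounds it. I would argue via Fenchel–Rockafellar, as in Proposition~\ref{thm:firstThmResultsDualProblem}. The primal is a sum of a convex lower-semicontinuous function of $\pi$ (the entropy plus linear cost, with $\|x_t-x\|^2$ integrable under finite second moments) and a convex function of $\rho$. The constraint map $\pi\mapsto(\text{marginals})$ is linear. The product coupling $\pi=p(x_t)p_T(x)$ with $\rho=p_T$ is strictly feasible with finite value, since $\mathbb{D}_f[p_T,p_T]=0$ and the cost has finite second moment. This supplies the qualification condition. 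Attainment of the primal then follows from strict convexity and weak lower semicontinuity of the relative entropy on tight sets.
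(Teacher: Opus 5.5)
Your derivation follows the paper's proof step for step: the same Lagrangian with multipliers $u,w$, the pointwise Legendre--Fenchel computation for the $\rho$-part, the Gibbs stationarity condition $\pi^*=\kappa\exp\big((u+w-\tfrac{1}{2\eta}\|x_t-x\|_2^2)/\epsilon\big)$, pointwise maximization in $u$ giving the soft $c$-transform $u^*(x_t)=-\epsilon\log\mathbb{E}_{p_T(x)}\big[\exp\big((w(x)-\tfrac{1}{2\eta}\|x-x_t\|_2^2)/\epsilon\big)\big]$, and discarding the unit-mass constant. One thing to add: the appendix primal has no $-1$, so you should say, as the paper does in its step (i), that inserting it shifts the objective only by $-\epsilon$ because $\iint\pi=1$, and this shift exactly cancels the constant you drop on the dual side.

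Your Fenchel--Rockafellar paragraph goes beyond the paper, which does not address strong duality for this proposition at all, but its qualification step is too quick. A feasible point with finite value, such as $(\rho,\pi)=(p_T,\kappa)$, does not suffice in infinite dimensions: one of the two convex functions must be continuous at a common point of their domains. That fails on the measure side, where the entropy and the marginal constraint have domains with empty interior. It does hold on the dual side, where $\varphi\mapsto\epsilon\iint e^{(\varphi(x_t,x)-\frac{1}{2\eta}\|x_t-x\|_2^2)/\epsilon}\,\mathrm{d}\kappa$ is continuous on bounded continuous functions.
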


\begin{proof}
Without loss of generality, we first define the quadratic function $c(x_t, x)$ as follows to facilitate derivation:
\begin{equation}
    c(x_t,x) \coloneqq  \frac{1}{2\eta}\,\|x_t-x\|_2^2.
\end{equation}
Based on this,~\Cref{appendix-eq:primalMinimumMovementSchemeEntropy} can be reformulated as follows:
\begin{equation}\label{eq:appendix-EntropyregularizationResult}
\begin{aligned}
& \begin{aligned}
\mathop{\arg\min}_{\rho,\pi}
\iint{c(x_t,x) \pi(x_t,x) \mathrm{d}x\mathrm{d}x_t}
+ \int{f(\frac{\rho(x)}{p_T(x)})p_T(x) \mathrm{d} x }
+ \epsilon \iint \pi(x_t,x) \log \frac{\pi(x_t,x)}{\kappa(x_t,x)}  \mathrm{d}x_t \mathrm{d}x ,
\end{aligned}\\
& \overset{\text{(i)}}{\Rightarrow} \\
& \begin{aligned}
\mathop{\arg\min}_{\rho,\pi}
\iint{c(x_t,x) \pi(x_t,x) \mathrm{d}x\mathrm{d}x_t}
+ \int{f(\frac{\rho(x)}{p_T(x)})p_T(x) \mathrm{d} x }
+ \epsilon \iint \pi(x_t,x) [\log \frac{\pi(x_t,x)}{\kappa(x_t,x)}  -1] \mathrm{d}x_t \mathrm{d}x ,
\end{aligned}\\
& \overset{\text{(ii)}}{\Rightarrow} \\
&  \begin{aligned}
\mathop{\arg\min}_{\rho,\pi}
\iint c(x_t,x) \pi(x_t,x) \mathrm{d}x&\mathrm{d}x_t
+ \int{f(\frac{\rho(x)}{p_T(x)})p_T(x) \mathrm{d} x }
+ \epsilon \iint \pi(x_t,x) [\log \frac{\pi(x_t,x)}{\kappa(x_t,x)}  -1] \mathrm{d}x_t \mathrm{d}x \\
& + \int u(x_t)[p(x_t) - \int \pi(x_t,x)\mathrm{d}x] \mathrm{d}x_t + \int w(x)[\rho(x) - \int \pi(x_t,x) \mathrm{d}x_t] \mathrm{d}x,
\end{aligned} \\
& \Rightarrow \\
& 
\begin{aligned}
\mathop{\arg\min}_{\rho,\pi}
\iint [c(x_t,x) - u(x_t) - w(x) +  &\epsilon\log \frac{\pi(x_t,x)}{\kappa(x_t,x)} -\epsilon] \pi(x_t,x) \mathrm{d}x\mathrm{d}x_t 
\\
& + \int u(x_t)p(x_t)  \mathrm{d}x_t + \int [w(x)\rho(x) + f(\frac{\rho(x)}{p_T(x)}) p_T(x) ]\mathrm{d}x,  
\end{aligned}
\end{aligned}
\end{equation}
where `(i)' is based on the following constraint for transportation plan $\pi(x_t,x)$:
\begin{equation*}
    \iint \pi(x_t,x)\mathrm{d}x\mathrm{d}x_t = 1,
\end{equation*}
`(ii)' is the Lagrangian of~\Cref{appendix-eq:primalMinimumMovementSchemeEntropy}, $u:\mathbb{R}^{\mathrm{D}}\to\mathbb{R}$ is the Lagrange multiplier for margin $p(x_t)$, and $v:\mathbb{R}^{\mathrm{D}}\to\mathbb{R}$ is the Lagrange multiplier for margin $\rho(x)$. 

To facilitate analysis, we define Lagrangian $\mathcal{L}$ as follows:
\begin{equation}
    \begin{aligned}
\mathcal{L}\coloneqq \iint [c(x_t,x) - u(x_t) - w(x) +  &\epsilon\log \frac{\pi(x_t,x)}{\kappa(x_t,x)} -\epsilon] \pi(x_t,x) \mathrm{d}x\mathrm{d}x_t 
\\
& + \int u(x_t)p(x_t)  \mathrm{d}x_t + \int [w(x)\rho(x) + f(\frac{\rho(x)}{p_T(x)}) p_T(x) ]\mathrm{d}x.
\end{aligned}
\end{equation}
For $\mathcal{L}$, we have the following results for the optimal transportation plan $\pi^*(x_t,x)$:
\begin{equation}\label{eq:appendix-OptimumPlanResultEnt}
\begin{aligned}
\frac{\delta \mathcal{L}}{\delta  \pi} =  [c(x_t,x) - u(x_t) - w(x) +  \epsilon\log \frac{\pi(x_t,x)}{\kappa(x_t,x)} -\epsilon] +\epsilon =0
  \Rightarrow    \pi^*(x_t,x) = \kappa(x_t,x) \exp(-\frac{c(x_t,x) - u(x_t) - w(x)}{\epsilon}).
\end{aligned}
\end{equation}

Consequently, we can define the objective function for $u$ and $w$ according to~\Cref{eq:appendix-EntropyregularizationResult,eq:appendix-OptimumPlanResultEnt,appendix-eq:LF_conjugate}:
\begin{equation}
      g(u,w)\coloneqq -\epsilon\iint{\kappa(x_t,x) \exp(-\frac{c(x_t,x) - u(x_t) - w(x)}{\epsilon})\mathrm{d}x_t\mathrm{d}x} + \int{u(x_t)p(x_t)\mathrm{d}x_t}  - \int p_T(x) f^\star (-w(x))\,\mathrm{d}x.
\end{equation}
On this basis, $u(x_t)$ and $w(x)$ are determined by solving the following duality problem:
\begin{equation}
    u^*,w^*=\sup_{u,w} \quad g(u,w).
\end{equation}
To facilitate derivation, we define $Z(x_t)$ as follows:
\begin{equation}
    Z(x_t) \coloneqq\mathbb E_{p_T(x)}[\exp(\frac{w(x)-c(x_t,x)}{\epsilon})]
=\int p_T(x)\exp(\frac{w(x)-c(x_t,x)}{\epsilon})\mathrm{d}x.
\end{equation}
On this basis, $g(u,w)$ can be reformulated as follows:
\begin{equation}
    g(u,w) = -\epsilon\int p(x_t)\exp(\frac{u(x_t)}{\epsilon}) Z(x_t) \mathrm{d}x_t + \int{u(x_t)p(x_t)\mathrm{d}x_t}  - \int p_T(x) f^\star (-w(x))\,\mathrm{d}x.
\end{equation}
For $u^*(x_t)$, we have:
\begin{equation}
\begin{aligned}
    & u^*(x_t) = \sup_{u} \int p(x_t)[-\epsilon \exp(\frac{u(x_t)}{\epsilon})Z(x_t)+u(x_t)] 
   \Rightarrow  -\exp(\frac{u(x_t)}{\epsilon})Z(x_t) + 1 = 0
   \Rightarrow
   u^*(x_t) = -\epsilon\log Z(x_t).
\end{aligned}
\end{equation}
As such, $g(u,w)$ can be further reformulated as follows:
\begin{equation}\label{eq:appendix-FinalStepOptimalTransportSemiDual}
\begin{aligned}
   & g(u,w) = -\epsilon\underbrace{\int p(x_t)\cancel{\exp[-\log Z(x_t)]}\cancel{ Z(x_t)} \mathrm{d}x_t}_{=1} + \int{[-\epsilon\log Z(x_t)]p(x_t)\mathrm{d}x_t}  - \int p_T(x) f^\star (-w(x)) \mathrm{d}x, \\
   \Rightarrow & g(u,w) = -\epsilon \int[ \log\mathbb{E}_{p_T(x)}(\exp(\frac{w(x)-c(x_t,x)}{\epsilon})) ] p(x_t)\mathrm{d}x_t  - \int p_T(x) f^\star (-w(x)) \mathrm{d}x, \\
   \Rightarrow & g(u,w) = -\epsilon \mathbb{E}_{p(x_t)}[ \log\mathbb{E}_{p_T(x)}(\exp(\frac{w(x)-c(x_t,x)}{\epsilon})) ]   - \int p_T(x) f^\star (-w(x)) \mathrm{d}x , \\
   \Rightarrow & g(u,w) = -\epsilon \mathbb{E}_{p(x_t)}[ \log\mathbb{E}_{p_T(x)}(\exp(\frac{w(x)-\frac{1}{2\eta}\| x - x_t\|_2^2}{\epsilon})) ]    - \mathbb{E}_{p_T(x)}[ f^\star (-w(x))]
   .
\end{aligned}
\end{equation}
By taking the supremum to the last line of~\Cref{eq:appendix-FinalStepOptimalTransportSemiDual}, we obtain the entropic semi-dual objective function, denoted as $\mathcal{L}^{\text{E-SemiDual}}$, which is defined in~\Cref{appendix-eq:dualLearningObjectiveEntropy}.
\end{proof}

\subsubsection{Comparison of Primal and Dual Formulations}\label{subsubsec:PrimalDualComparison}
In this part, we perform a comparative experiment to investigate the relationship between different primal and dual formulations. To this end, we consider a two-dimensional toy example consisting of two Gaussian clusters. The source samples are generated from a Gaussian mixture distribution, whereas the target samples are obtained by rotating the source distribution, enlarging the variance of one target component, and applying a global translation. The source and target samples are illustrated in~\Cref{fig:otProblemExampleResults}. Based on this toy example, we consider the following entropically regularized OT problem:
\begin{subequations}
\begin{align}\label{eq:eOTPlan}
     \inf_{\pi }
 \iint & \pi^\top(x_t,x_s)  \|x_s-x_t\|_2^2 \mathrm{d}x_s\mathrm{d}x_t
    + \epsilon \iint \pi_{ij} (\log \pi_{ij}  -1) \mathrm{d}x_s\mathrm{d}x_t,\\
\text{s.t.}\quad
  &\int \pi(x_t,x_s) \mathrm{d}x_s = p(x_t),\qquad  \int \pi(x_t,x_s) \mathrm{d}x_t = p(x_s).
\end{align} 
\end{subequations}

\begin{figure}[!h]
    \centering
    \includegraphics[width=0.333
    \linewidth]{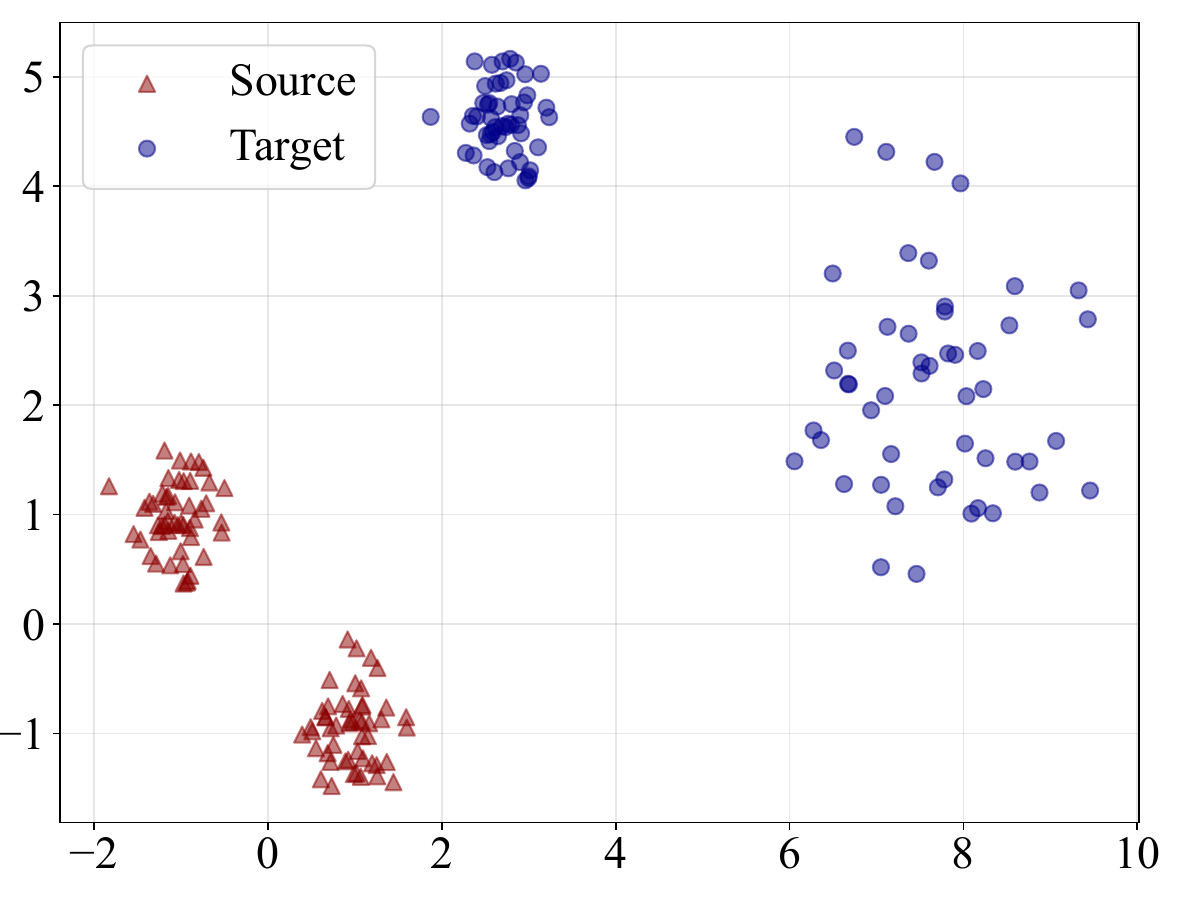}
    \caption{The illustration for the source and target samples.}
    \label{fig:otProblemExampleResults}
\end{figure}

Notably, the term 
$\iint\pi(x_t,x_s)\log \left[p(x_s)p(x_t)\right] \mathrm{d}x_s\mathrm{d}x_t$ is omitted from the objective in~\Cref{eq:eOTPlan}. Since the marginals $p(x_s)$ and $p(x_t)$ are fixed, this term is constant with respect to the transport plan $\pi$ and therefore does not affect the optimization. In practice, we solve the discrete empirical counterpart in~\Cref{eq:discreteEntropicOT,eq:discreteEntropicOTConstraint}, where $n$ denotes the empirical sample size.
\begin{subequations}
\begin{align}
\inf_{\pi } \quad
& \sum_{i=1}^{n}\sum_{j=1}^{n}
\pi_{ij} \Vert x_s^{(i)}-x_t^{(j)}\Vert_2^2
+ \epsilon \sum_{i=1}^{n}\sum_{j=1}^{n}
\pi_{ij}\left(\log \pi_{ij}-1\right), \label{eq:discreteEntropicOT}\\
\text{s.t.}\quad 
& \sum_{j=1}^{n} \pi_{ij} = \frac{1}{n}, 
\quad \forall i\in\{1,\ldots,n\},\qquad  \sum_{i=1}^{n} \pi_{ij} = \frac{1}{n}, 
\qquad j\in\{1,\ldots,n\}.\label{eq:discreteEntropicOTConstraint}
\end{align}
\end{subequations}

We compare the solutions of the entropic OT problem in~\Cref{eq:discreteEntropicOT,eq:discreteEntropicOTConstraint} obtained from primal and dual perspectives. For the primal formulation, we employ the Sinkhorn iteration, as summarized in~\Cref{alg:sinkhorn}, following~\citet{sinkhornDivergenceCuturi}, where $\oslash$ denotes element-wise division. In~\Cref{alg:sinkhorn}, we set $\mathcal{E}$ as 1000. For the dual formulation, we directly optimize the dual objective in~\Cref{eq:dualEntropicLossAdamOptimizer} using the Adam optimizer~\citep{kingma2014adam}. To assess the quality of the primal and dual solutions, we take the Earth Mover's Distance (EMD) solution as the ground-truth reference. The solution accuracy and computational time across different sample sizes are summarized in~\Cref{tab:ot_comparison}. Furthermore, the convergence behavior of the dual optimization, measured by the iteration loss under different sample sizes, is illustrated in~\Cref{fig:toyDualConvCompare}.
\begin{algorithm}[!h]
\caption{Sinkhorn Iteration for Entropic Optimal Transport}
\label{alg:sinkhorn}

\textbf{Input:} Source samples $\{x_s^{(i)}\}_{i=1}^{n}$, target samples $\{x_t^{(j)}\}_{j=1}^{n}$, 
marginal weights $a \in [\frac{1}{n}, \ldots, \frac{1}{n}]^\top$ and $b \in  [\frac{1}{n}, \ldots, \frac{1}{n}]^\top$, regularization parameter $\epsilon>0$, maximum iterations $\mathcal{E}$.
\\
\textbf{Output:} Transportation plan $\pi$. 
\begin{algorithmic}[1]


\STATE Construct the Gibbs kernel: $ \mathcal{K} (x_s,x_t)\leftarrow \exp(\dfrac{\|x_s - x_t\|_2^2}{\epsilon })$
\STATE Initialize $v^{(0)} \leftarrow  \mathbf{1}_{n_t}$ and $u^{(0)} \leftarrow \mathbf{1}_{n_t}$.
\FOR{$t=0,1,\ldots,\mathcal{E}-1$}
    \STATE $u_{t+1}(x_s) = a \oslash [ \mathcal{K}(x_s,x_t)  w_{t}(x_t)]$.
    \STATE $w_{t+1}(x_t) = b \oslash [\mathcal{K}^\top(x_s,x_t) u_{t+1}(x_s)]$.
\ENDFOR
\STATE $\pi^\epsilon \leftarrow \mathrm{diag}[u_{\mathcal{E}}(x_s)] \mathcal{K} \mathrm{diag}[w_{\mathcal{E}}(x_t)]$.
\end{algorithmic}
\end{algorithm}

\begin{equation}\label{eq:dualEntropicLossAdamOptimizer}
\mathop{\arg\max}_{u,v}\indent 
\mathcal{L}^{\text{Dual}}(u,v)
=
\frac{1}{n}\sum_{i=1}^{n} u^{(i)}(x_s)
+
\frac{1}{n}\sum_{j=1}^{n} w^{(j)}(x_t)
-
\epsilon \frac{1}{n\times n}
\sum_{i=1}^{n}\sum_{j=1}^{n}
\exp(
\frac{u^{(i)}(x_s)+w^{(j)}(x_t)-\Vert x_s^{(i)}-x_t^{(j)}\Vert_2^2}{\epsilon}
).
\end{equation}


From~\Cref{tab:ot_comparison}, we observe that the computational time for the dual strategy is slightly higher than that for the primal strategy. This is expected because the primal problem is solved by Sinkhorn iteration, which is specifically designed for entropically regularized OT. By exploiting the scaling form of the optimal coupling, Sinkhorn iteration only requires alternating matrix-vector products and element-wise divisions. By contrast, the dual strategy is optimized with Adam, a general-purpose gradient-based optimizer, which involves gradient computation and may require more iterations to achieve stable convergence. Meanwhile, the OT cost obtained by the dual strategy is lower than that obtained by the primal strategy. Although the primal and dual formulations are theoretically equivalent under strong duality, their practical solutions differ due to numerical precision and optimization effects. In the primal approach, the Sinkhorn algorithm operates directly on the Gibbs kernel $\mathcal{K} = \exp(\dfrac{\|x_s - x_t\|_2^2}{\epsilon })$, and each iteration involves repeated element-wise multiplication and division of exponentially scaled 
quantities. When the regularization parameter $\epsilon$ is small relative to the entries of the cost matrix $C$, the kernel entries $\mathcal{K}_{ij} =\exp(\dfrac{\|x_s^{(i)} - x_t^{(j)}\|_2^2}{\epsilon })$ approach zero and suffer from floating-point underflow, which prevents the marginal constraints from being exactly satisfied at termination. Consequently, the Sinkhorn algorithm terminates with a small but nonzero marginal violation, which can introduce bias into the recovered transport plan and inflate the resulting OT cost. In contrast, the dual approach directly optimizes the dual potentials $u$ and $v$ using optimizer, updating them in an additive manner that avoids the multiplicative scaling instability inherent in Sinkhorn iteration. Although exponential terms still appear in the dual objective, the optimization operates in the non-exponentiated space of the potentials, which is numerically better conditioned. As a result, the dual strategy can potentially achieve a more accurate and numerically stable solution, which may explain the lower OT cost observed in~\Cref{tab:ot_comparison}.

\begin{table}[htbp]
\centering
\caption{Comparison of Transport Plans: Sinkhorn, Dual Solver, and Exact Earth Moving Distance}
\label{tab:ot_comparison}
\resizebox{\textwidth}{!}{%
\begin{tabular}{c | c c | c c | c c c | c c}
\toprule
$n$ & Primal Time (ms) & Dual Time (ms) & Marginal Err (Primal) & Marginal Err (Dual) & Cost (Primal) & Cost (Dual) & Cost (EMD) & $\Delta_{\text{EMD}}$ (Primal) & $\Delta_{\text{EMD}}$ (Dual)  \\
\midrule
50 & 0.98 $\pm$ 0.08 & 1014.7 $\pm$ 69.1 & 2.11$\times$10$^{-18}$ & 7.54$\times$10$^{-6}$ & 0.3971 & 0.3381 & 0.3350 & 0.1398 & 0.1343 \\
100 & 2.95 $\pm$ 2.08 & 1043.9 $\pm$ 65.5 & 1.24$\times$10$^{-18}$ & 3.42$\times$10$^{-6}$ & 0.3667 & 0.3109 & 0.3078 & 0.0994 & 0.0972 \\
200 & 9.12 $\pm$ 5.23 & 1005.0 $\pm$ 80.5 & 6.36$\times$10$^{-19}$ & 1.62$\times$10$^{-6}$ & 0.3224 & 0.2734 & 0.2706 & 0.0705 & 0.0696 \\
500 & 25.50 $\pm$ 17.38 & 1068.1 $\pm$ 44.7 & 2.28$\times$10$^{-19}$ & 5.53$\times$10$^{-7}$ & 0.3298 & 0.2801 & 0.2770 & 0.0447 & 0.0444 \\
\bottomrule
\end{tabular}%
}
\begin{tablenotes}
\item{
\textit{Kindly Note:} ``Marginal Err.'' is short for ``Marginal Error,'' and ``Sink.'' is short for ``Sinkhorn.'' $\Delta$ indicates the relative performance change of each solution strategy with respect to the EMD solver.
}
\end{tablenotes}
\end{table}

Additionally, from~\Cref{fig:toyDualConvCompare}, we observe that the negative dual loss consistently decreases as the number of iterations increases and gradually reaches a plateau. This indicates that the dual objective converges stably across different sample sizes. Moreover, the curves tend to plateau at similar levels, suggesting that the dual optimization nearly saturates and reaches a comparable stationary objective value under different sample sizes. The shaded regions, which represent $\pm 1.0$ standard deviation over multiple runs, remain relatively narrow, suggesting that the optimization process is robust to random initialization. In summary, although larger sample sizes introduce a slightly higher computational burden, the overall convergence pattern and final objective level remain similar for all values of $n$, further demonstrating the stability and scalability of the dual optimization strategy for constructing OT plan.

\begin{figure*}[!h]
    \centering
\subfigure[Negative dual loss, $n=50$.\label{subfig:dualConv1}]{\includegraphics[width=0.245\linewidth]{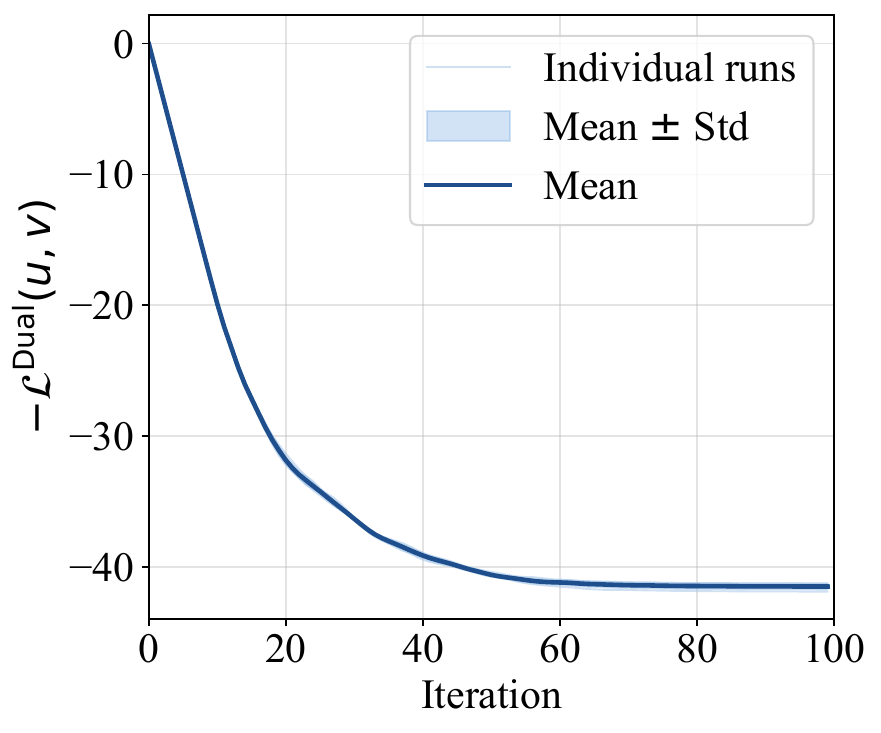}} 
\subfigure[Negative dual loss, $n=100$.\label{subfig:dualConv2}]{\includegraphics[width=0.245\linewidth]{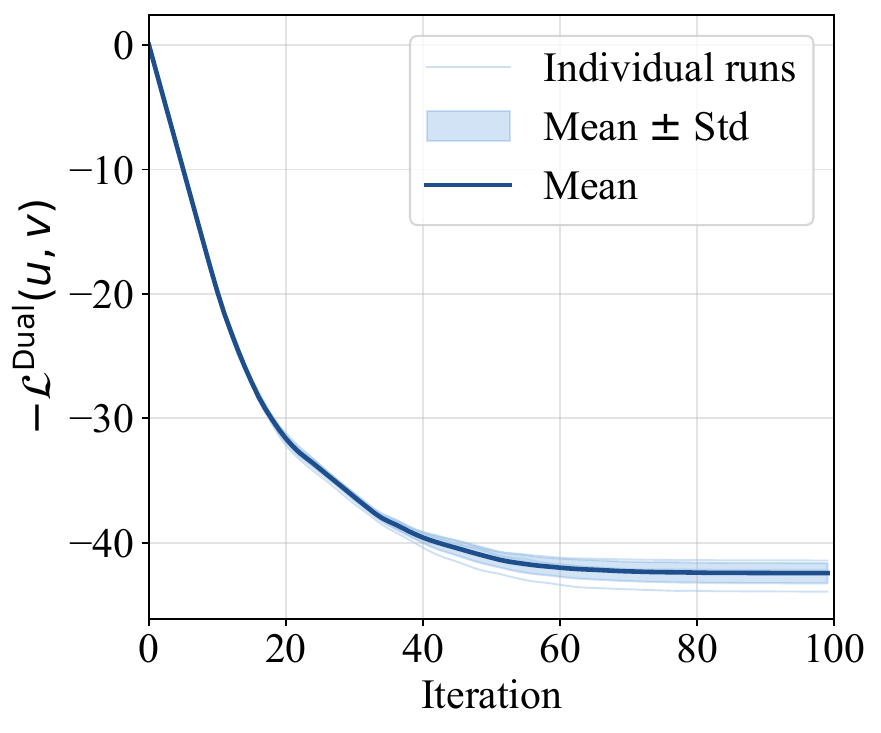}} 
\subfigure[Negative dual loss, $n=200$.\label{subfig:dualConv3}]{\includegraphics[width=0.245\linewidth]{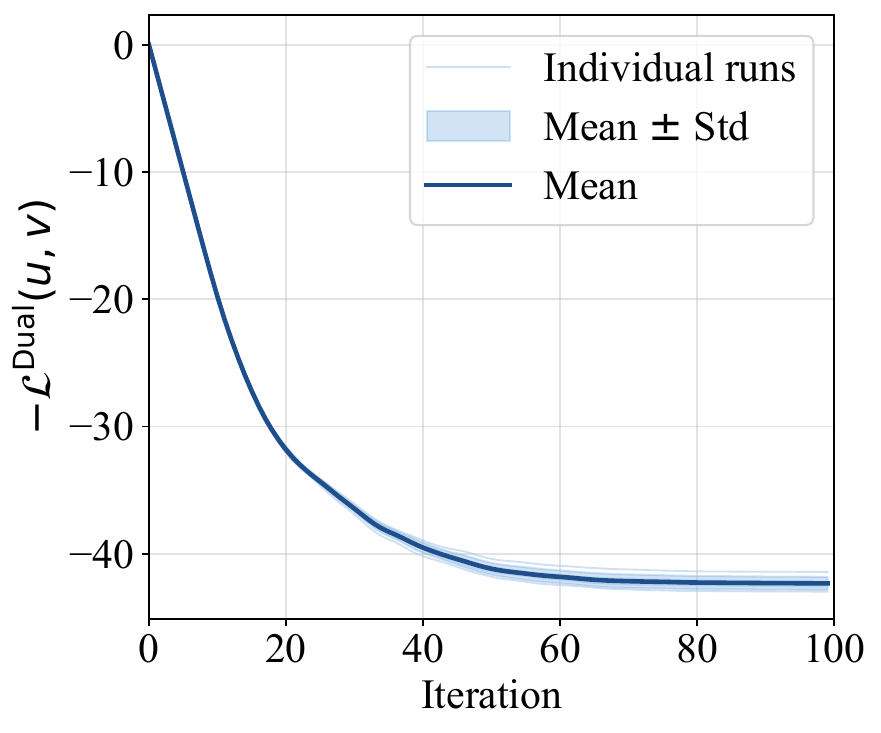}} 
\subfigure[Negative dual loss, $n=500$.\label{subfig:dualConv4}]{\includegraphics[width=0.245\linewidth]{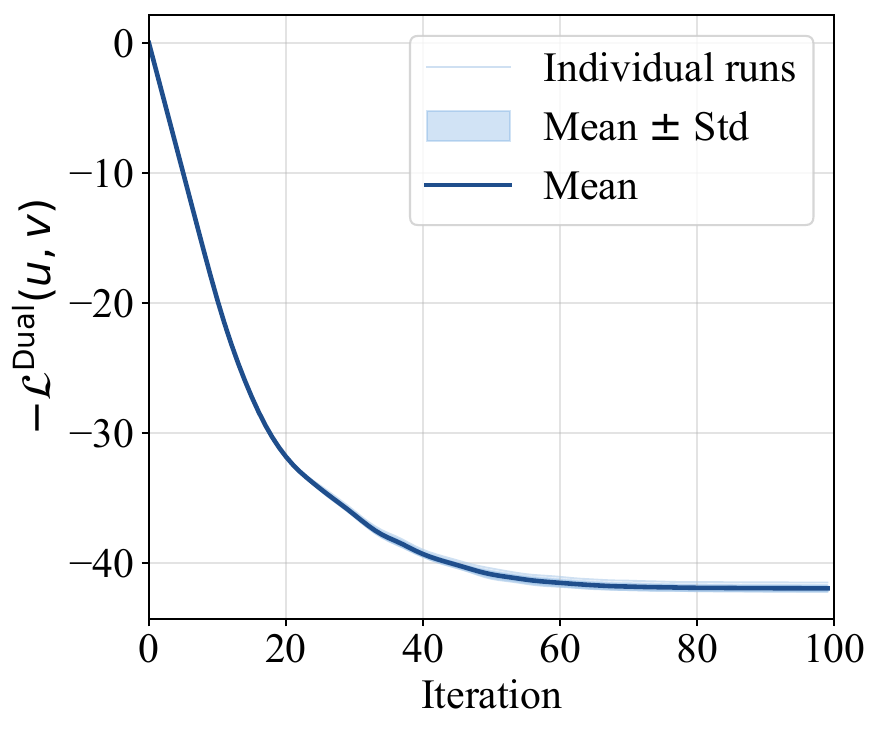}} 
\caption{Convergence of the negative dual loss over iterations under different sample sizes. Shaded regions indicate $\pm 1.0$ standard deviation across runs.}\label{fig:toyDualConvCompare}
\end{figure*}

\subsubsection{Effect of Entropy Regularization Strength}\label{subsubsec:EntropicRegularizationTerm}


In this section, we provide a more systematic discussion of the influence of the entropic regularization strength on the learned transport plan. Specifically, we consider a two-dimensional toy example similar to that in~\Cref{subsubsec:PrimalDualComparison}. We generate a total of $100$ samples for this experiment.

The corresponding experimental results are presented in~\Cref{fig:toyEntropyCaseOTPlanCompare}. As shown in the figure, the entropy regularization strength has a clear influence on the structure of the transport plan. When $\epsilon$ is small, as demonstrated in~\Cref{subfig:toycaseOTPlan01,subfig:toycaseOTPlan02}, the transport plan is relatively concentrated and mainly assigns mass to low-cost source-target pairs, resulting in a sharper and sparser matching pattern. As $\epsilon$ increases, as demonstrated in~\Cref{subfig:toycaseOTPlan03,subfig:toycaseOTPlan05,subfig:toycaseOTPlan10}, the entropy term encourages the plan to avoid zero entries, and the resulting coupling becomes denser, smoother, and more diffuse. This behavior is consistent with the Sinkhorn form of the optimal plan, i.e., $\pi = \mathrm{diag}[u(x_t)] \exp(-\frac{\Vert x_t - x_s \Vert _2^2}{\epsilon})\mathrm{diag}[v(x_s)]$, where a larger $\epsilon$ weakens the sensitivity to the transport cost $\Vert x_t - x_s \Vert _2^2$ and spreads the mass over more feasible paths. Therefore, increasing $\epsilon$ leads to a smoother transport plan, while decreasing $\epsilon$ makes the solution closer to the unregularized optimal transport plan.

\begin{figure*}[!h]
    \centering
\subfigure[OT Plan with $\epsilon=0.0$.\label{subfig:toycaseOTPlan00}]{\includegraphics[width=0.33\linewidth]{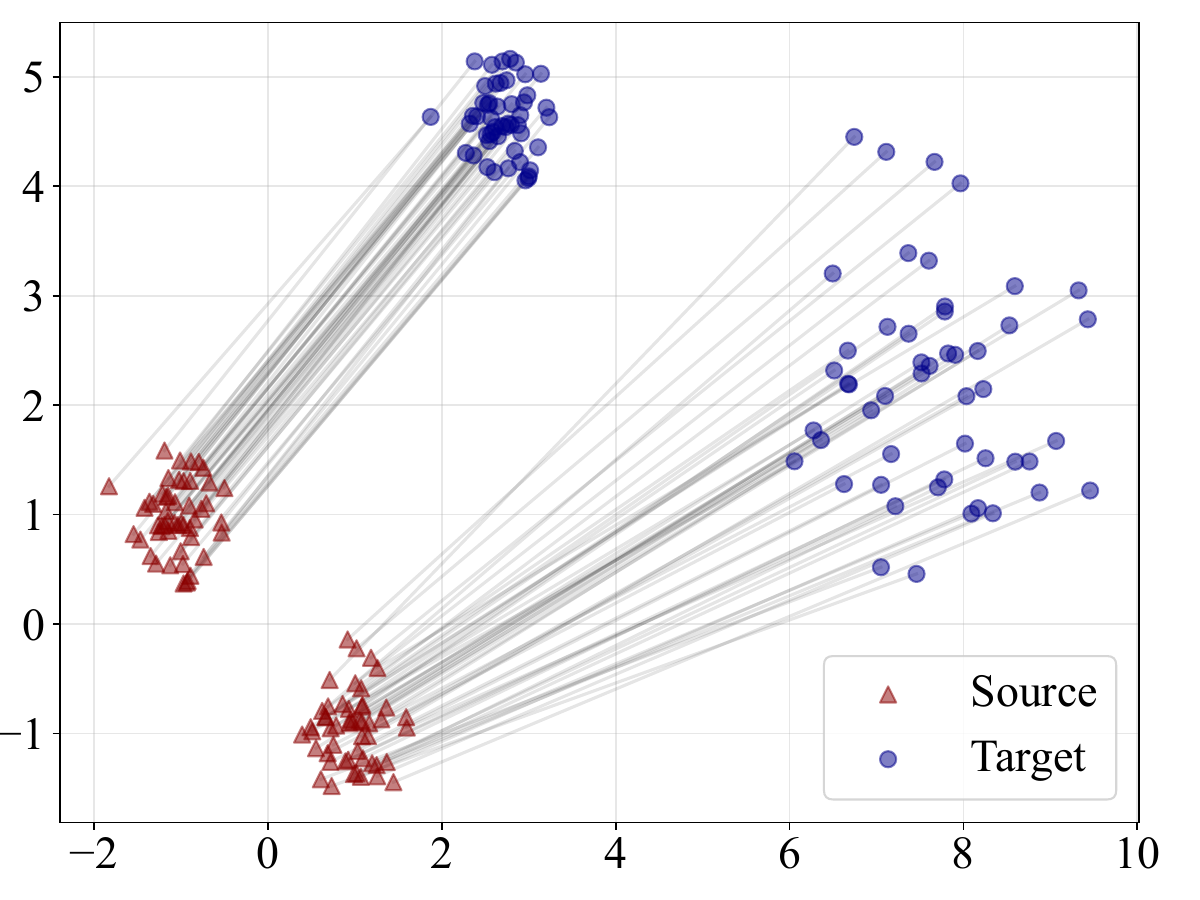}} 
\subfigure[OT Plan with $\epsilon=0.1$.\label{subfig:toycaseOTPlan01}]{\includegraphics[width=0.33\linewidth]{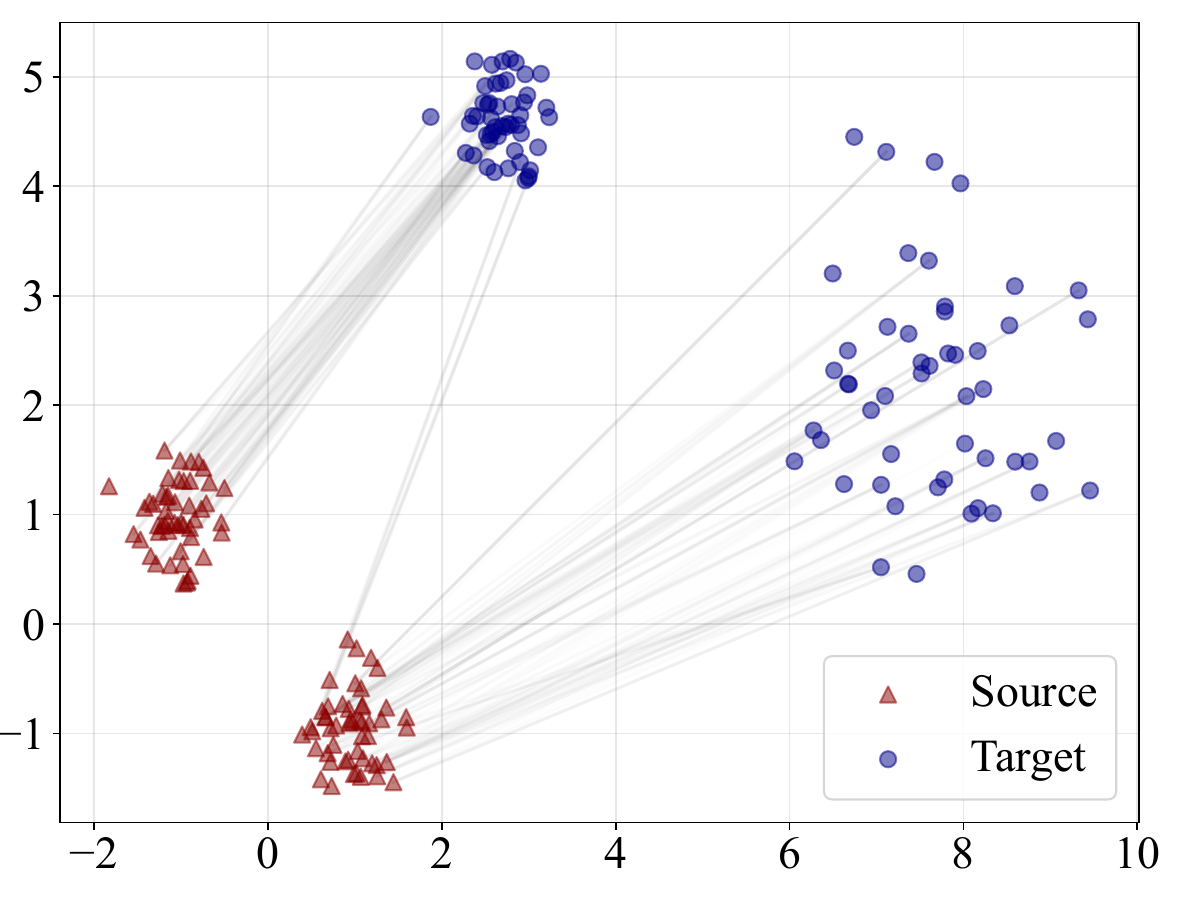}} 
\subfigure[OT Plan with $\epsilon=0.2$.\label{subfig:toycaseOTPlan02}]{\includegraphics[width=0.33\linewidth]{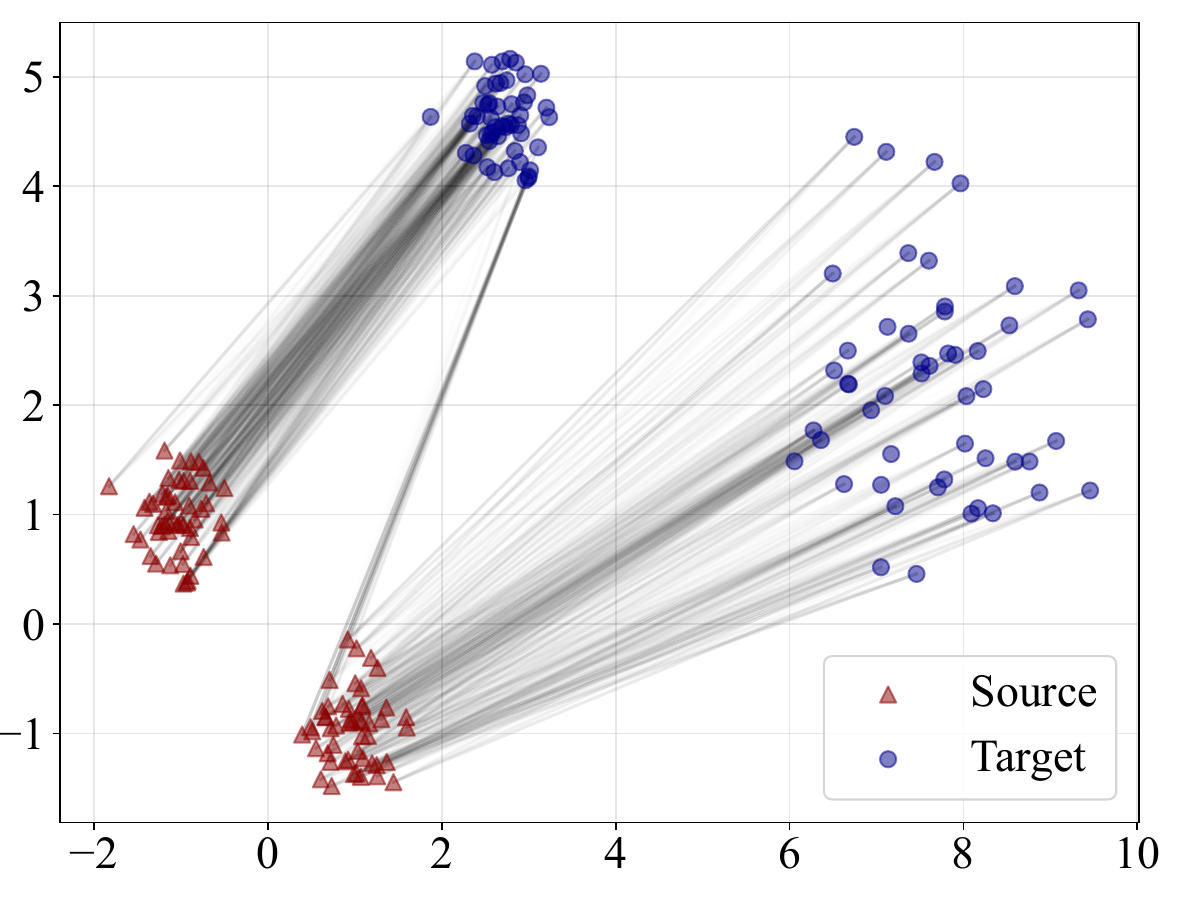}} 
\subfigure[OT Plan with $\epsilon=0.3$.\label{subfig:toycaseOTPlan03}]{\includegraphics[width=0.33\linewidth]{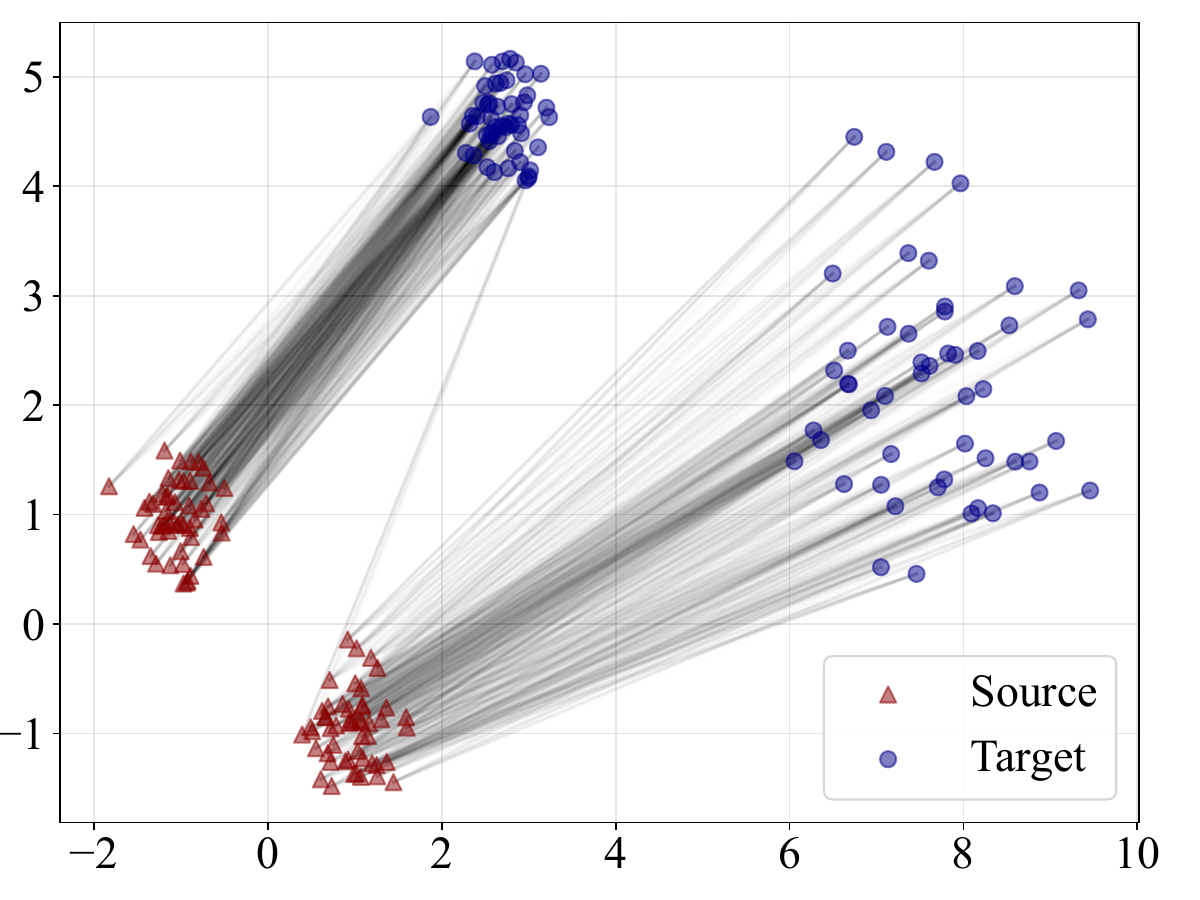}} 
\subfigure[OT Plan with $\epsilon=0.1$.\label{subfig:toycaseOTPlan05}]{\includegraphics[width=0.33\linewidth]{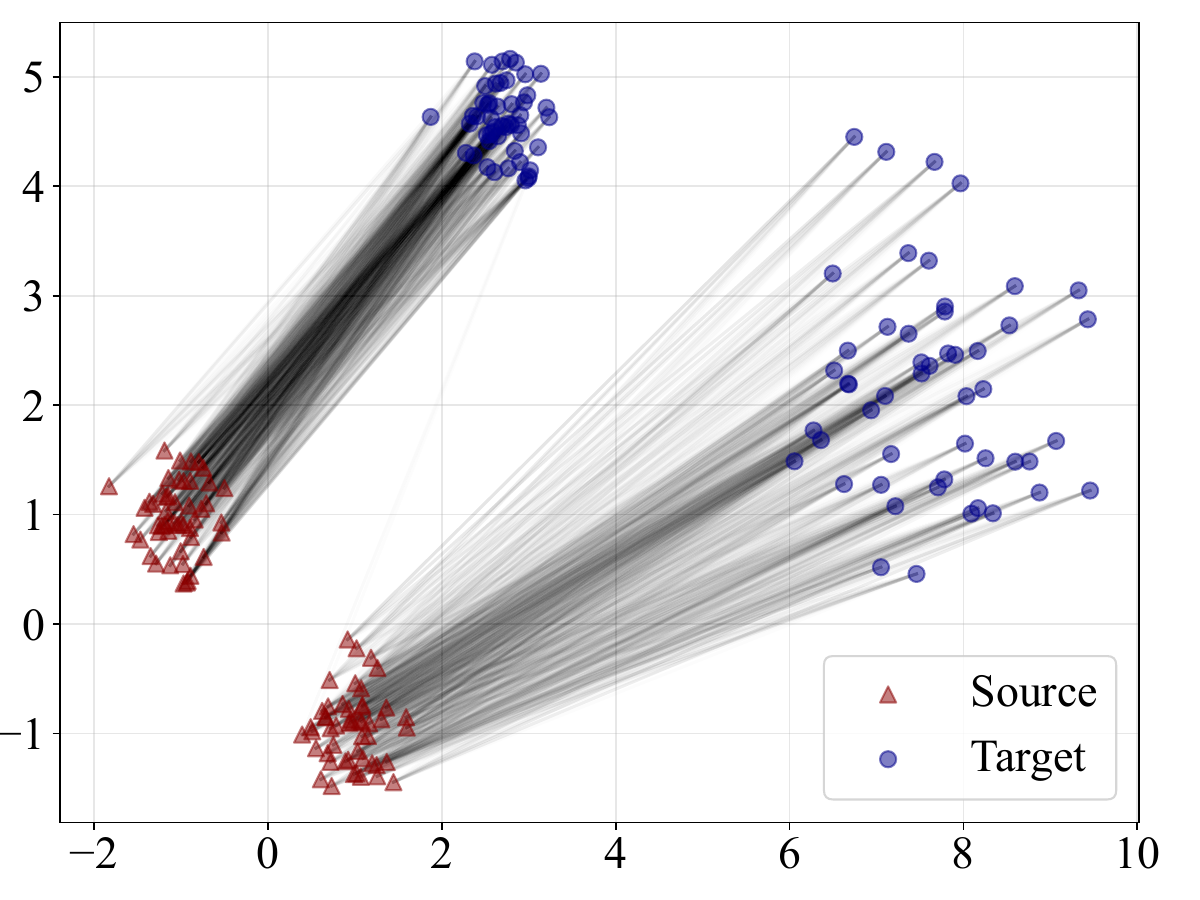}} 
\subfigure[OT Plan with $\epsilon=0.2$.\label{subfig:toycaseOTPlan10}]{\includegraphics[width=0.33\linewidth]{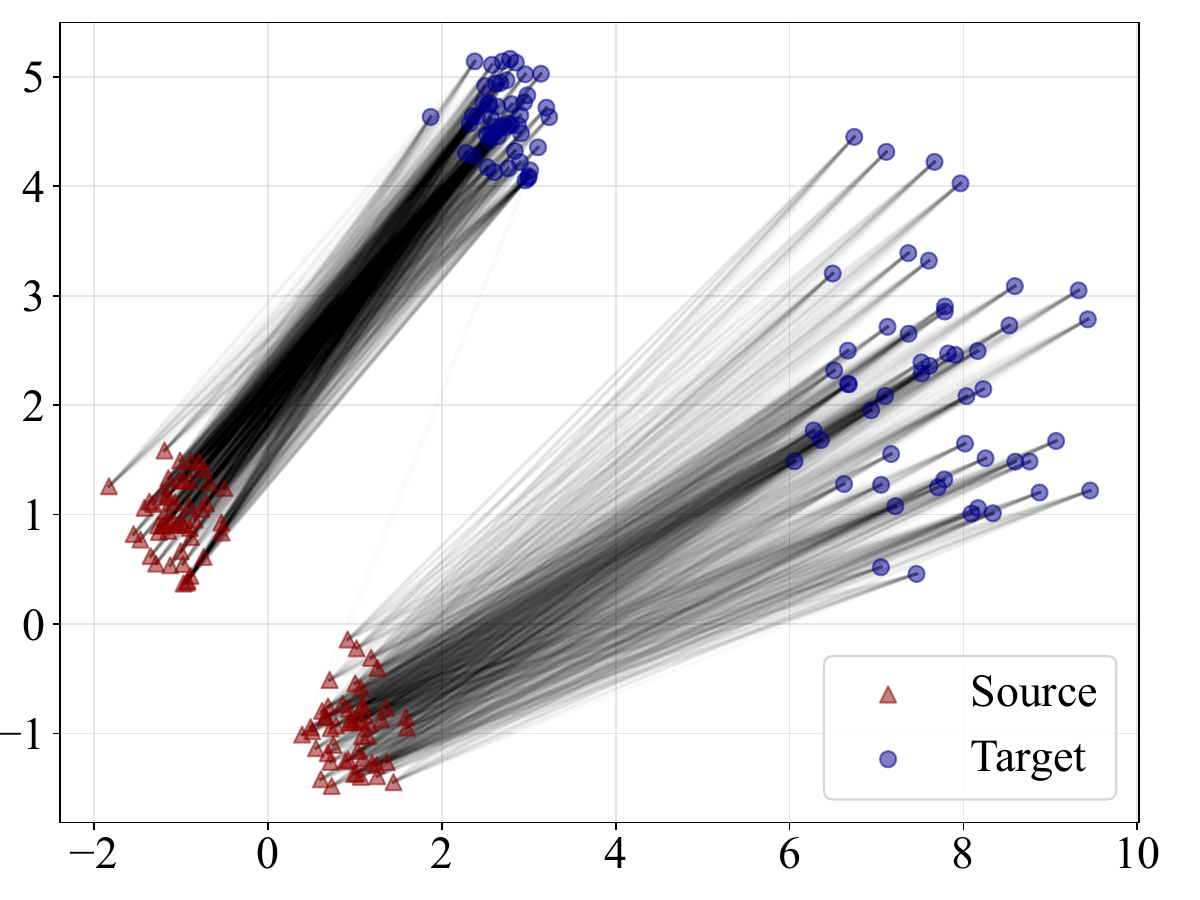}} 
\caption{OT plan visualization vary entropic regularization strength.}\label{fig:toyEntropyCaseOTPlanCompare}
\end{figure*}

\subsection{Derivation of Proposition~\ref{thm:uniquenessSolutionToEntropy}}
\begin{proposition*}[\ref{thm:uniquenessSolutionToEntropy}]
The semi-dual formulation in~\Cref{eq:dualLearningObjectiveEntropy} admits a unique optimal solution under standard regularity 
conditions on $f^\star$ and suitable normalization of $w$.
\end{proposition*}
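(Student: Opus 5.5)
We prove the statement by showing that the objective in \Cref{eq:dualLearningObjectiveEntropy}, written as a minimization over $w$,
\begin{equation*}
J(w)=\mathbb{E}_{p_T(x)}[f^\star(-w(x))]+\epsilon\,\mathbb{E}_{p(x_t)}\Big[\log \mathbb{E}_{p_T(x)}\exp\Big(\tfrac{w(x)-c(x_t,x)}{\epsilon}\Big)\Big],
\end{equation*}
with $c(x_t,x)=\frac{1}{2\eta}\|x_t-x\|_2^2$, is strictly convex on the quotient space where uniqueness is meaningful. Functions $w$ are identified up to $p_T$-null sets, since $J$ only sees $w$ on the support of $p_T$. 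We then add existence of a minimizer. The ``standard regularity conditions'' are taken to be: $f^\star$ is proper, convex, lower semicontinuous, nondecreasing, and \emph{strictly} convex on the relevant range. This holds for KL, where $f^\star(z)=e^{z-1}$. We also assume $f^\star$ is coercive in the sense that $f^\star(z)\to\infty$ superlinearly as $z\to\infty$.

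\textbf{Step 1 (convexity of each term).} The first term is convex in $w$ because $w\mapsto f^\star(-w(x))$ is convex pointwise and expectation preserves convexity. For the second term, fix $x_t$. The map $w\mapsto \epsilon\log\mathbb{E}_{p_T}\exp((w-c(x_t,\cdot))/\epsilon)$ is a log-partition (log-sum-exp) functional, so it is convex by Hölder's inequality. It is not strictly convex: equality in Hölder holds exactly when $w_1-w_2$ is $p_T$-a.e.\ constant. This is precisely the translation invariance $w\mapsto w+k$ of the unregularized OT part, and it is what the ``normalization of $w$'' is meant to remove. Averaging over $p(x_t)$ keeps the term convex.

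\textbf{Step 2 (strictness).} Take two minimizers $w_1\neq w_2$ (differing on a set of positive $p_T$-measure) and set $\bar w=\tfrac12(w_1+w_2)$. Convexity gives $J(\bar w)\le \tfrac12 J(w_1)+\tfrac12 J(w_2)$, so equality must hold in both terms.
\begin{itemize}
\item Equality in the $f^\star$ term, combined with strict convexity of $f^\star$, forces $w_1=w_2$ $p_T$-a.e. This already gives the contradiction.
\item If instead $f^\star$ is only assumed convex (e.g.\ the identity case), equality in the log-sum-exp term forces $w_1-w_2\equiv k$ $p_T$-a.e. Fixing the normalization (say $\mathbb{E}_{p_T}[w]=0$, or pinning $w$ at a reference point) then forces $k=0$.
\end{itemize}
For the KL choice the first case applies, and normalization is not even needed. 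Any $k\neq0$ would change $\mathbb{E}[e^{-w-1}]$ while the second term shifts by exactly $k$. The resulting strictly convex scalar function of $k$ has a unique minimizer, which is consistent with the argument above.

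\textbf{Step 3 (existence), the main obstacle.} We must show that $J$ attains its infimum over a suitable class, for instance $w\in L^1(p_T)$ or bounded measurable functions. This is the step I expect to be hardest. The plan is to prove lower semicontinuity and coercivity.
\begin{itemize}
\item \emph{Upper bound on $w$.} Jensen gives $\epsilon\log\mathbb{E}\exp(\cdot)\ge \mathbb{E}_{p_T}[w]-\mathbb{E}[c]$, and $f^\star(-w)\ge -w\cdot 0-f(0)$ bounds things from below. Superlinear growth of $f^\star$ then controls large negative values of $w$.
\item \emph{Lower bound on $w$.} Large positive values of $w$ are controlled by the log-sum-exp term, using that $\mathbb{E}_{p(x_t)}\mathbb{E}_{p_T}[c]<\infty$ because both measures have finite second moments.
\end{itemize}
Together these give a bounded minimizing sequence. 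We extract a weakly convergent subsequence, via Komlós or convex combinations (Mazur) to obtain a.e.\ convergence. Fatou's lemma for the convex, lower semicontinuous integrands then yields a minimizer. If this route proves too technical, I would fall back to restricting $w$ to a compact-support or bounded setting, where Step 2 alone gives uniqueness of any minimizer and existence follows from the primal side. In that case strong duality, as stated in Proposition~\ref{thm:entropyRegularizedResults}, together with the explicit form $\pi^*=\kappa\exp((u+w-c)/\epsilon)$, identifies $w^*$ as the log-density potential of the unique primal solution, which is unique by strict convexity of the relative entropy.
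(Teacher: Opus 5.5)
Your Steps 1--2 are essentially the paper's proof. Both use convexity of the $f^\star$ term and H\"older convexity of the log-sum-exp term, whose only equality case (constant shifts of $w$ on the support of $p_T$) is excluded by the normalization $\mathbb{E}_{p_T}[w]=0$, with strict convexity of $f^\star$ supplying strictness. Your remark that either source of strictness alone suffices is sharper than the paper, which invokes both. For KL this matters: the paper's normalization can change the optimizer, because the objective is not shift-invariant.

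Your Step 3 goes beyond the paper, whose argument only shows there is at most one optimizer. It does close in $L^1(p_T)$ for KL: Jensen together with $e^{-s-1}+s\ge |s|-2\log 2$ gives $J(w)\ge \mathbb{E}_{p_T}|w|-C$. Koml\'os then gives a.e.\ convergence of Ces\`aro means of a minimizing sequence (these remain minimizing by convexity), and Fatou finishes. Drop the Mazur route, since a merely $L^1$-bounded sequence need not have a weakly convergent subsequence. Two smaller fixes: your ``upper''/``lower'' labels are swapped, and bounded $w$ is too small a class, since the KL optimizer $w^\ast=-1-\log(\rho^\ast/p_T)$ is typically unbounded.
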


\begin{proof}

Let the entropy-regularized dual objective in \Cref{eq:dualLearningObjectiveEntropy} be
\begin{equation*}
g(w)\coloneqq -\epsilon \mathbb{E}_{p(x_t)}[ \log\mathbb{E}_{p_T(x)}(\exp(\frac{w(x)-\frac{1}{2\eta}\| x - x_t\|_2^2}{\epsilon})) ] - \mathbb{E}_{p_T(x)}[ f^\star (-w(x))] , \epsilon > 0.
\end{equation*}
Assume the admissible space $\mathcal{A}$ consists of measurable functions $w$ satisfying:
\begin{enumerate}[label=(A\arabic*)]
\item Both expectations in $g(w)$ are finite;
\item $w$ is normalized such that $\mathbb{E}_{p_T}[w(x)]=0$;
\item $f^\star$ is strictly convex on the range $\{-w(x): w\in\mathcal{A}\}$.
\end{enumerate}
Then $g(w)$ is strictly concave on $\mathcal{A}$ and admits at most one maximizer. The corresponding justifications are listed as follows:
\begin{enumerate}[label=\arabic*)]
\item \textbf{Log-sum-exp term:}
Fix $x_t$ and we first define $\Phi_\epsilon(w;x_t)$ as follows:
\begin{equation*}
\Phi_\epsilon(w;x_t) \coloneqq -\epsilon\log \mathbb{E}_{p_T(x)}[\exp(\frac{w(x)-\frac{1}{2\eta}\|x-x_t\|_2^2}{\epsilon})].
\end{equation*}
For $w_1\neq w_2$ with $w_1(x)\neq w_2(x)$ on a set of positive $p_T$-measure, strict convexity of $\exp(\cdot)$ and H\"older's inequality~\citep{hardy1952inequalities} imply the following result:
\begin{equation}\label{eq:strict-concavity-Phi}
\Phi_\epsilon((1-\uplambda)w_1+\lambda w_2;x_t) > (1-\uplambda)\Phi_\epsilon(w_1;x_t)+\uplambda \Phi_\epsilon(w_2;x_t)
\end{equation}
for $\uplambda\in(0,1)$, with equality only if the difference between $w_1$ and $w_2$, $w_1 - w_2$, is constant $p_T$-almost everywhere Since the admissible set imposes normalization (excluding constant functions), taking expectation over $x_t$ yields a strictly concave first term.

\item \textbf{$f$-divergence related term:}
The term $-\mathbb{E}_{p_T(x)}[f^\star(-w(x))]$ is strictly concave since $f^\star$ is strictly convex, for $w_1\neq w_2$ and $\uplambda\in(0,1)$, we have the following result:
\begin{equation*}
-f^\star(-(1-\uplambda)w_1(x)-\uplambda w_2(x)) > -(1-\uplambda)f^\star(-w_1(x)) - \uplambda f^\star(-w_2(x))
\end{equation*}
holds on the set where $w_1(x)\neq w_2(x)$, which has positive $p_T$-measure.
\end{enumerate}
As a sum of two strictly concave functions, $g$ is strictly concave on the admissible set and admits a unique maximizer.
\end{proof}

\subsection{Derivation and Discussions of Proposition~\ref{thm:approximationAccuracy}}\label{subsec:derivationApproximationAccuracy}
\subsubsection{Derivation of Proposition~\ref{thm:approximationAccuracy}}
\begin{proposition*}[\ref{thm:approximationAccuracy}]
The optimal solution $\rho^*(x)$ to problem defined in~\Cref{eq:primalMinimumMovementSchemeEntropy} satisfies the following bound when $\mathcal{W}_2(p(x_t),p_T(x)) \le 2\eta$:
\begin{equation}\label{eq:fAndKLRelationship}
    \mathbb{D}_f[\rho^*(x), p_T(x)] \le \mathcal{W}_{2}(p(x_t), p_T(x)).
\end{equation}
\end{proposition*}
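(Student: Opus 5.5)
The plan is a comparison argument: plug a suitable competitor into the objective in \Cref{eq:primalMinimumMovementSchemeEntropy} and use the optimality of $\rho^*$. Write $J(\rho,\pi)$ for the objective, where $\pi$ is the coupling between $p(x_t)$ and $\rho$ that carries the entropic term. All three terms are nonnegative: the Wasserstein term obviously, the $f$-divergence because $f(1)=0$ together with Jensen's inequality, and the entropic term once it is written as $\epsilon\,\mathbb{D}_{\mathrm{KL}}[\pi,\kappa]$ up to the constant $-\epsilon$ (which cancels when two values of $J$ are compared). Therefore
\begin{equation*}
\mathbb{D}_f[\rho^*,p_T]\le J(\rho^*,\pi^*)+\epsilon\le J(\tilde\rho,\tilde\pi)+\epsilon
\end{equation*}
for any admissible competitor $(\tilde\rho,\tilde\pi)$.

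The natural competitor is $\tilde\rho=p_T$, which makes the $f$-divergence term vanish. Take $\tilde\pi$ to be either the optimal $\mathcal{W}_2$ coupling or the product coupling $\kappa$; the product coupling makes the entropic term exactly $-\epsilon$, which cancels the added $\epsilon$. With this choice the right-hand side is at most $\frac{1}{2\eta}\mathcal{W}_2^2(p(x_t),p_T)$ plus a cost correction. Now use the hypothesis $\mathcal{W}_2(p(x_t),p_T)\le 2\eta$:
\begin{equation*}
\frac{1}{2\eta}\mathcal{W}_2^2=\mathcal{W}_2\cdot\frac{\mathcal{W}_2}{2\eta}\le \mathcal{W}_2 .
\end{equation*}
This yields \Cref{eq:fAndKLRelationship}.

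The main obstacle is reconciling the entropic term with the Wasserstein term. If $\tilde\pi$ is the optimal OT coupling, its KL divergence relative to $\kappa$ may be large or even infinite, since OT plans are typically deterministic. If $\tilde\pi=\kappa$, the transport cost becomes $\mathbb{E}_{\kappa}\|x_t-x\|^2$, which is larger than $\mathcal{W}_2^2$.

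I would first try to interpret the entropic term as acting on the same coupling that realizes $\mathcal{W}_2$ only through the relaxed formulation in the proof of Proposition~\ref{thm:entropyRegularizedResults}, where $\pi$ is a single free plan. Under that reading I would choose $\tilde\pi=\kappa$ and bound the cost, or else treat the entropic penalty as separately nonnegative and absorbed into the comparison constant. If this is too lossy, the fallback is to accept an additive $\epsilon\,\mathbb{D}_{\mathrm{KL}}[\pi_{\mathrm{OT}}^{\delta},\kappa]$ error from a block-approximated (smoothed) OT coupling $\pi_{\mathrm{OT}}^{\delta}$, and to note that this error vanishes as $\epsilon\to0$ in the regime where the proposition is meant to apply.
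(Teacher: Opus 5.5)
Your skeleton matches the paper's proof: drop the nonnegative Wasserstein and KL terms to get $\mathbb{D}_f[\rho^*,p_T]\le J(\rho^*,\pi^*)+\epsilon$, compare with the competitor $(p_T,\kappa)$, and finish with $\tfrac{1}{2\eta}\mathcal{W}_2^2\le\mathcal{W}_2$. The gap is the step you flag yourself and never close. You need $J(p_T,\kappa)\le\tfrac{1}{2\eta}\mathcal{W}_2^2(p(x_t),p_T)-\epsilon$ with no ``cost correction'', and none of your resolutions gives this.

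Suppose a single plan carries both the quadratic cost and the entropy, which is the reading you lean towards. Then at $\tilde\pi=\kappa$ the cost is $\tfrac{1}{2\eta}\mathbb{E}_{\kappa}\|x_t-x\|_2^2$, and this is not controlled by $\mathcal{W}_2^2$ at all. For $p(x_t)=p_T$ with covariance $\Sigma$ it equals $\tfrac{1}{\eta}\operatorname{tr}\Sigma$, while $\mathcal{W}_2=0$. Nonnegativity of the entropic penalty only helps on the lower-bound side, not at the competitor. The smoothed-OT fallback proves a weaker inequality with an additive $\epsilon\,\mathbb{D}_{\mathrm{KL}}$ error, not the stated one.

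In fact, under that coupled reading the proposition is false. Take $p(x_t)=p_T$ Gaussian and $f$ the KL generator. At $\rho=p_T$ the entropic dual potential is a non-constant convex quadratic. Contracting $\rho$ slightly toward the mean therefore lowers the objective at first order, while $\mathbb{D}_f$ grows only at second order. Hence $\rho^*\neq p_T$ and $\mathbb{D}_f[\rho^*,p_T]>0=\mathcal{W}_2$.

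The missing step is to read the objective in \Cref{eq:primalMinimumMovementSchemeEntropy} literally, as the paper's proof does. The transport term is the optimal cost $\tfrac{1}{2\eta}\mathcal{W}_2^2(\rho,p(x_t))$, a function of $\rho$ alone. The coupling $\pi\in\Pi(p(x_t),\rho)$ enters only the entropic term. Then $(p_T,\kappa)$ is feasible, since $\kappa$ has marginals $p(x_t)$ and $p_T$. This gives $J(p_T,\kappa)=\tfrac{1}{2\eta}\mathcal{W}_2^2(p(x_t),p_T)+0-\epsilon$ exactly, after which your chain closes verbatim.

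Your unease is not misplaced: this decoupled objective is not the one whose dual is \Cref{eq:dualLearningObjectiveEntropy}, because that derivation lets cost and entropy share $\pi$. But the proposition holds, and is proved in the paper, only for the decoupled objective. You have to commit to that reading rather than leave a correction term to be absorbed.
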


\begin{proof}
At the beginning of the proof, let us first recall related concepts. Specifically, let $\Pi(p(x_t),\rho)$ denote the set of couplings $\pi(x_t,x)$ such that $\pi$ is a probability joint density with fixed marginals:
\begin{subequations}
    \begin{align}
            \int \pi(x_t,x)\mathrm{d} x &= p(x_t), \\
    \int \pi(x_t,x)\mathrm{d}x_t &= \rho(x).
    \end{align}
\end{subequations}
Based on this, we have:
\begin{equation*}
    \iint \pi(x_t,x)\mathrm{d} x_t\mathrm{d}  x = 1.
\end{equation*}
In addition, $\kappa(x_t,x) $ is our reference joint probability density defined as follows:
\begin{equation}
    \kappa(x_t,x) \coloneqq p(x_t) p_T(x).
\end{equation}

On this basis, Let us consider the entropy-regularized primal objective defined by~\Cref{eq:primalMinimumMovementSchemeEntropy}. For any feasible solution $(\rho,\pi)$, we can first introduce functional $J$ as follows:
\begin{equation}
J(\rho,\pi)
\coloneqq \frac{1}{2\eta}\, \mathcal{W}_2^2(\rho(x),p(x_t))
+ \mathbb{D}_f[\rho(x),p_T(x)]
+ \epsilon \iint \pi(x_t,x)[\log\frac{\pi(x_t,x)}{\kappa(x_t,x)}-1 ]\mathrm{d}x_t\mathrm{d}x .  
\end{equation}
Let $(\rho^*,\pi^*)$ be the optimal solution to the primal problem. Then $(\rho^*,\pi^*)$ satisfies the following inequality for all feasible $(\rho,\pi)$
\begin{equation}\label{appendix-eq:optimalityConditionalInequality}
J(\rho^*,\pi^*) \le J(\rho,\pi).
\end{equation}

Based on the definition of KL divergence,
\begin{equation*}
\mathbb{D}_{\rm{KL}}[\pi(x_t,x),\kappa(x_t,x)]
= \iint \pi(x_t,x)\log\frac{\pi(x_t,x)}{\kappa(x_t,x)}\mathrm{d}x_t\mathrm{d}x
\end{equation*}
and the fact that 
\begin{equation*}
\iint \pi(x_t,x)\mathrm{d}x_t\mathrm{d}x = 1,
\end{equation*}
we arrive at the following results:
\begin{equation*}
 \epsilon \iint \pi(x_t,x)[\log\frac{\pi(x_t,x)}{\kappa(x_t,x)}-1 ]\mathrm{d}x_t\mathrm{d}x= \epsilon \mathbb{D}_{\rm{KL}}[\pi(x_t,x),\kappa(x_t,x)] - \epsilon.
\end{equation*}
Since 
\begin{equation*}
\mathbb{D}_{\rm{KL}}[\pi(x_t,x),\kappa(x_t,x)]\ge 0,
\end{equation*}
for the optimal solution $(\rho^*,\pi^*)$, we obtain the following inequality:
\begin{equation*}
J(\rho^*,\pi^*)
= \frac{1}{2\eta}\, \mathcal{W}_2^2(\rho^*(x),p(x_t))
+ \mathbb{D}_f[\rho^*(x),p_T(x)]
+\epsilon \mathbb{D}_{\rm{KL}}[\pi^*(x_t,x),\kappa(x_t,x)] - \epsilon
\ge \mathbb{D}_f[\rho^*(x),p_T(x)]-\epsilon.
\end{equation*}
Since both $\mathcal{W}_2^2(\cdot,\cdot)\ge 0$ and $\mathbb{D}_{\rm{KL}}[\cdot,\cdot]\ge 0$, we have the following inequality:
\begin{equation}\label{appendix-eq:JRhoLessEpsilon}
\mathbb{D}_f[\rho^*,p_T] \le J(\rho^*,\pi^*)+\epsilon.
\end{equation}

As such, take the following conditions:
\begin{subequations}
    \begin{align}
        \rho(x) &= p_T(x), \\
        \pi(x_t,x) &= \kappa(x_t,x) = p(x_t)p_T(x),
    \end{align}
\end{subequations}
we can observe that $\pi\in \Pi(p(x_t),p_T)$ is feasible, and clearly we have the following results:
\begin{subequations}
    \begin{align}
       \mathbb{D}_f[p_T(x),p_T(x)]&=0,\\
      \mathbb{D}_{\rm{KL}}[\kappa(x_t,x),\kappa(x_t,x)]&=0.
    \end{align}
\end{subequations}
Hence, we arrive at the following results:
\begin{equation}\label{appendix-eq:optimalityConditionalInequality2333}
J(p_T,\kappa)
= \frac{1}{2\eta}\mathcal{W}_2^2(p_T(x),p(x_t)) + 0 + \epsilon(0-1)
= \frac{1}{2\eta}\mathcal{W}_2^2(p(x_t),p_T)-\epsilon.
\end{equation}
Based on~\Cref{appendix-eq:optimalityConditionalInequality,appendix-eq:optimalityConditionalInequality2333}, we have:
\begin{equation}\label{appendix-eq:optimalityConditionalInequalityOptimality}
J(\rho^*,\pi^*) \le \frac{1}{2\eta}\mathcal{W}_2^2(p(x_t),p_T)-\epsilon. 
\end{equation}

Plugging~\Cref{appendix-eq:optimalityConditionalInequalityOptimality} into~\Cref{appendix-eq:JRhoLessEpsilon} yields the following result:
\begin{equation}\label{appendix-eq:DfResultsoptimalityConditionalInequalityOptimality}
\mathbb{D}_f[\rho^*(x),p_T(x)]
\le\frac{1}{2\eta}\mathcal{W}_2^2(p(x_t),p_T)-\cancel{\epsilon} + \cancel{\epsilon}
= \frac{1}{2\eta}\mathcal{W}_2^2(p(x_t),p_T).
\end{equation}
In addition, since $\mathcal{W}_2(p(x_t),p_T(x)) \le 2\eta$, we have:
\begin{equation}\label{eq:wassInequalityBoundResults}
\mathcal{W}_2(p(x_t),p_T(x)) \le 2\eta \iff \frac{1}{2\eta }{\mathcal{W}_2(p(x_t),p_T(x))} \le 1  \iff  \frac{1}{2\eta} \mathcal{W}_2^2(p(x_t),p_T(x)) \le {\mathcal{W}_2(p(x_t),p_T(x))}.
\end{equation}
Substituting~\Cref{eq:wassInequalityBoundResults} into~\Cref{appendix-eq:DfResultsoptimalityConditionalInequalityOptimality}, we obtain the following result:
\begin{equation*}
    \mathbb{D}_f[\rho^*(x),p_T(x)] \le {\mathcal{W}_2(p(x_t),p_T(x))},
\end{equation*}
as desired.\end{proof}

\subsubsection{Theoretical Analysis}
After proving Proposition~\ref{thm:approximationAccuracy}, we provide some justification for the statement in the main contents: ``In view of Proposition~\ref{thm:approximationAccuracy}, we note that as $t$ increases, the transported density $\rho(x)$ progressively approaches the target distribution $p_T(x)$.'' Specifically, according to the definition of the JKO proximal recursion \Cref{eq:primalMinimumMovementScheme,eq:primalMinimumMovementSchemeEntropy}, each update $p(x_{t+1})$ satisfies (see Theorem 4.0.4 in reference~\citep{ambrosio2005gradient}):
\begin{equation*}
\frac{1}{2\eta}\mathcal{W}_2^2(p(x_{t+1}), p(x_t))
   + \mathbb{D}_f[p(x_{t+1}),p_T(x)]
   \le 
   \mathbb{D}_f[p(x_{t}),p_T(x)],
\end{equation*}
where $\mathbb{D}_f[p(x_{t}),p_T(x)]\ge 0$ achieves its minimum at $p_T(x)$. Besides, when $\mathbb{D}_f$ is geodesically convex, this inequality implies that the divergence decreases at each step, yielding:
\begin{equation}\label{eq:wassDistanceIterationInequality}
\mathcal{W}_2(p(x_{t+1}), p_T(x))
   \le
   \mathcal{W}_2(p(x_t), p_T(x)) - \varDelta_t,
   \quad \varDelta_t \ge 0.
\end{equation}
Therefore, by monotonic decrease of the divergence (see \Cref{eq:wassDistanceIterationInequality,eq:fAndKLRelationship}), we arrive at the following conclusion:
\begin{equation*}
\mathbb{D}_f[\rho^*(x_{t+1}), p_T(x)] 
  \le \mathbb{D}_f[\rho^*(x_t), p_T(x)] - \varDelta_t,
\end{equation*}
which demonstrates that the transported density $\rho(x)$ progressively converges toward the target $p_T(x)$.

\subsubsection{Toy Case Study}
To gain further insight into Proposition~\ref{thm:approximationAccuracy}, we conduct a toy case study.
The target distribution $p_T(x)$ is defined as
\begin{equation*}
p_T(x) \propto \frac{1}{2}\mathcal{N}(-2.5,0.8^2) + \frac{1}{2}\mathcal{N}(2.5,0.8^2).
\end{equation*}
The initial particles are independently sampled according to
\begin{equation*}
x_{0}^{(i)} \sim \mathcal{N}(0,1), \quad i=1,\ldots,\mathrm{M},
\end{equation*}
where $\mathrm{M}$ denotes the number of particles. In this experiment, we set $\mathrm{M}=\text{100}$ and $\eta=\text{0.005}$. Additionally, since the KL divergence between the particle approximation and the underlying density cannot be computed in closed form, we adopt the k-nearest-neighbor-based (kNN-based) estimator. Specifically, we estimate the differential entropy $\mathbb{H}(\rho^*)$ of the particle distribution using the Kozachenko–Leonenko kNN entropy estimator~\citep{kozachenko1987sample}, and approximate the cross-entropy term $\mathbb{E}_{\rho^*}[\log \rho^*(x)]$ by Monte Carlo averaging over the particles. Consequently, we compute the KL divergence by the following equation:
\begin{equation*}
\mathbb{D}_{\rm{KL}}[\rho^*, p_T ]= - \widehat{\mathbb{H}}(\rho^*) - \frac{1}{\mathrm{M}}\sum_{i=1}^{\mathrm{M}} \log p_T(x_t^{(i)}),
\end{equation*}
where $\widehat{\mathbb{H}}(\rho^*)$ denotes the estimated differential entropy. The visualization of the particle evolution and the corresponding quantitative comparisons are reported in~\Cref{subfig:toycase3DEvolution} and~\Cref{subfig:toycaseKLDivComparison}, respectively.
\begin{figure*}[h]
    \centering
\subfigure[Density Evolution Results, where the red line is estimated via kernel density estimation via median bandwidth.\label{subfig:toycase3DEvolution}]{\includegraphics[width=0.45\linewidth]{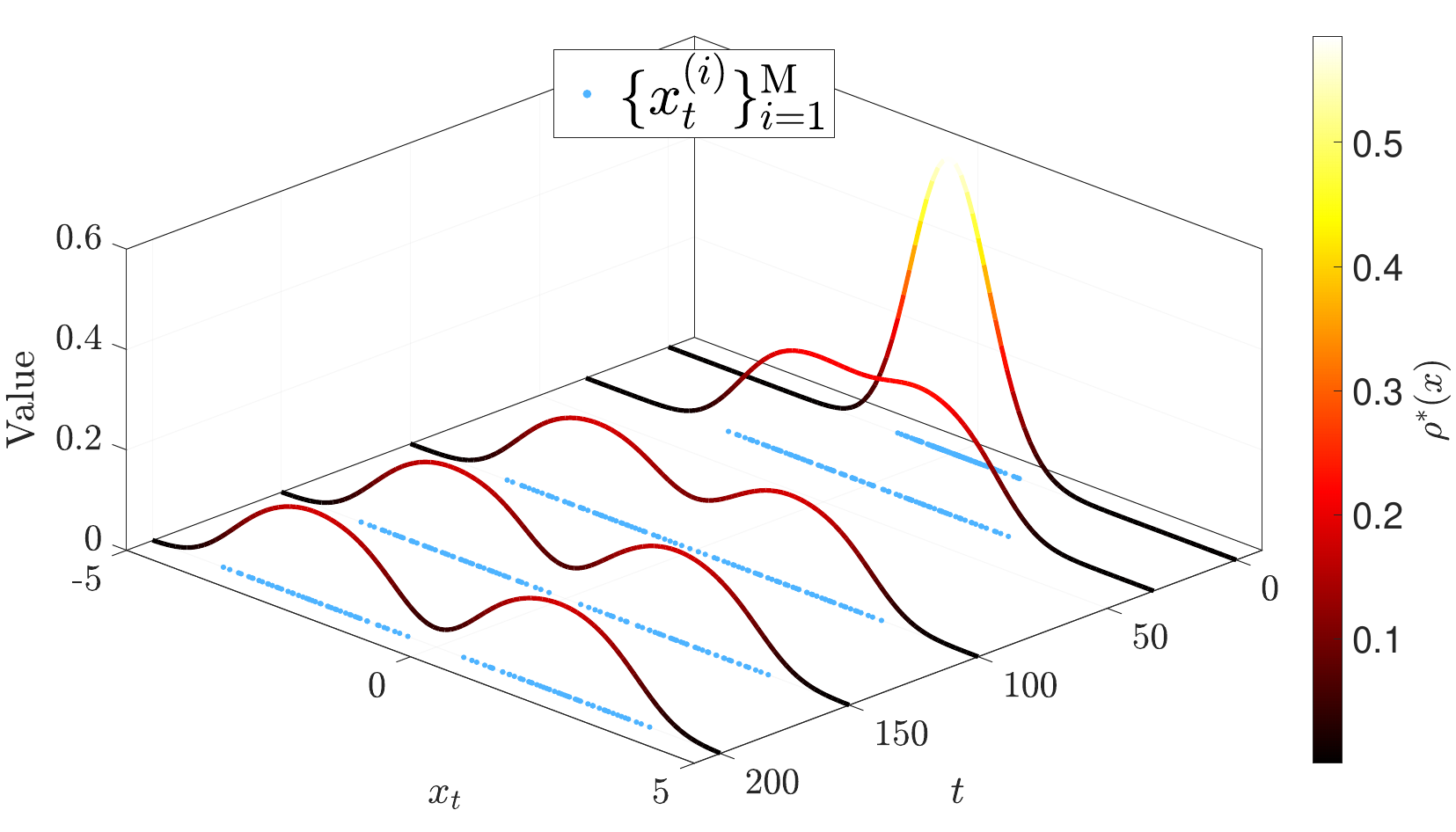}} 
    \hspace{0.07\linewidth}
\subfigure[Evolution of $\mathbb{D}_{\rm{KL}}\text{[}\rho^*(x),p_T(x)\text{]}$ and $\mathcal{W}_2(p(x_t),p_T(x))$ along $t$.\label{subfig:toycaseKLDivComparison}]{\includegraphics[width=0.35\linewidth]{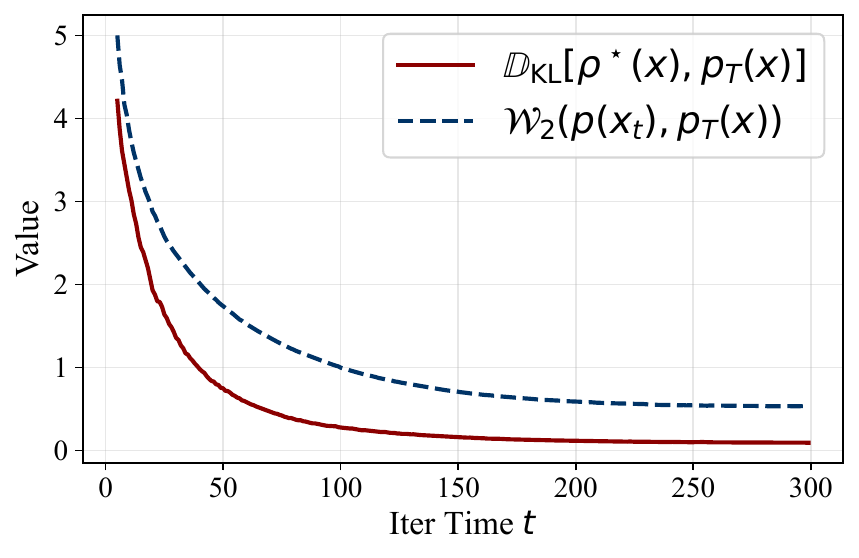}} 
\caption{Toy case study for solving problem $\inf_{\rho(x)} \mathbb{D}_{\rm{KL}}[\rho(x),p(x_T)]+\frac{1}{2\eta}\mathcal{W}_2^2(\rho(x),p(x_t))$ .
}\label{fig:toyCaseComparison}
\end{figure*}

From~\Cref{subfig:toycase3DEvolution}, we observe that as the evolution time increases, the density $p(x_t)$ progressively approaches the target distribution $p_T(x)$. This suggests that solving the proximal recursion effectively yields a progressive approximation of $p_T(x)$ along time $t$, supporting our claim. Moreover, as shown in~\Cref{subfig:toycaseKLDivComparison}, both the KL divergence $\mathbb{D}_{\mathrm{KL}}(\rho^*(x),p_T(x))$ and the Wasserstein distance $\mathcal{W}_2(p(x_t),p_T(x))$ decrease with iteration time $t$. Notably, the Wasserstein distance remains slightly larger than the KL divergence throughout, which further corroborates the result of Proposition~\ref{thm:uniquenessSolutionToEntropy}.

\subsection{Derivation of Proposition~\ref{thm:generalizationError}}
\begin{proposition*}[\ref{thm:generalizationError}]
Under mild assumptions, the E-SUOT-based GDA ensures that the target domain generalization error is upper-bounded by the following inequality:
\begin{equation}
\varepsilon_{p_T}(h_T) \le \varepsilon_{p_0}(h_0) +\varepsilon_{p_0}(h_T^*) + \iota  \zeta\mathcal{C} + \mathcal{S}_{\text{stat}},
\end{equation}
where $\iota$ is the Lipschitz constant of the loss function, $\zeta$ is the Lipschitz constant bound for hypotheses in $\mathcal{H}$, $\mathcal{C}$ aggregates the cumulative domain transportation and label continuity costs along the gradual domain adaptation path, $\mathcal{S}_{\text{stat}}$ is the statistical error term, and $h_T^*\in\mathcal{H}$ is a reference hypothesis used to characterize the source-domain approximation gap.
\end{proposition*}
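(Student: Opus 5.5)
The plan is a stage-wise telescoping argument in the style of the gradual self-training bounds of \citet{kumar2020understanding} and \citet{wang2022understanding}, with the E-SUOT transport maps controlling the per-step domain shift. The assumptions I would fix are: (i) the loss $\mathcal{L}$ is $\iota$-Lipschitz in its first argument and symmetric with a triangle inequality; (ii) every $h\in\mathcal{H}$, and every labeling function $q_t$, is $\zeta$-Lipschitz; (iii) consecutive labeling functions are close along the path, namely $\mathbb{E}_{p_t}[\mathcal{L}(q_t(x),q_{t+1}(\boldsymbol{T}_{\theta,t}(x)))]\le \iota\zeta\,\delta_t$, where $\delta_t$ is a label-continuity constant; (iv) $\mathcal{H}$ has finite complexity, so that empirical-to-population deviations at each stage are bounded by a Rademacher-type term $\mathfrak{R}_t$.

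The first step is a one-step transfer inequality. For $t\to t+1$, couple $x_{t+1}=\boldsymbol{T}_{\theta,t}(x_t)$ with $x_t\sim p_t$, so that $p_{t+1}=(\boldsymbol{T}_{\theta,t})_\# p_t$. Inserting and removing $h(x_t)$ and $q_t(x_t)$ and using Lipschitzness gives
\begin{equation*}
\varepsilon_{p_{t+1}}(h)\le \varepsilon_{p_t}(h)+\iota\zeta\,\mathbb{E}_{p_t}\|\boldsymbol{T}_{\theta,t}(x_t)-x_t\|_2+\iota\zeta\,\delta_t .
\end{equation*}
By Cauchy--Schwarz, the transport term is bounded by $(\mathbb{E}\|\boldsymbol{T}_{\theta,t}(x_t)-x_t\|_2^2)^{1/2}$. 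This is the per-step $\mathcal{W}_2$ cost $\mathcal{W}_2(p_t,p_{t+1})$ up to the optimization error of the learned map, and the JKO energy inequality after Proposition~\ref{thm:approximationAccuracy} controls it by $\sqrt{2\eta(\mathbb{D}_f[p_t,p_T]-\mathbb{D}_f[p_{t+1},p_T])}$. The last increment, from $p_T$-approximate $\rho^*$ to true $p_T$, is handled the same way, with Proposition~\ref{thm:approximationAccuracy} bounding the residual.

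The second step is a stage-wise learning bound. At stage $t+1$, $h_{t+1}$ is an (approximate) empirical risk minimizer on the mapped samples, and the previous model's predictions serve as pseudo-labels. The standard decomposition gives $\varepsilon_{p_{t+1}}(h_{t+1})\le \varepsilon_{p_{t+1}}(h_t)+\mathfrak{R}_{t+1}$, plus an approximation term. At the first stage I would compare against the reference $h_T^*$ through the triangle inequality for $\mathcal{L}$, which produces the $\varepsilon_{p_0}(h_T^*)$ term in the statement.

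Chaining the two steps from $t=0$ to $T$ and telescoping yields $\varepsilon_{p_T}(h_T)\le\varepsilon_{p_0}(h_0)+\varepsilon_{p_0}(h_T^*)+\iota\zeta\,\mathcal{C}+\mathcal{S}_{\text{stat}}$, where $\mathcal{C}\coloneqq\sum_t[\mathcal{W}_2(p_t,p_{t+1})+\delta_t]$ and $\mathcal{S}_{\text{stat}}\coloneqq\sum_t\mathfrak{R}_t$ (together with the map-fitting errors).

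The main obstacle is the second step. Self-training with pseudo-labels does not in general obey an additive error recursion, and errors can compound multiplicatively. My first attempt would avoid this by working, as in \citet{wang2022understanding}, with errors measured against the true labeling functions $q_t$ and using the triangle inequality through $h_T^*$, so that the recursion stays additive. If that fails, I would absorb any constant multiplicative factor into $\mathcal{C}$ and $\mathcal{S}_{\text{stat}}$; the ``mild assumptions'' phrasing in the statement leaves room for this.
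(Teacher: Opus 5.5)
Your skeleton matches the paper's proof. Both telescope $\varepsilon_{p_T}(h_T)-\varepsilon_{p_0}(h_0)$ over consecutive domains, charge each hypothesis update $\varepsilon_{p_t}(h_t)-\varepsilon_{p_t}(h_{t-1})$ to a statistical term $s_t$ (the paper does this without argument), bound each domain shift by $\iota\zeta$ times a transport cost plus a label-drift cost, and sum.

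The genuine difference is the domain-shift lemma. The paper uses Kantorovich--Rubinstein duality to get $\iota\zeta\,\mathcal{W}_1(p_{t-1},p_t)$ and compares labels pointwise at the same $x$. This needs nothing about the learned map and handles the final step into the true target the same way as every other step. However, it tacitly requires $x\mapsto\mathcal{L}(h(x),q_{t-1}(x))$ to be $\iota\zeta$-Lipschitz, which the paper's (A.1)--(A.2) do not give once $q_{t-1}$ varies with $x$.

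Your coupling through $\boldsymbol{T}_{\theta,t}$, with labels carried along the map, only uses Lipschitzness in the first argument plus a drift bound along the coupling. It also matches the label inheritance $y_{t+1}\leftarrow y_t$ in Algorithm~\ref{algo:completeAlgorithmESUOT}. Note that the algorithm fine-tunes on these carried-over labels, not on pseudo-labels.

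The price is that you control the learned map's displacement, $(\mathbb{E}\|\boldsymbol{T}_{\theta,t}(x_t)-x_t\|_2^2)^{1/2}\ge\mathcal{W}_2(p_t,p_{t+1})\ge\mathcal{W}_1(p_t,p_{t+1})$. The first inequality is an equality only when $\boldsymbol{T}_{\theta,t}$ exactly solves the inner $c$-transform problem. So you should define $\mathcal{C}$ through the displacement, or carry the fitting error explicitly, rather than through $\mathcal{W}_2(p_t,p_{t+1})$.

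Several of your add-ons are not needed for the statement, and some do not hold as written.
\begin{itemize}
\item The $\varepsilon_{p_0}(h_T^*)$ term needs no triangle inequality. Your chain already gives $\varepsilon_{p_T}(h_T)\le\varepsilon_{p_0}(h_0)+\iota\zeta\,\mathcal{C}+\mathcal{S}_{\text{stat}}$. Since $\mathcal{L}\ge 0$, the paper simply appends the nonnegative $\varepsilon_{p_0}(h_T^*)$.
\item The JKO energy inequality bounds $\mathcal{W}_2(p_t,p_{t+1})$ only for the exact minimizer of the unregularized step. It does not apply to the entropic step realised by learned networks. In any case the statement leaves $\mathcal{C}$ as an aggregate, so you do not need this bound.
\item Proposition~\ref{thm:approximationAccuracy} controls $\mathbb{D}_f[\rho^*,p_T]$, not a transport distance. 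Without an added transport--entropy inequality, it cannot close the last increment from the generated domain to the true $p_T$ in a Lipschitz-transfer argument. Instead, put that increment's $\mathcal{W}_1$ and label drift into $\mathcal{C}$, as the paper effectively does through its assumption (A.4).
\item The multiplicative compounding you worry about does not arise under your own assumptions. If $h_{t+1}$ is an ERM on the carried-over labels, comparing with $h_t\in\mathcal{H}$ gives $\varepsilon_{p_{t+1}}(h_{t+1})\le\varepsilon_{p_{t+1}}(h_t)+2\Delta_{t+1}$. Here $\Delta_{t+1}$ is the uniform deviation between empirical and population risk over $\mathcal{H}$. With pseudo-labels from $h_t$, the condition $\mathcal{L}(a,a)=0$ plus the triangle inequality gives the same additive form. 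Your fallback of absorbing multiplicative factors into $\mathcal{C}$ and $\mathcal{S}_{\text{stat}}$ is therefore unnecessary, and would not be a valid step in general.
\end{itemize}
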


Before formally proving the proposition, we introduce the following assumptions, which are mild and commonly satisfied in practical domain adaptation scenarios:
\begin{enumerate}[label=\textbf{(A. \arabic*)}]
\item{The loss function $\mathcal{L}(\cdot, y)$ is $\iota$-Lipschitz with respect to its first argument; that is, for any $a, a'$ and fixed $y$, we have:
\begin{equation}
      |\mathcal{L}(a, y) -\mathcal{L}(a', y)| \leq \iota |a - a'|. 
\end{equation}
}
\item{Each hypothesis $h \in \mathcal{H}$ is $\zeta$-Lipschitz, i.e., for any $x, x'$, we have:  
\begin{equation}
   |h(x) - h(x')| \leq \zeta \|x - x'\|.
\end{equation}}
\item{The labeling function $q_t$ along the adaptation path is such that $|q_t(x) - q_{t-1}(x)|$ is small for most $x$, to ensure local continuity.}
\item{The sequence of domains $(p_0, p_1, \dots, p_T)$ is induced by E-SUOT-based GDA transport, so that the total cumulative cost $\mathcal{C}$ as defined below is finite.}
\item{At every step, empirical risk minimization over sufficient samples ensures a small empirical-to-expected error gap, leading to a statistical error term $\mathcal{S}_{\text{stat}}$.
}
\item{The sample size for each domain is large enough to make $\mathcal{S}_{\text{stat}}$ negligible in the asymptotic regime.}
\end{enumerate}

Notably, Assumption (A.1) is standard and well‑justified for classification tasks employing the categorical cross‑entropy loss. 
More specifically, the gradient of $\mathcal{L}_{\mathrm{CE}}$ with respect to the logits $a$ is bounded as
\begin{equation*}
    \left| \frac{\partial \mathcal{L}_{\mathrm{CE}}}{\partial a_j} \right|
    = \big| [\mathrm{softmax}(a)]_j - \mathbb{I}(j = y) \big|
    \le 1,
\end{equation*}
for any class index $j$, since each softmax component lies within $[0,1]$, and the indicator function $\mathbb{I}(j = y)$ takes values in $\{0,1\}$. 
By the classical Lagrange mean value theorem (Theorem 4 in~\citep{thomas2014thomas}), this bounded-gradient property implies that $\mathcal{L}_{\mathrm{CE}}$ is Lipschitz continuous with respect to its first argument on any bounded input domain, with a finite Lipschitz constant depending on the chosen norm. In practice, this assumption can be further facilitated by applying weight normalization or spectral‑norm regularization, which help maintain bounded network outputs and thereby make the Lipschitz condition more readily satisfied during optimization.

Furthermore, Assumptions (A.2), (A.5) and (A.6) are standard and generally hold for commonly used loss functions and hypothesis classes. Unless the loss or model is exceptionally non-standard, these can be stated directly with the proposition and do not require additional justification. 

Assumption (A.3) holds in cases where the labeling function changes smoothly along the adaptation path. For our construction, since the intermediate domains are generated by incremental, continuous transformations (e.g., gradual style or environmental shifts), the underlying semantics of inputs remain stable, and thus $\mathbb{E}_{p_{t-1}(x)}[ |q_t(x) - q_{t-1}(x)| ]$ is small for every $t$. This situation typically occurs under covariate shift, where only the input distribution evolves while class definitions stay fixed. However, this assumption may fail in fine-grained recognition tasks or tasks with abrupt semantic boundaries, where visually similar samples can belong to distinct categories. Hence, our analysis mainly applies when the domain evolution does not induce significant concept shift, or locally within regions where the labeling function remains approximately smooth.



As for Assumption (A.4), in our E-SUOT-based GDA, each domain is generated via an iterative unbalanced optimal transport step that progressively reduces the transport cost as we proved in Proposition~\ref{thm:approximationAccuracy}. This guarantees that the cumulative cost $\mathcal{C}$ is finite, as can be bounded analytically. In summary, all the above assumptions are justified in our setting. Based on these assumptions, we now proceed with the formal proof.

\begin{proof}
Our goal is to bound the target risk $\varepsilon_{p_T}(h_T)$. Consider the telescoping sum along the domain adaptation path:
\begin{equation}
\varepsilon_{p_T}(h_T) = \varepsilon_{p_0}(h_0) + \left[ \varepsilon_{p_T}(h_T) - \varepsilon_{p_0}(h_0)\right].
\end{equation}
To make the recursion explicit, rewrite this as:
\begin{equation}
\varepsilon_{p_T}(h_T) = \varepsilon_{p_0}(h_0) + \sum_{t=1}^T \left[ \varepsilon_{p_t}(h_t) - \varepsilon_{p_{t-1}}(h_{t-1}) \right].
\end{equation}

For each $t\in\{1,\ldots,T\}$, we decompose the one-step risk difference as follows:
\begin{equation}
\begin{aligned}
 \varepsilon_{p_t}(h_t) - \varepsilon_{p_{t-1}}(h_{t-1}) 
= 
\underbrace{
\left[
\varepsilon_{p_t}(h_t)
-
\varepsilon_{p_t}(h_{t-1})
\right]}_{\text{hypothesis update term}}
+
\underbrace{
\left[
\varepsilon_{p_t}(h_{t-1})
-
\varepsilon_{p_{t-1}}(h_{t-1})
\right]}_{\text{domain shift term}} .
\end{aligned}
\end{equation}
The first term reflects the effect of updating the hypothesis from $h_{t-1}$ to $h_t$. Since $h_t$ is obtained by empirical risk minimization on samples from $p_t$, this term can be controlled up to an optimization and statistical deviation term, denoted by $s_t$. We therefore absorb this contribution into $s_t$. 

When the labeling function is fixed, or when its variation is accounted for separately, the composition $x\mapsto \mathcal{L}(h(x),q_{t-1}(x))$ is $\iota\zeta$-Lipschitz with respect to $x$. Hence, by the Kantorovich--Rubinstein duality, we have the following inequality:
\begin{equation}
| \varepsilon_{p_t}(h) - \varepsilon_{p_{t-1}}(h) | \leq \iota \zeta \cdot \mathcal{W}_1(p_{t-1}, p_t).
\end{equation}

Suppose the true label function $q_t$ changes along the path, which gives an additional cost due to the label discrepancy:
\begin{equation}
\iota\, \mathbb{E}_{p_t(x)} | q_t(x) - q_{t-1}(x) |.
\end{equation}

Therefore, each step can be bounded by
\begin{equation}
| \varepsilon_{p_t}(h_t) - \varepsilon_{p_{t-1}}(h_{t-1}) | \leq \iota \zeta \mathcal{W}_1(p_{t-1}, p_t) + \iota\, \mathbb{E}_{p_t(x)} | q_t(x) - q_{t-1}(x) | + s_t
\end{equation}
where $s_t$ denotes the statistical error at step $t$.

Let
\begin{equation}\label{eq:cumulativeCost}
\mathcal{C} := \sum_{t=1}^{T} [ \mathcal{W}_1(p_{t-1}, p_t) + \frac{1}{\zeta} \mathbb{E}_{p_t(x)} | q_t(x) - q_{t-1}(x)| ]
\end{equation}
and  
\begin{equation}
\mathcal{S}_{\text{stat}} := \sum_{t=1}^{T} s_t.
\end{equation}
Sum these bounds for all $t\in\{1,\ldots,T\}$, we get:
\begin{equation}
\sum_{t=1}^{T} | \varepsilon_{p_t}(h_t) - \varepsilon_{p_{t-1}}(h_{t-1}) | \leq \iota \zeta \mathcal{C} + \mathcal{S}_{\text{stat}}.
\end{equation}

As the final classifier $h_T$ may not be optimally trained with respect to $p_0$, include the approximation gap:
\begin{equation}
\varepsilon_{p_0}(h_0) + \varepsilon_{p_0}(h_T^*) - \varepsilon_{p_0}(h_0)
\end{equation}
where $h_T^*$ is the risk minimizer in $\mathcal{H}$ for $p_0$.

Finally, to account for the source-domain approximation gap induced by the hypothesis class, we include the reference risk $\varepsilon_{p_0}(h_T^*)$, where $h_T^*\in\mathcal{H}$ denotes a reference hypothesis evaluated on $p_0$. This yields the following inequality:
\begin{equation*}
\varepsilon_{p_T}(h_T)
\le
\varepsilon_{p_0}(h_0)
+
\varepsilon_{p_0}(h_T^*)
+
\iota\zeta \mathcal{C}
+
\mathcal{S}_{\text{stat}},
\end{equation*}
as desired.
\end{proof}

{While Proposition~\ref{thm:generalizationError} provides a clean decomposition, the last two terms are not directly computable from data. To bridge this gap between theory and practice, we attempt to estimate them using the following strategies:
\begin{itemize}[leftmargin=*]
\item{\textbf{Loss Lipschitz constant $\iota$:} According to the analysis of Assumption (A.1), we treat $\iota$ as a bounded constant in practice, which can be further controlled with weight normalization or spectral-norm regularization.
}
\item{\textbf{Hypothesis Lipschitz constant $\zeta$.}
    This bounds how sensitively hypotheses $h \in \mathcal{H}$ react to input perturbations. In our implementation, we use a multi-layer perceptron with ReLU activation functions. Since ReLU is 1-Lipschitz, the Lipschitz constant $\zeta$ of the classifier has the following upper bound: 
\begin{equation}
\zeta \le \prod_{\ell=1}^{L}\|W_\ell\|_2,
\end{equation}
where $W_\ell$ denotes the weight matrix of the $\ell$-th linear layer and $\|W_\ell\|_2$ is its spectral norm. Similarly, $\zeta$ can be controlled by applying spectral normalization or weight normalization to the layers.
    }

    \item{\textbf{Cumulative cost $\mathcal{C}$:} Recall \Cref{eq:cumulativeCost}. The Wasserstein-1 term $\mathcal{W}_1(p_{t-1},p_t)$ can be approximated using sample-based optimal transport distances, such as the Sinkhorn distance~\citep{sinkhornDivergenceCuturi}, computed on intermediate feature representations. The label-continuity term $\mathbb{E}_{p_t(x)}[|q_t(x)-q_{t-1}(x)|]$ can be approximated using pseudo-labels predicted by the models at two consecutive steps, which measures how much the pseudo-labels change along the adaptation path.} 
\item{\textbf{Statistical error $\mathcal{S}_{\text{stat}}$:} The term $\mathcal{S}_{\text{stat}}$ collects the statistical deviations between empirical and population risks along the adaptation path. Based on the Rademacher complexity bound in Section 9.2 of~\cite{bach2024learning}, for a $\iota$-Lipschitz loss and a hypothesis class whose complexity is controlled by $D$, the step-wise statistical error can be bounded as follows:
\begin{equation}
s_t
\le
\frac{2C_{\text{rad}}\iota D}{\sqrt{N}},
\end{equation}
where $N$ is the training sample size, $D$ controls the complexity of the function class, and $C_{\text{rad}}$ is a universal constant arising from the Rademacher complexity bound. Therefore, for all $t\in\{1,\ldots,T\}$, we can estimate $\mathcal{S}_{\text{stat}} $ as follows:
\begin{equation}
\mathcal{S}_{\text{stat}}
=
\sum_{t=1}^{T}s_t
\le
\frac{2C_{\text{rad}}\iota D}{\sqrt{N}}T.
\end{equation} 
    }
\end{itemize}
}

\subsection{Discussions on the Selection of Step Size $\eta$}\label{subsec:StepSizeSelectionKLDivergence}
{Let us recall~\Cref{eq:primalMinimumMovementScheme,eq:discussionsOnLambdaSelection} as follows:
\begin{equation}
    \begin{cases}
\mathcal{L}^{\text{Primal}}=\mathop{\arg\min}_{\rho(x)\in\mathcal{P}_2(\mathbb{R}^{\mathrm{D}})}
\frac{1}{2\eta}\,\mathcal{W}_2^2(\rho(x), p(x_t)) + \mathbb{D}_f[\rho(x), p_T(x)],\\
\min_{\pi\ge 0}\ \iint c(x,y)\,\pi(x,y)\,\mathrm{d}y\,\mathrm{d}x 
+ \lambda_1\,\mathbb{D}_f(\tilde{\rho} (x), \rho(x))
+ \lambda_2\,\mathbb{D}_f(\tilde{\xi}(y) , \xi(y)).
    \end{cases}
\end{equation}
Since~\Cref{eq:primalMinimumMovementScheme} is a variant of~\Cref{eq:discussionsOnLambdaSelection}, where $\lambda_1\equiv 0$, one natural question is how to select $\lambda_2$, i.e. How to select $\eta$? Since our target is decreasing the functional $\mathbb{D}_{f}[\rho(x),p_T(x)]$ along the simulation process, one key factor is whether the choice of $\eta$ can decrease the functional $\mathbb{D}_{f}[\rho(x),p_T(x)]$. Take the KL divergence, the $f$-divergence we mainly consider in the proposed E-SUOT approach, we have the following proposition for selecting the $\eta$:
\begin{proposition}
   Suppose that the $\Vert  \frac{\delta \mathbb{D}_{\rm{KL}}[\rho(x),p_T(x)]}{\delta \rho(x)}  \Vert \le \mathscr{A}$ and $\Vert \nabla \tfrac{\delta \mathbb{D}_{\rm{KL}}[\rho(x),p_T(x)]}{\delta \rho(x)}  \Vert \le \mathscr{B}$, there exists a constant $\mathscr{H}_0$ that controls the tailness of $p_T(x)$, and let $\{\rho_t(x)|t=1,\ldots,T\}$ denote the sequence of empirical PDF of the intermediate domain generated by the JKO recursion. When $\eta$ satisfies the following condition, the sequence of KL divergence $\{\mathbb{D}_{\rm{KL}}[\rho_t(x),p_T(x)]|t=1,\ldots,T\}$ converges to a finite value as $t\to\infty$: 
\begin{equation}  
    0 < \eta < \min(\frac{1}{\mathscr{B}},\, \frac{\mathscr{H}_0}{\mathscr{A}}).  
\end{equation}   
\end{proposition}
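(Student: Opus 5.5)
Since each $\mathbb{D}_{\rm{KL}}[\rho_t,p_T]\ge 0$ is bounded below, the plan is to show that the sequence $\{\mathbb{D}_{\rm{KL}}[\rho_t,p_T]\}_t$ is monotonically non-increasing under the stated step-size condition. Monotone convergence then gives a finite limit. The main tool is the one-step expansion already derived in \Cref{subsec:DerivationOfGradFlowResultsJKO}. There the JKO update was identified with the forward Euler step $x_{t+1}=x_t-\eta\nabla\frac{\delta \mathbb{D}_{\rm{KL}}}{\delta\rho_t}$, i.e. the transport map $\boldsymbol{T}(x)=x+\eta v_t(x)$ with $v_t=-\nabla\frac{\delta \mathbb{D}_{\rm{KL}}}{\delta\rho_t}$.

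First, I will write $\mathbb{D}_{\rm{KL}}[\rho_{t+1},p_T]$ via a second-order Taylor expansion along the interpolation $\rho^s=(\mathrm{Id}+s\eta v_t)_\#\rho_t$, $s\in[0,1]$. The first-order term is $-\eta\,\mathbb{E}_{\rho_t}\|\nabla\frac{\delta \mathbb{D}_{\rm{KL}}}{\delta\rho_t}\|_2^2$, exactly as in \Cref{eq:klDivergenceResult2}. The second-order remainder splits into two pieces:
\begin{itemize}
\item a potential part $\eta^2\,\mathbb{E}_{\rho^s}[v_t^\top\nabla^2(-\log p_T)v_t]$, which I will control through the light-tail constant $\mathscr{H}_0$;
\item an entropy part $\eta^2\,\mathbb{E}\,\mathrm{tr}[(\nabla v_t(I+s\eta\nabla v_t)^{-1})^2]$.
\end{itemize}
The condition $\eta<1/\mathscr{B}$ is what I will use to keep $I+s\eta\nabla v_t$ invertible and well conditioned. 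The bound $\|\nabla\frac{\delta \mathbb{D}_{\rm{KL}}}{\delta\rho}\|\le\mathscr{B}$ controls $v_t$ and, interpreted as a Lipschitz-type bound, also its Jacobian. This makes the pushforward a diffeomorphism, so the change-of-variables formula (with $\det$) for the entropy is legitimate.

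Second, I will use $\|\frac{\delta \mathbb{D}_{\rm{KL}}}{\delta\rho}\|\le\mathscr{A}$ together with the tail constant $\mathscr{H}_0$ to bound the potential part. Concretely, $\log p_T$ grows at most at a rate governed by $\mathscr{H}_0$, so its contribution per step is at most $\eta\,\mathscr{A}/\mathscr{H}_0$ times the first-order term. Combining both pieces, I aim for a descent inequality of the form
\begin{equation*}
\mathbb{D}_{\rm{KL}}[\rho_{t+1},p_T]\le \mathbb{D}_{\rm{KL}}[\rho_t,p_T]-\eta\Big(1-c\,\eta\max\big(\mathscr{B},\tfrac{\mathscr{A}}{\mathscr{H}_0}\big)\Big)\,\mathbb{E}_{\rho_t}\Big\|\nabla\tfrac{\delta \mathbb{D}_{\rm{KL}}}{\delta\rho_t}\Big\|_2^2 .
\end{equation*}
Up to the absolute constant $c$, which I will absorb into the normalisation of $\mathscr{A},\mathscr{B},\mathscr{H}_0$, the bracket is positive exactly when $\eta<\min(1/\mathscr{B},\mathscr{H}_0/\mathscr{A})$. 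Alternatively, the JKO energy inequality quoted from \citet{ambrosio2005gradient} already gives monotonicity for the exact minimiser. The step-size condition is then needed only to guarantee that the empirical/Euler realisation stays within the regime where the expansion is valid.

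Third, I will conclude: the sequence is non-increasing and bounded below by $0$, so it converges to a finite value. Summing the descent inequality telescopically gives, as a by-product, $\sum_t \mathbb{E}_{\rho_t}\|\nabla\frac{\delta \mathbb{D}_{\rm{KL}}}{\delta\rho_t}\|^2<\infty$.

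The main obstacle is the second-order remainder. Rigorously bounding the entropy Hessian term requires control of $\nabla v_t$, i.e. second derivatives of $\log\rho_t$. This is stronger than the stated $\mathscr{B}$ bound, and for empirical (atomic) $\rho_t$ the KL divergence is not even finite. My first attempt will therefore read $\mathscr{B}$ as a Lipschitz constant for the velocity field and work with smoothed (e.g. kernel-mollified) empirical densities. I will state this interpretation explicitly as part of the hypotheses.
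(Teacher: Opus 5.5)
\textbf{Verdict: there is a genuine gap.} Your main argument breaks at the descent inequality.

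For monotonicity you need the second-order remainder of the Euler step to be a small multiple of the dissipation $\eta\,\mathbb{E}_{\rho_t}\|v_t\|_2^2$. But the entropy part of that remainder is, to leading order, $\tfrac{\eta^2}{2}\mathbb{E}_{\rho_t}\|\nabla v_t\|_F^2$ ($\nabla v_t$ is a Hessian, hence symmetric). Neither a sup bound nor a Lipschitz bound on $v_t$ controls this by $\mathbb{E}_{\rho_t}\|v_t\|_2^2$. For $\log(\rho_t/p_T)=\delta\sin(kx_1)+\mathrm{const}$ one has $\mathbb{E}\|\nabla v_t\|_F^2\approx\delta^2k^4/2$, while $\mathrm{Lip}(v_t)\,\mathbb{E}\|v_t\|_2^2\approx\delta^3k^4/2$ and $\sup\|v_t\|\,\mathbb{E}\|v_t\|_2^2\approx\delta^3k^3/2$. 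So the best available remainder is additive, $C\eta^2$ (this is exactly the paper's $C\eta^2$). Writing $\mathbb{D}_t:=\mathbb{D}_{\rm KL}[\rho_t,p_T]$, the bound $\mathbb{D}_{t+1}\le\mathbb{D}_t-\eta\,\mathbb{E}\|v_t\|_2^2+C\eta^2$ gives no monotonicity once $\mathbb{E}\|v_t\|_2^2<C\eta$, i.e.\ precisely in the late stage.

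The failure is real, not an artefact of the estimate. Take $p_T=\mathcal{N}(0,1)$, $\rho_t\propto p_T\,e^{\delta\sin(kx)}$ with $\delta=0.01$, $k=10$, and $\eta=\tfrac12$. At $\rho_t$ one has $|\log(\rho_t/p_T)+1|\le 1.01=:\mathscr{A}$, $\sup|v_t|=0.1$ and $\mathrm{Lip}(v_t)=1$. Also $\mathscr{H}_0=1>\mathbb{E}_{p_T}|x|$ is admissible, so $\eta<\min(1/\mathscr{B},\mathscr{H}_0/\mathscr{A})$ holds under either reading of $\mathscr{B}$. Yet the Euler map $x\mapsto x-\tfrac1{20}\cos(10x)$ has derivative $1+\tfrac12\sin(10x)$, and one step raises the KL from about $\delta^2/4$ to about $\log\frac{2}{1+\sqrt3/2}\approx 0.07$. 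Rescaling $k$ and $\delta$ shows that enlarging your absolute constant $c$ does not help.

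The potential part is also unsupported. Bounding $\mathbb{E}[v_t^\top\nabla^2(-\log p_T)v_t]$ needs an upper bound on the Hessian of $-\log p_T$. Neither $\mathscr{A}$ (a sup bound on $\log(\rho/p_T)$) nor $\mathscr{H}_0$ (an $L^1(p_T)$ bound on the score) provides one. Moreover, your factor $\eta\mathscr{A}/\mathscr{H}_0$ shrinks as $\mathscr{H}_0$ grows, although a larger $\mathscr{H}_0$ is a weaker assumption.

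The paper takes a different route that never claims monotonicity:
\begin{itemize}
\item it keeps the additive $C\eta^2$ with $C\propto\mathscr{B}^2$;
\item it reinterprets $\mathscr{H}_0$ as a log-Sobolev constant and uses a perturbed LSI, $\mathbb{E}_{\rho_t}\|v_t\|_2^2\ge\mathscr{H}_0^{-1}(\mathbb{D}_t-\mathscr{A}/\mathscr{H}_0)$;
\item this turns the dissipation into the contraction $\mathbb{D}_{t+1}\le(1-\eta/\mathscr{H}_0)\mathbb{D}_t+\eta\mathscr{A}/\mathscr{H}_0^2+C\eta^2$;
\item the two step-size restrictions are then read off from requiring contraction and that $C\eta^2$ not dominate near the equilibrium $\mathscr{A}/\mathscr{H}_0$.
\end{itemize}
A functional inequality of this kind is what your plan is missing. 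A contraction recursion can absorb an additive error (at the price of controlling $\limsup_t\mathbb{D}_t$ rather than proving monotone decrease); monotone convergence cannot.

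Your fallback through the exact JKO energy inequality is correct: testing the minimisation with $\rho=\rho_t$ gives $\mathbb{D}_{t+1}\le\mathbb{D}_t$ whenever $\mathbb{D}_0<\infty$ and minimisers exist. But it holds for every $\eta>0$ and uses none of $\mathscr{A},\mathscr{B},\mathscr{H}_0$. So it says nothing about the Euler/empirical realisation, which is what the step-size condition and the paper's proof concern, and which is exactly where your main argument fails.
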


Before presenting the proof, we should introduce the light-tailness property on the target distribution $p_T(x)$ in order to ensure the validity of our Taylor expansion and to control higher-order discretization errors during the proof process. Specifically, we say that $p_T(x)$ is light-tailed~\citep{ambrosio2005gradient,johnson2018composite,8744312} if there exists a universal constant $\mathscr{H}_0 < \infty$ such that
\begin{equation}
\int \|\nabla \log p_T(x)\|\,p_T(x)\mathrm{d}x < \mathscr{H}_0.
\end{equation}
We call $\mathscr{H}_0$ the ``light‑tail constant'' of $p_T(x)$. This condition requires that the expectation (under $p_T(x)$) of the norm of the score function $\nabla p_T(x)$ is finite. Intuitively, this ensures that $p_T(x)$ decays sufficiently rapidly in the tails so that the gradients do not blow up at infinity. On this basis, the proof is articulated as follows motivated by~\citet{8744312}:
\begin{proof}
Suppose at time $t$ and time $t+\eta$, we have:
\begin{equation}\label{eq:transportationMap}
   x_{t+\eta} = \boldsymbol{T}(x) \coloneqq x_{t} + \eta v_t(x_t).
\end{equation}
We denote the probability distributions, before and after applying~\Cref{eq:transportationMap} as $\rho_{t}(x)$ and $\rho_{t+\eta}(x)$, respectively. The aim is to Taylor expand the evolution of
\begin{equation}
\mathbb{D}_{\rm{KL}}[\rho_{t+\eta}(x) , p_T(x)] = \int \rho_{t+\eta}(x) \log \frac{\rho_{t+\eta}(x) }{p_T(x)} \mathrm{d}x,
\end{equation}
with respect to $\eta$ around $\eta = 0$. The new probability distribution, for small $\eta$, can be given as follows according to the Liouville's theorem:
\begin{equation}
 \rho_{t+\eta}(x)  =   \rho_t(\boldsymbol{T}^{-1}(x)) \cdot \left| \det \boldsymbol{J}_{\boldsymbol{T}^{-1}}(z) \right|
\end{equation}
where $\boldsymbol{J}_{\boldsymbol{T}^{-1}}(z) $ is the Jacobian matrix of the inverse map, and when $\eta$ is small enough, the inverse function $\boldsymbol{T}^{-1}(x)$ of function $\boldsymbol{T}(x)$ can be given as follows:
\begin{equation}
\boldsymbol{T}^{-1}(x) \approx x - \eta v_t(x).
\end{equation}
Hence, expanding to the first order in $\eta$ can be given as follows:
\begin{equation}
\begin{aligned}
\rho_{t+\eta}(x) &\approx \rho_{t}(x - \eta v_t(x)) \left[ 1 - \eta \nabla \cdot \phi(z) \right] 
\approx \rho_t(x) - \eta \nabla \left[\rho_t(x) v_t(x)\right] .
\end{aligned}
\end{equation}
Define $F(\eta)$:
\begin{equation}
F(\eta) \coloneqq\mathbb{D}_{\rm{KL}}[\rho_{t+\eta}(x) , p_T(x)] = \int \rho_{t+\eta}(x) \log \frac{\rho_{t+\eta}(x) }{p_T(x)} \mathrm{d}x.
\end{equation}
Applying Taylor's expansion at $\eta = 0$, we can obtain the following result:
\begin{equation}
F(\eta)  = F(0) + \eta F'(0) + \mathcal{O}(\eta^2).
\end{equation}

Now, we start deriving $F'(0)$. When we take the derivative inside the integral, we have:
\begin{equation}
\begin{aligned}
F'(\eta) &= \frac{\mathrm{d}}{\mathrm{d}\eta} \int \rho_{t+\eta}(x) \log \frac{\rho_{t+\eta}(x)}{p_T(x)} \mathrm{d}x = \int \frac{\mathrm{d}}{\mathrm{d}\eta} \rho_{t+\eta}(x) [ 1 + \log \frac{\rho_{t+\eta}(z)}{p_T(x)} ] \mathrm{d}x
\end{aligned}
\end{equation}
At $\eta = 0$, $\rho_{t+\eta}(x)=\rho_t(x)$:
\begin{equation}
F'(0) = \int \left. \frac{\mathrm{d}}{\mathrm{d}\eta} \rho_{t+\eta}(x) \right|_{\eta=0}[ 1 + \log \frac{\rho_t(x)}{p_T(x)} ] \mathrm{d}x.
\end{equation}
Now, using the result from the calculus of variations:
\begin{equation}
\left. \frac{\mathrm{d}}{\mathrm{d}\eta} {\rho}_{t+\eta}(x) \right|_{\eta=0} = -\nabla  \cdot (\rho_t(x) v_t(x))
\end{equation}
Thus,
\begin{equation}
F'(0) =  -\int \nabla\cdot (\rho_t(x) v_t(x)) [ 1 + \log \frac{\rho_t(x)}{p_T(x)} ] \mathrm{d}x.
\end{equation}

Now, use integration by parts:
\begin{equation}
    \int -\nabla \cdot [\rho_t(x) v_t(x)] g(x)  \mathrm{d}z = \int [\rho_t(x) v_t(x)]^\top \nabla g(x)  \mathrm{d}x.
\end{equation}
Set $g(x) = 1 + \log \frac{\rho_t(x)}{p_T(x)}$. Its gradient is:
\begin{equation}
\nabla g(x) = \nabla \log \rho_t(x) - \nabla \log p_T(x).
\end{equation}
Hence,
\begin{equation}
F'(0)
= \int  [\rho_t(x)  v_t(x)]^\top [\nabla \log \rho_t(x) - \nabla \log p_T(x)] \mathrm{d}x  = \mathbb{E}_{\rho_t(x)} \left[ v_t^\top(x) (\nabla \log \rho_t(x) - \nabla \log p_T(x)) \right].
\end{equation}
But with a negative sign because the original derivative is minus divergence:
\begin{equation}
F'(0) = -\mathbb{E}_{\rho_t(x)} \left[ v_t^\top(x) (\nabla \log \rho_t(x) - \nabla \log p_T(x)) \right].
\end{equation}
Putting all together, we get:
\begin{equation}
\begin{aligned}
 \mathbb{D}_{\rm{KL}}[\rho_{t+\eta}(x),p_T(x)] 
=   \mathbb{D}_{\rm{KL}}[\rho_t(x), p_T(x)]
- \eta\mathbb{E}_{\rho_t(x)} \big[ v_t^\top(x)(\nabla \log p_T(x) - \nabla \log \rho_t(x))\big]
+ \mathcal{O}(\eta^2).
\end{aligned}
\end{equation}
Notably, the optimal velocity field $v^*_t(x)$ for KL divergence is $-\nabla\frac{\delta \mathbb{D}_{\rm{KL}}[\rho_t(x)]}{\delta \rho_t(x)}$. Thus, we have:
\begin{equation}
\begin{aligned}
\mathbb{D}_{\rm{KL}}[\rho_{t+\eta}(x),p_T(x)] 
= \mathbb{D}_{\rm{KL}}[\rho_t(x), p_T(x)]
 \underbrace{- \eta\mathbb{E}_{\rho_t(x)} \big[(\nabla \log p_T(x) - \nabla \log \rho_t(x))^\top v_t^*(x)\big]}_{\le 0}
+ \mathcal{O}(\eta^2).
\end{aligned}
\end{equation}

Since $\Vert \nabla \tfrac{\delta \mathbb{D}_{\rm{KL}}[\rho(x),p_T(x)]}{\delta \rho(x)}  \Vert \le \mathscr{B}$, there exists a positive constant $C$ such that:
\begin{equation}\label{eq:descentInequalityKLDivergence}
 \begin{aligned}
&  \mathbb{D}_{\rm{KL}}[\rho_t(x), p_T(x)]
- \eta\mathbb{E}_{\rho_t(x)} \big[(\nabla \log p_T(x) - \nabla \log \rho_t(x))^\top v_t^*(x)\big]
+ \mathcal{O}(\eta^2)\\
 \le& \mathbb{D}_{\rm{KL}}[\rho_{t}(x),p_T(x)]  -\eta\mathbb{E}_{\rho_t(x)} \big[(\nabla \log p_T(x) - \nabla \log \rho_t(x))^\top v_t^*(x)\big] + C\eta^2,
\end{aligned}   
\end{equation}
where constant $C$ satisfies the following condition:
\begin{equation}
    C\propto \mathscr{B}^2.
\end{equation}
To avoid $C\eta^2$ dominating the right-hand-side of~\Cref{eq:descentInequalityKLDivergence}, we should satisfy the following condition:
\begin{equation}\label{eq:etaInequality1}
  \frac{1}{\eta^2} \gg \mathscr{B}^2 \Rightarrow \eta \ll \frac{1}{\mathscr{B}}\Rightarrow  \eta < \frac{1}{\mathscr{B}}.
\end{equation}
According to the log–Sobolev inequality~\citep{durmus2019analysis,ambrosio2005gradient,guo2025provable}, for the target distribution $p_T(x)$ satisfying the curvature condition controlled by $\mathscr{H}_0>0$ (i.e., strong log–concavity or equivalent tailness control), there exists a log–Sobolev constant $\mathscr{H}_0$ such that for any smooth density $\rho_t(x)$,
\begin{equation}\label{eq:lsi}
    \mathbb{D}_{\rm{KL}}[\rho_t(x), p_T(x)]
    \le \frac{\mathscr{H}_0}{2}\, \mathbb{E}_{\rho_t(x)}
    \big[\|\nabla\log\tfrac{\rho_t(x)}{p_T(x)}\|^2\big].
\end{equation}
Equivalently, the following lower-bound form holds:
\begin{equation}\label{eq:lsi-lower}
    \mathbb{E}_{\rho_t(x)}\big[\|v_t^*(x)\|^2\big]
    \ge \frac{1}{\mathscr{H}_0}\,
    \mathbb{D}_{\rm{KL}}[\rho_t(x),p_T(x)],
\end{equation}
where we used $v_t^*(x) = \nabla \log p_T(x) - \nabla \log \rho_t(x)$.

When the variational derivative $\frac{\delta \mathbb{D}_{\rm{KL}}[\rho_t(x),p_T(x)]}{\delta\rho_t(x)}$ is bounded by $\mathscr{A}$, and the spatial gradient of this functional derivative is bounded by $\mathscr{B}$, the log–Sobolev inequality admits the following perturbation-corrected version:
\begin{equation}\label{eq:lsi-corrected}
    \mathbb{E}_{\rho_t(x)}[\|v_t^*(x)\|^2]
    \ge \frac{1}{\mathscr{H}_0}
\{
          \mathbb{D}_{\rm{KL}}[\rho_t(x), p_T(x)]
        - \frac{\mathscr{A}}{\mathscr{H}_0}
    \},
\end{equation}
where the correction term $\frac{\mathscr{A}}{\mathscr{H}_0}$ compensates for the bounded $\|\nabla\frac{\delta \mathbb{D}_{\rm{KL}}}{\delta\rho}\|$ and ensures dimensional consistency of the energy inequality. Plugging~\Cref{eq:lsi-corrected} into~\Cref{eq:descentInequalityKLDivergence} gives:
\begin{equation}\label{eq:descent}
\begin{aligned}
& \mathbb{D}_{\rm{KL}}[\rho_{t+\eta}(x),p_T(x)] \\
\le& \mathbb{D}_{\rm{KL}}[\rho_t(x),p_T(x)]
- \eta\,\mathbb{E}_{\rho_t}[\|v_t^*(x)\|^2] + C\eta^2 \\
\le& \mathbb{D}_{\rm{KL}}[\rho_t(x),p_T(x)]
- \frac{\eta}{\mathscr{H}_0}
   \{\mathbb{D}_{\rm{KL}}[\rho_t(x),p_T(x)] - \frac{\mathscr{A}}{\mathscr{H}_0}\}
+ C\eta^2,
\end{aligned}
\end{equation}

Rearranging~\Cref{eq:descent}, we have:
\begin{equation}\label{eq:recursive}
 \mathbb{D}_{\rm{KL}}[\rho_{t+\eta}(x),p_T(x)] 
\le (1-\tfrac{\eta}{\mathscr{H}_0})\,
\mathbb{D}_{\rm{KL}}[\rho_t(x),p_T(x)]
+ \tfrac{\eta\mathscr{A}}{\mathscr{H}_0^2}
+ C\eta^2.
\end{equation}
To ensure that the iteration gradually reduces $\mathbb{D}_{\rm{KL}}[\rho_t(x),p_T(x)]$,
we should require:
\begin{equation}
(1-\tfrac{\eta}{\mathscr{H}_0}) \in (0,1)
\quad\Rightarrow\quad 0<\eta<\mathscr{H}_0.
\end{equation}
According to~\Cref{eq:etaInequality1}, ignoring $C\eta^2$, we obtain the equilibrium point,
corresponding to the steady state as $t\!\to\!\infty$:
\begin{equation}
   \mathbb{D}_{\rm{KL}}[\rho_{\infty}(x), p_T(x)] =
   \frac{\mathscr{A}}{\mathscr{H}_0}.
\end{equation}

\noindent
Next, to ensure that each discrete update indeed decreases 
the KL divergence, we impose
\begin{equation}\label{eq:descentcondition}
   \mathbb{D}_{\rm{KL}}[\rho_{t+\eta}(x),p_T(x)]
   < \mathbb{D}_{\rm{KL}}[\rho_t(x),p_T(x)].
\end{equation}
Substituting~\Cref{eq:recursive} into~\Cref{eq:descentcondition} yields the following result:
\begin{equation}
-\frac{\eta}{\mathscr{H}_0}\mathbb{D}_{\rm{KL}}[\rho_t(x),p_T(x)]
+\frac{\eta\mathscr{A}}{\mathscr{H}_0^2}
+ C\eta^2 < 0.
\end{equation}
Dividing both sides by $\eta>0$ and rearranging terms gives the following equation:
\begin{equation}\label{eq:balance}
   \mathbb{D}_{\rm{KL}}[\rho_t(x),p_T(x)]
   > \frac{\mathscr{A}}{\mathscr{H}_0}
   + C\eta\,\mathscr{H}_0.
\end{equation}
In the late stage of GDA task, 
$\mathbb{D}_{\rm{KL}}[\rho_t(x),p_T(x)]$ approaches its equilibrium value
$\mathbb{D}_{\rm{KL}}[\rho_\infty(x),p_T(x)]=\frac{\mathscr{A}}{\mathscr{H}_0}$,
so that:
\begin{equation}
\mathbb{D}_{\rm{KL}}[\rho_t,p_T]-\mathbb{D}_{\rm{KL}}[\rho_\infty,p_T]
\approx \mathbb{D}_{\rm{KL}}[\rho_\infty,p_T]
= \frac{\mathscr{A}}{\mathscr{H}_0}.
\end{equation}
Hence, the typical contraction strength per update is of order:
\begin{equation}\label{eq:termorder}
  \frac{\eta}{\mathscr{H}_0}
  \big(\mathbb{D}_{\rm{KL}}[\rho_t,p_T]
  -\mathbb{D}_{\rm{KL}}[\rho_\infty,p_T]\big)
  \;\approx\;
  \frac{\eta}{\mathscr{H}_0}\frac{\mathscr{A}}{\mathscr{H}_0}.
\end{equation}
To guarantee that the quadratic residual $C\eta^2$ 
does not dominate the contraction term in~\Cref{eq:termorder}, 
we require:
\begin{equation}
   C\eta^2 \ll 
   \frac{\eta}{\mathscr{H}_0}\frac{\mathscr{A}}{\mathscr{H}_0}
   \quad\Longrightarrow\quad
   \eta \ll \frac{\mathscr{H}_0}{\mathscr{A}}.
\end{equation}
That is, the discretization step must satisfy the stability condition since $\mathscr{A}>0$:
\begin{equation}\label{eq:etaInequality2}
   0 < \eta < \frac{\mathscr{H}_0}{\mathscr{A}}<\mathscr{H}_0,
\end{equation}
which ensures that the numerical update is 
dominated by the contraction term rather than 
by the additive bias or high-order error. Based on~\Cref{eq:etaInequality1,eq:etaInequality2}, we arrive at the desired result. \end{proof}

}



\section{Detailed Algorithm of the E-SUOT Framework}\label{appendix-sec:overallWorkFlow}
While~\Cref{algo:workflowImputePartOverall} outlines the general workflow for generating the intermediate domain, it does not specify how E-SUOT can be applied to the GDA task. To bridge this gap, we first present the complete workflow for E-SUOT-based GDA in~\Cref{algo:completeAlgorithmESUOT}.  


Building on this foundation, the complete workflow for E-SUOT-based gradual domain adaptation is summarized in~\Cref{algo:completeAlgorithmESUOT} based on~\Cref{algo:workflowImputePartOverall}. Notably, our algorithm decouples the training of the transport function $\boldsymbol{T}_\theta$ from the fine-tuning of the classifier $h_\omega$. This separation allows the intermediate domain to be generated offline and subsequently used for online inference, potentially reducing overall computation time comparable to traditional GDA approaches.

\begin{algorithm}[!h]
\caption{Overall Workflow for Constructing E-SUOT-based Gradual Domain Adaptation}
\label{algo:completeAlgorithmESUOT}
\textbf{Input:} Source domain samples: $\{(x_0^{(i)},y_0^{(i)})\}_{i=1}^{\mathrm{N}}$, target domain samples: $\{(x_T^{(i)},y_T^{(i)})\}_{i=1}^{\mathrm{N}}$, entropy regularization strength: $\epsilon$, step size: $\eta$, number of intermediate domains $T-1$, neural network batch size $\mathcal{B}$, and neural network training epochs: $\mathcal{E}$.
\\
\textbf{Output:} Classifier in target domain $h_{\omega,T}$. 
\begin{algorithmic}[1]

\STATE Initialize the classifier $h_{\omega,0}$: $h_{\omega,0}\leftarrow 
\mathop{\arg\min}_{\omega}\mathcal{L}_{\text{CE}}(x_0,h_{\omega,t},y_0)$.
\STATE Train $\mathcal{T}=\{\boldsymbol{T}_{\theta,t}\}_{t=1}^{T-1}$: $\mathcal{T}\leftarrow $Algorithm~\ref{algo:workflowImputePartOverall}.
\FOR{$t=0$ to $T-1$}
\STATE Obtain the intermediate domain data $\{(x_{t+1}^{(i)},y_{t+1}^{(i)})\}$: $x_{t+1}^{(i)}\leftarrow \boldsymbol{T}_{\theta,t}(x_{t}^{(i)})$ and $y_{t+1}^{(i)}\leftarrow y_{t}^{(i)}$ for all $i\in\{1,\ldots,\mathrm{N}\}$.
\STATE Finetune the classifier $h_{\omega,t+1}$: $h_{\omega,t+1}\leftarrow 
\mathop{\arg\min}_{\omega}\mathcal{L}_{\text{CE}}(x_{t+1},h_{\omega,t},y_{t+1})$.
\ENDFOR
\end{algorithmic}
\end{algorithm}

\section{Detailed Information for Experiments}\label{appendix-sec:experimentProtocols}



\subsection{Dataset Descriptions}\label{subsec:datasetDescription}
\begin{itemize}
    \item{\textbf{Portraits:} Portraits is a binary gender classification dataset comprising 37,921 front-facing portrait images collected between 1905 and 2013. Following the chronological split protocol of~\citep{kumar2020understanding}, we divide the data into a source domain (the earliest 2,000 images), intermediate domains (14,000 images not utilized in this work), and a target domain (the subsequent 2,000 images), similar to the setting in reference~\citep{zhuang2024gradual}.}
\item{\textbf{Rotated MNIST:} Rotated MNIST is a variant of the standard MNIST dataset~\cite{6296535} in which images are rotated to create domain adaptation challenges. As described in~\cite{he2024gradual,kumar2020understanding}, we use 4,000 source images and 4,000 target images, with the target images rotated by 45$^\circ$ to 60$^\circ$.}
\item{
\textbf{Office-Home:} Office-Home is a domain adaptation benchmark dataset consisting of approximately 15,500 images categorized into 65 object classes commonly found in office and home environments~\citep{venkateswara2017deep}. The dataset encompasses four visually distinct domains---\textit{Artistic (Ar)}, \textit{Clipart (Cl)}, \textit{Product (Pr)}, and \textit{Real-World (Rw)}. Following common domain adaptation protocols, one domain is selected as the source domain while another serves as the target domain, resulting in a total of 12 domain transfer tasks (e.g., Ar$\rightarrow$Rw, Cl$\rightarrow$Pr, etc.)
}
\end{itemize}



\subsection{Experimental Settings}\label{subsec:experimentalSettings}

\subsubsection{Preliminaries of Self-Training method}
Self-training is a classical semi-supervised learning strategy that leverages the model's own predictions on unlabeled data to iteratively improve its performance.
Given a model $h_\omega$ trained on labeled source data, we use it to generate pseudo-labels for unlabeled samples in a target or auxiliary dataset $\mathcal{D}_{\text{aux}}$.
Each unlabeled input $x_i \in \mathcal{D}_{\text{aux}}$ is assigned a pseudo-label $\tilde{y}_i = \mathrm{sign}(h_\omega(x_i))$, indicating a positive or negative prediction.
A new model $h_{\omega'}$ is then trained to minimize the empirical loss on this pseudo-labeled dataset:
\begin{equation}
\mathrm{ST}(\theta, \mathcal{D}_{\text{aux}})
~=~
\mathop{\arg\min}_{h_\omega' \in \mathcal{H}}
\frac{1}{\mathrm{N}_{\text{aux}}}
\sum_{x_i \in \mathcal{D}_{\text{aux}}}
\mathcal{L}\!\left(h_\omega'(x_i), \mathrm{sign}(h_\omega(x_i))\right),
\label{eq:selftrain}
\end{equation}
where $\mathrm{ST}$ is the abbreviation of self-training, $\mathrm{N}_{\text{aux}}$ denotes the sample size of auxiliary dataset $\mathcal{D}_{\text{aux}}$.
This procedure can be iteratively repeated, replacing $\theta$ with the newly optimized $\theta'$ to refine the pseudo-labels over time.

Intuitively, self-training alternates between the following two steps:
\begin{enumerate}[label=\arabic*)]
    \item{Producing pseudo-labels using the current classifier.}
    \item{Retraining the model on these pseudo-labels, thereby progressively refining the decision boundary.}
\end{enumerate}

\subsubsection{Preliminaries of SMMD and MMDSW}\label{subsec:knowledgeSMMDandMMDSW}
In our manuscript, we employ the SMMD~\citep{hertrich2024generative} and MMDSW~\citep{bonetflowing} approaches. SMMD and MMDSW stand for Sliced Maximum Mean Discrepancy and Maximum Mean Discrepancy Sliced Wasserstein, respectively.

For SMMD, we use a sliced MMD formulation with the one-dimensional Riesz kernel $\mathcal{K}(s,t)=-|s-t|$:
\begin{equation}
\mathrm{SMMD}(\mu,\nu)
=
c_\mathrm{D} 
\mathbb{E}_{\vartheta \sim \mathbb{S}^{\mathrm{D}-1}}
[
\mathbb{E}_{X \sim \mu, Y \sim \nu}
|\vartheta^\top X - \vartheta^\top Y|
-
\frac{1}{2}
\mathbb{E}_{X,X' \sim \mu}
|\vartheta^\top X - \vartheta^\top X'|
-
\frac{1}{2}
\mathbb{E}_{Y,Y' \sim \nu}
|\vartheta^\top Y - \vartheta^\top Y'|
],
\end{equation}
where $\vartheta$ is uniformly sampled from the unit sphere $\mathbb{S}^{\mathrm{D}-1}$, $c_\mathrm{D}$ is defined as follows:
\begin{equation}
c_\mathrm{D}
=
\sqrt{\pi}
\frac{
\Gamma\left(\frac{\mathrm{D}+1}{2}\right)
}{
\Gamma\left(\frac{\mathrm{D}}{2}\right)
},
\end{equation}
and $\Gamma(\cdot)$ is the gamma function. Based on SMMD, we choose the Riesz kernel as the kernel function and utilize the MMDSW defined as follows to avoid choosing the bandwidth compared with the vanilla version:
\begin{equation}
\mathrm{MMDSW}(\mu,\nu)
=
- (
\mathcal{SW}_{p}(\mu,\nu)
)^{r/2},
\end{equation}
where $\mathcal{SW}$ is defined as follows:
\begin{equation}
\mathcal{SW}_{p}(\mu,\nu)
=
(
\mathbb{E}_{\vartheta \sim \mathbb{S}^{d-1}}
[
\mathcal{W}_p^p
(
\vartheta_{\#}\mu,
\vartheta_{\#}\nu
)
]
)^{1/p}.
\end{equation}
Here, $\vartheta_{\#}\mu$ and $\vartheta_{\#}\nu$ denote the one-dimensional projected distributions of $\mu$ and $\nu$ along direction $\vartheta$. In our implementation, we use $p=2$ and $r=1$.


\subsubsection{Training and Evaluation Protocols}
For GDA and UDA task, we follow the standard domain adaptation protocol, where model training and hyperparameter tuning are performed using only labeled source data, since validation on the target domain is infeasible in the unsupervised adaptation setting. All results are reported on the target domain dataset without using target labels for validation or early stopping. 

For classifier training under this protocol, let $\widehat{h}_{\omega,t}(x_t)$ denote the logits produced by the classifier parameterized by $\omega$ at time step $t$. We define
\begin{equation}
    h_{\omega,t}(x_t) = \mathrm{softmax}\big(\widehat{h}_{\omega,t}(x_t)\big)
\end{equation}
as the corresponding class-probability vector. Given the ground-truth label $y_t$, the categorical cross-entropy loss is formulated as follows:
\begin{equation}
    \mathcal{L}_{\text{CE}}(x_t, \omega, y_t)
    = -\sum_{i=1}^{\mathcal{B}} y_t^{(i)} 
      \log h_{\omega,t}^{(i)}(x_t)
    = -\sum_{i=1}^{\mathcal{B}} y_t^{(i)} 
      \log [\mathrm{softmax}\big(\widehat{h}_{\omega,t}(x_t)\big)^{(i)}].
\end{equation}


\subsubsection{GDA Task Settings}
\begin{wraptable}{r}{0.30\textwidth}
   \vspace{-2\baselineskip} 
    \centering
    \setlength{\tabcolsep}{4pt}
    \renewcommand{\arraystretch}{1.05}
    \caption{Hyperparameters for E-SUOT on GDA task.}
    \label{tab:hyperParams}
    \small\setlength{\tabcolsep}{2pt}\renewcommand\arraystretch{0.95}
\begin{tabular}{l|llll}
\toprule
Datasets         & $\eta$ & $\mathcal{B}$ & $\epsilon$ & $T$ \\ \midrule
Portraits        & 0.5    & 1024          & 0.1                     & 5   \\
MNIST 45$^\circ$ & 0.5    & 1024          & 0.01                    & 5   \\
MNIST 60$^\circ$ & 0.5    & 2048          & 0.005                   & 5   \\ \bottomrule
\end{tabular}
\end{wraptable}

The official implementations of GOAT~\citep{he2024gradual} and CNF~\citep{sagawa2025gradual} are used in our experiments. Additionally, we employ UMAP~\citep{mcinnes2018umap} to reduce the dimensionality of the three GDA datasets to 8 in order to align with the experimental tuple provided by~\citet{zhuang2024gradual}. The experiments are conducted on a workstation equipped with two NVIDIA RTX 4090 GPUs and repeated under at least three random seeds. The overall hyper-parameters we use in our GDA task are summarized in~\Cref{tab:hyperParams}.

In all experiments, we parameterize the classifier $h_\phi$ as a three-layer multi-layer perceptron (MLP) at each step, utilizing ReLU activation functions and a hidden dimension of 100 for each layer. For both $\boldsymbol{T}_\theta$ and $w_\phi$, we employ a two-layer MLP with the SiLU activation function and incorporate a skip connection to enable a residual structure~\citep{RN68}. All models are optimized by the Adam optimizer~\citep{kingma2014adam} with a learning rate of 0.0001. For all three GDA datasets, we apply UMAP~\citep{mcinnes2018umap} to reduce their dimensionality to eight. For the classifier $h_\omega$, we use a two-layer MLP with ReLU activations and 128 hidden units. All baseline models are trained on features embedded by the UMAP. 
\subsubsection{UDA Task Settings}
For the UDA task, we adopt the Office-Home dataset~\citep{venkateswara2017deep} as the benchmark to evaluate the performance of the proposed E-SUOT framework. Following the standard unsupervised domain adaptation (UDA) protocol, model training and hyperparameter tuning are performed solely using the labeled source data, without access to target labels for validation or early stopping. All results are reported on the target domain dataset.

We compare E-SUOT with a diverse set of representative UDA approaches, including DANN~\citep{ganin2015unsupervised}, MSTN~\citep{xie2018learning}, GVB-GD~\citep{cui2020gradually}, RSDA~\citep{Gu_2020_CVPR,9733209}, LAMBDA~\citep{le2021lamda}, SENTRY~\citep{prabhu2021sentry}, FixBi~\citep{na2021fixbi}, CST~\citep{liu2021cycle}, CoVi~\citep{na2022contrastive}, MMDSW~\citep{bonetflowing}, SMMD~\citep{hertrich2024generative}, STDW~\citep{wang2026SelfDynamicWeight}, AST~\citep{shi2023adversarial}, and GGF~\citep{zhuang2024gradual}. For baselines including DANN, MSTN, GVB‑GD, RSDA, LAMBDA, SENTRY, FixBi, CST, and CoVi, we directly report the publicly available results from their original papers under identical experimental settings (i.e., the same dataset and evaluation protocol). 

For SMMD and MMDSW, we adopt the discrepancy terms defined in~\Cref{subsec:knowledgeSMMDandMMDSW}. For AST, the adversarial perturbation budget is set to 0.1, with the number of AST steps set to 5. For STDW, the number of inter-domain migration steps is set to 4. For GGF, we follow the experimental protocol described in the original paper and conduct the experiments locally to maintain consistency with the original setup. For all the aforementioned methods, including SMMD, MMDSW, AST, STDW, and GGF, we adopt CoVi as the backbone feature extractor. The extracted features are then projected into an eight-dimensional space using UMAP. In addition, our E-SUOT framework also builds upon CoVi as the backbone feature extractor. The extracted features are embedded into an eight-dimensional space using UMAP. The classifiers $h_\omega$ for GGF and E-SUOT are implemented as two‑layer ReLU MLPs with 256 hidden units. We set the $\eta$, $\mathcal{B}$, $\epsilon$, and $T$ as 0.5, 1024, 0.001, and 4, respectively. Both GGF and E-SUOT models are trained under the same conditions for fair comparison.



\subsection{Detailed Information for Ablation Studies}\label{appendix-subsec:ablationStudyResults}
For the ablation study, we ablate two modules namely, the training strategy of $\boldsymbol{T}_\theta$ and the objective functional. The detailed information is provided in this part.

For ``Training Strategy'', the detailed experimental protocols are as follows:
\begin{itemize}
\item{\textbf{Adversarial Training:} 
In our adversarial training scheme, we optimize \Cref{eq:dualLearningObjective}. Building on~\citep{korotin2021neural, korotinneural, choi2023generative, choi2024scalable}, the training of $\boldsymbol{T}_\theta$ is formulated adversarially, as summarized in \Cref{algo:adversarialTraining}. In \textit{Line 5}, the penalty term $\frac{1}{2\eta}\,\| x_{t-1} - \boldsymbol{T}_{\theta,t-1}(x_{t-1}) \|_2^2$ is omitted since it is constant with respect to $w_{\phi,t-1}$.}
\item{\textbf{Barycentric-based Training:} We propose the algorithm for barycentric-based training in~\Cref{algo:baryCenterTraining}. For barycentric-based training, rather than first compute the transport map, we attempt to compute the optimal transport map $\pi^*$ between $\rho(x_{t-1})$ and $p_T(x)$ as we demonstrate in \textit{Line 4}. Based on this, we make barycentric projection~\citep{7586038,perrot2016mapping} using this $\pi^*$ to obtain the proxy points~\citep{weiming_stein_nips,10005854} for transport map learning as we demonstrate in \textit{Line 5}. Finally, the transport map $\boldsymbol{T}_{\theta,t-1}$ is constructed based on these points, similar to the flow matching~\citep{lipmanflow}, as we demonstrate in \textit{Line 6}. 
}

\end{itemize}

\begin{algorithm}[htbp]
\caption{Adversarial Training for $\{\boldsymbol{T}_{\theta,t}\}_{t=0}^{T-1}$.}
\label{algo:adversarialTraining}
\textbf{Input:} Intermediate domain samples: $\{(x_{t-1}^{(i)},y_{t-1}^{(i)})\}_{i=1}^{\mathrm{N}}$ for all $t\in\{1,\ldots,\mathrm{T}\}$, target domain samples: $\{(x_T^{(i)},y_T^{(i)})\}_{i=1}^{\mathrm{N}}$, entropy regularization strength: $\epsilon$, step size: $\eta$, neural network batch size $\mathcal{B}$, and neural network training epochs: $\mathcal{E}$.
\\
\textbf{Output:} The transportation map at $t-1$: $\boldsymbol{T}_{\theta,t-1}$. 
\begin{algorithmic}[1]
\STATE Initialize
\FOR{$e = 1$ to $\mathcal{E}$}
\STATE Sample a batch $\{x_{t-1}^{(i)}\}_{i=1}^{\mathcal{B}}\sim \{(x_{t-1}^{(i)},y_{t-1}^{(i)})\}_{i=1}^{\mathrm{N}}$ and $\{x_T^{(i)}\}_{i=1}^{\mathcal{B}}\sim \{(x_T^{(i)},y_T^{(i)})\}_{i=1}^{\mathrm{N}}$.
\STATE Update $w_{\phi,t-1}$ by: $\phi \leftarrow \mathop{\arg\min}_{\phi } 
\frac{1}{\mathcal{B}}\sum_{i=1}^{\mathcal{B}} \cancel{-\frac{1}{2\eta}\Vert x_{t-1} - \boldsymbol{T}_{\theta,t-1}(x_{t-1}) \Vert_2^2 }+ w_{\phi,t-1}(\boldsymbol{T}_\theta(x_t^{(i)}))
+ \frac{1}{\mathcal{B}}\sum_{j=1}^{\mathcal{B}}f^\star(-w_{\phi,t-1}(x^{(j)}_T)).
$
\STATE Sample a batch $\{x_{t-1}^{(i)}\}_{i=1}^{\mathcal{B}}\sim \{(x_{t-1}^{(i)},y_{t-1}^{(i)})\}_{i=1}^{\mathrm{N}}$.
\STATE Update $\boldsymbol{T}_{\theta,t-1}$ by: $\theta \leftarrow \mathop{\arg\min}_{\theta} \frac{1}{\mathcal{B}}\sum_{i=1}^{\mathcal{B}}\frac{1}{2\eta}\Vert x_{t-1}^{(i)} - \boldsymbol{T}_{\theta,t-1}(x_{t-1}^{(i)}) \Vert_2^2 - w_{\phi,t-1}(\boldsymbol{T}_{\theta,t-1}(x_{t-1}^{(i)}))$.
\ENDFOR

\end{algorithmic}
\end{algorithm}

\begin{algorithm}[!h]
\caption{Barycentric-based training for $\{\boldsymbol{T}_{\theta,t}\}_{t=0}^{T-1}$.}
\label{algo:baryCenterTraining}
\textbf{Input:} Intermediate domain samples: $\{(x_{t-1}^{(i)},y_{t-1}^{(i)})\}_{i=1}^{\mathrm{N}}$ for all $t\in\{1,\ldots,\mathrm{T}\}$, target domain samples: $\{(x_T^{(i)},y_T^{(i)})\}_{i=1}^{\mathrm{N}}$, entropy regularization strength: $\epsilon$, step size: $\eta$, neural network batch size $\mathcal{B}$, and neural network training epochs: $\mathcal{E}$.
\\
\textbf{Output:} The transportation map at $t-1$: $\boldsymbol{T}_{\theta,t-1}$. 
\begin{algorithmic}[1]
\STATE Initialize
\FOR{$e = 1$ to $\mathcal{E}$}
\STATE Sample a batch $\{x_{t-1}^{(i)}\}_{i=1}^{\mathcal{B}}\sim \{(x_{t-1}^{(i)},y_{t-1}^{(i)})\}_{i=1}^{\mathrm{N}}$ and $\{x_T^{(i)}\}_{i=1}^{\mathcal{B}}\sim \{(x_T^{(i)},y_T^{(i)})\}_{i=1}^{\mathrm{N}}$.

\STATE Obtain the optimal transport map $\pi^*(x_{t-1},x_T)$ by: $\pi^*(x_{t-1},x_T)\leftarrow \mathop{\inf}_{\pi}
\frac{1}{2\eta}\,\mathcal{W}_2^2(\rho(x_{t-1}), p_T(x)) + \epsilon \iint \pi(x_{t-1},x_T)[\log{\pi(x_{t-1},x_T)} -1]\mathrm{d}x_{t-1} \mathrm{d}x_{T} + \mathbb{D}_f[\rho(x_{t-1}), p_T(x)]$.
\STATE Obtain the projected samples $\tilde{x}_{t}$ via $\pi^*(x_{t-1},x_T)$: $\tilde{x}_{t} = x_{t-1} \pi^*(x_{t-1},x_T)$:
\STATE Update $\boldsymbol{T}_{\theta,t-1}$ by: $\theta \leftarrow \frac{1}{\mathcal{B}}\sum_{i=1}^{\mathcal{B}}\Vert \tilde{x}_{t}^{(i)} - \boldsymbol{T}_{\theta,t-1}(x_{t-1}^{(i)})\Vert_2^2$

\ENDFOR
\end{algorithmic}
\end{algorithm}

For ``Objective Functional'', the detailed experimental protocols are given as follows:
\begin{itemize}
    \item{\textbf{$\chi^2$ Divergence:} The expression for $\chi^2$ divergence can be given as follows:
    \begin{subequations}
    \begin{align}
        \mathbb{D}_{\chi^2}[\rho(x_t), p_T(x)]  &= \int{  p_T(x) [\frac{\rho(x_t)}{p_T(x)} - 1]^2\mathrm{d}x_t }\\
        f(x) &= (x-1)^2.
     \end{align}
    \end{subequations}
Based on this, the corresponding conjugate function $f^\star$ can be given as follows:
\begin{equation}
    f^\star(x) = \begin{cases}\frac{1}{4}x^2 + x,\indent \text{if} x \ge -2\\
    -1,\indent \text{if} x < -2
    \end{cases}.
\end{equation}
}
\item{\textbf{Identity:}
For the identity function, we remove the $f$-divergence-based regularization term during the construction of E-SUOT framework. Based on this, the training objective for $w_\phi$ is reformulated as follows:
\begin{equation}
\mathcal{L}^{\text{E-SemiDual}}_{\text{Identity}} =  \sup_{w} -\epsilon\mathbb{E}_{p(x_t)}\{\log{\mathbb{E}_{p_T(x)}[\exp(\frac{w(x)-\frac{1}{2\eta}\Vert x - x_t \Vert_2^2}{\epsilon})]}\}+\mathbb{E}_{p_T(x)}[w(x)],
\end{equation}
}
\item{\textbf{\texttt{SftPls}:} We directly parameterize the $f^\star(x)$ by the following smooth, convex, and non-decreasing softplus function:
\begin{equation}
    f^\star(x) = \log[1+\exp(x)].
\end{equation}
}

\end{itemize}

%
\section{Additional Experimental Results}\label{sec:additionalExperResults}

\subsection{Additional Results on Motivation Analysis}

In our toy example analysis, we originally estimate the target domain PDF using denoising score matching~\citep{vincent2011connection}. To provide a more comprehensive analysis, we further incorporate several recent target domain PDF estimation approaches, including RealNVP~\citep{dinh2017density}, Flow Matching (FM)~\citep{lipmanflow}, the Kernel Stein Score Estimator~\citep{li2018gradient}, the Spectral Stein Gradient Estimator (SSGE)~\citep{shi2021sampling}, Gaussian Kernel Density Estimation (GKDE)~\citep{li2018gradient}, variational mixtures of normalizing flows (VMoNF)~\citep{pires2020variational}, and Gaussian Mixture Flow Matching Models (GMFlow)~\citep{chen2025gaussianMixFlowMatching}. The results are shown in~\Cref{fig:placeholder2}.

\begin{figure}[!h]
    \centering
\subfigure[DSM, $\mathcal{W}_2^2\approx 9.7\text{E}0$.\label{subfig:samplingLang000}]{\includegraphics[width=0.245\linewidth]{figures/illu_results/lang_results/lang_t_4.pdf}}
\subfigure[RealNVP, $\mathcal{W}_2^2\approx 2.3\text{E}0$.\label{subfig:samplingNVP000}]{\includegraphics[width=0.245\linewidth]{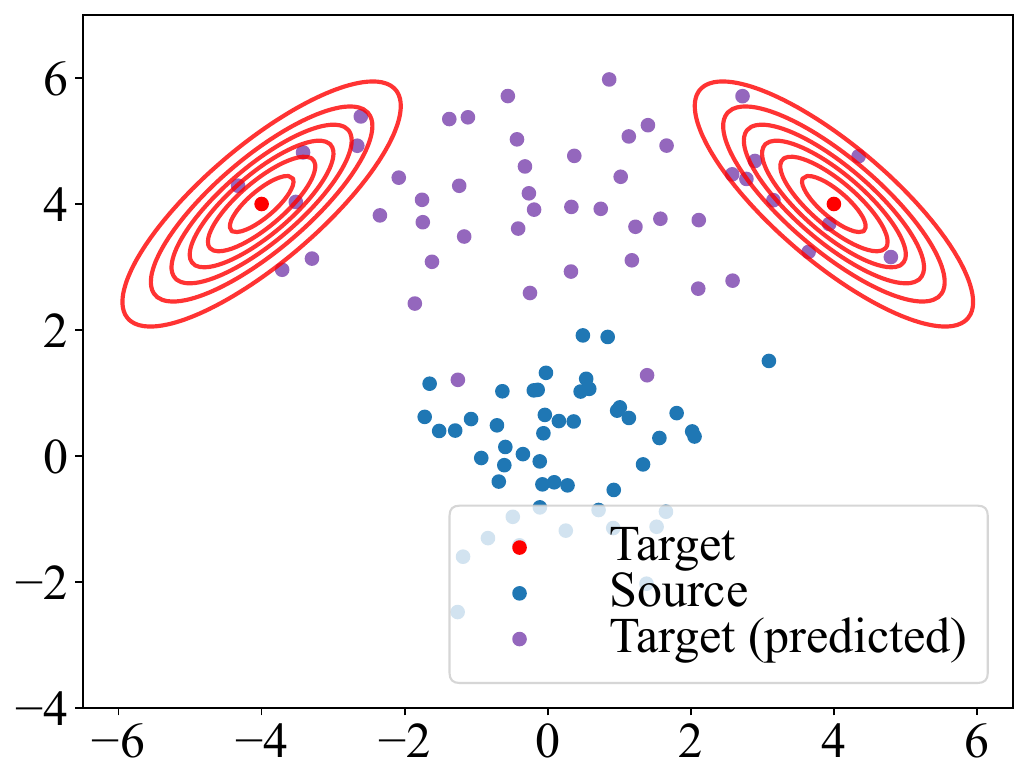}}
\subfigure[FM, $\mathcal{W}_2^2\approx 4.9\text{E}0$.\label{subfig:samplingFM000}]{\includegraphics[width=0.245\linewidth]{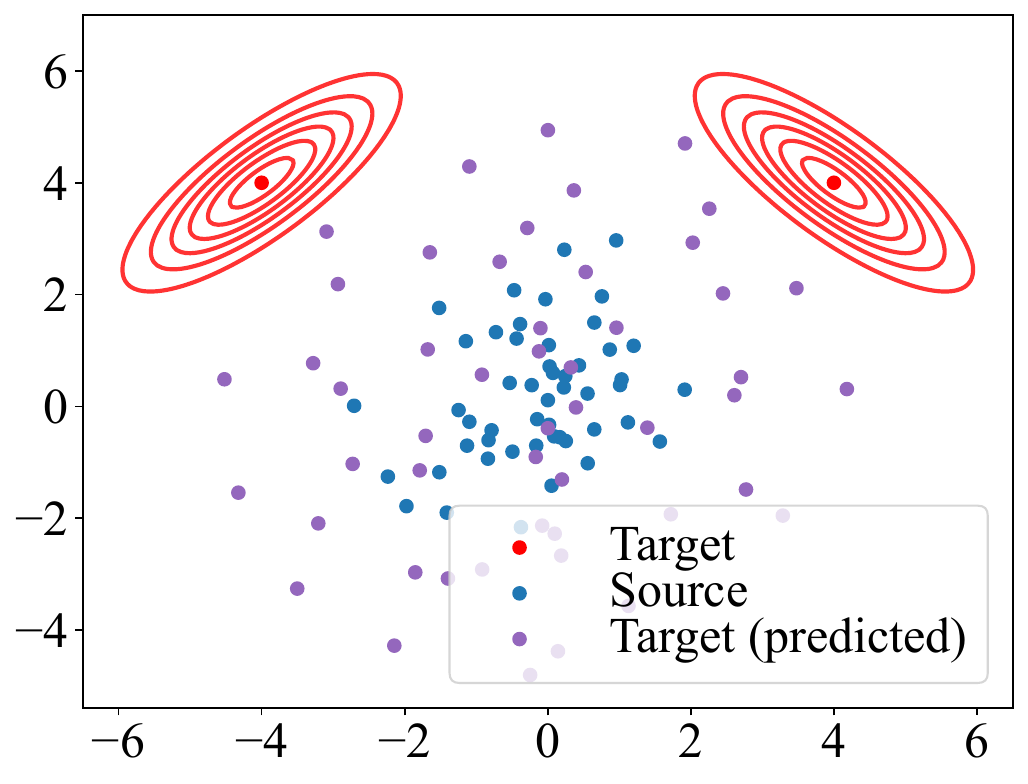}}
\subfigure[KSSE, $\mathcal{W}_2^2\approx 4.2\text{E}0$.\label{subfig:samplingKSD000}]{\includegraphics[width=0.245\linewidth]{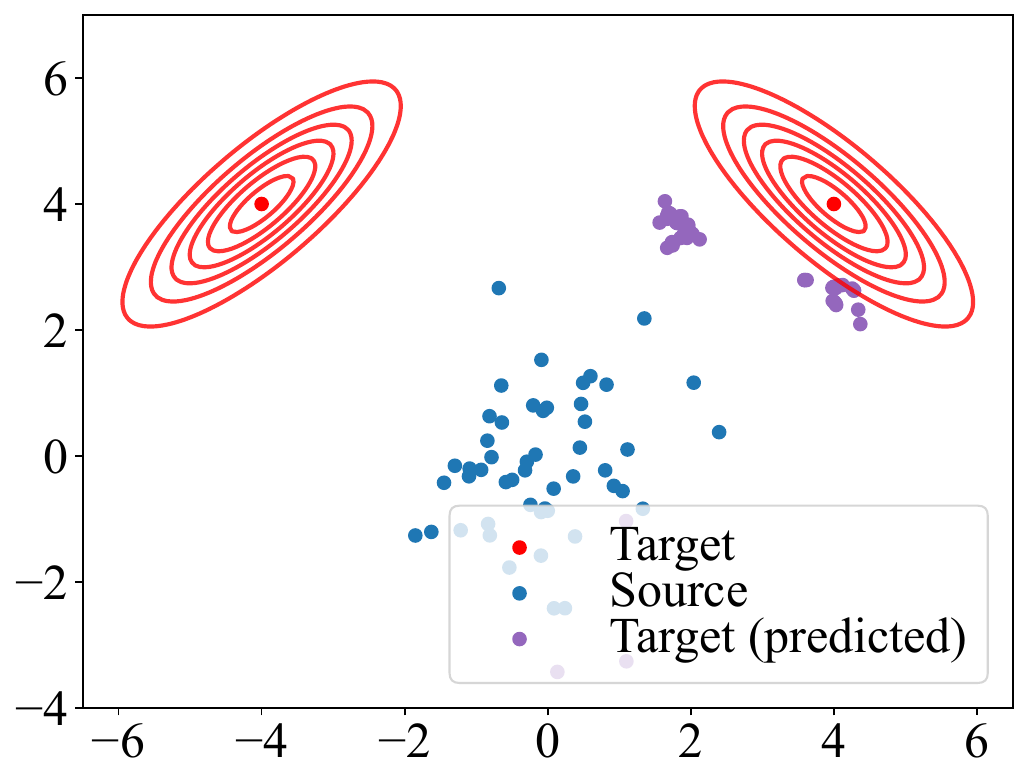}}
\subfigure[SSGE, $\mathcal{W}_2^2\approx 4.3\text{E}0$.\label{subfig:samplingSSGE000}]{\includegraphics[width=0.245\linewidth]{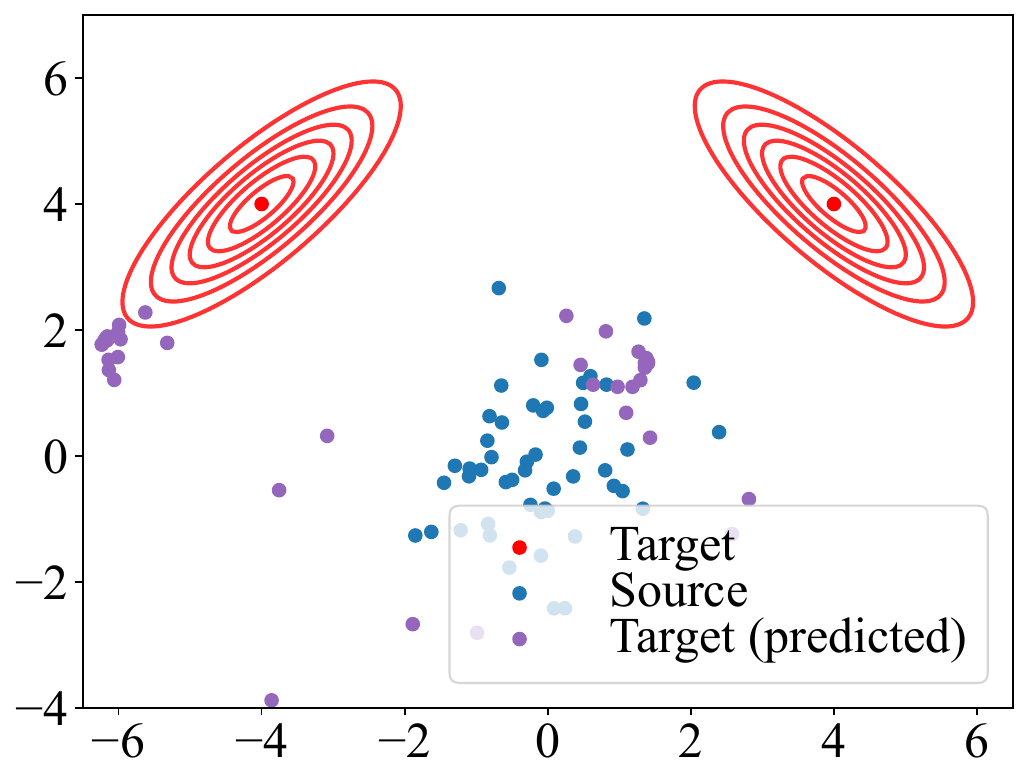}}
\subfigure[GKDE, $\mathcal{W}_2^2\approx 7.2\text{E}-1$.\label{subfig:samplingGKDE000}]{\includegraphics[width=0.245\linewidth]{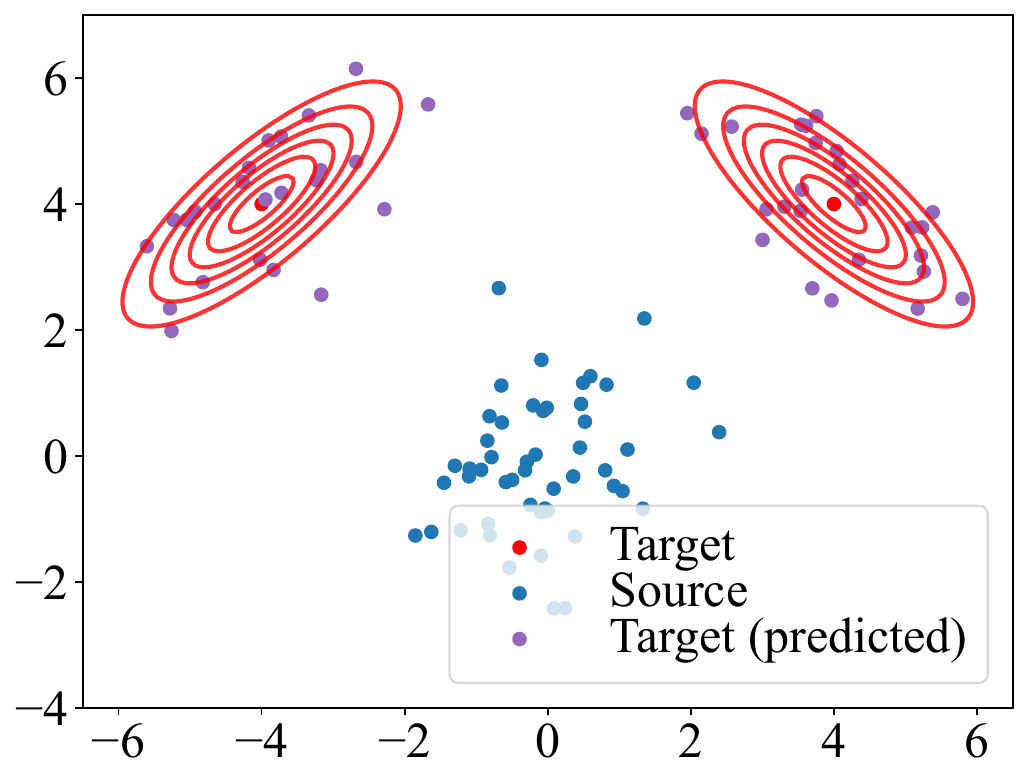}}
\subfigure[VMoNF, $\mathcal{W}_2^2\approx 2.2\text{E}0$.\label{subfig:samplingFlowGMM000}]{\includegraphics[width=0.245\linewidth]{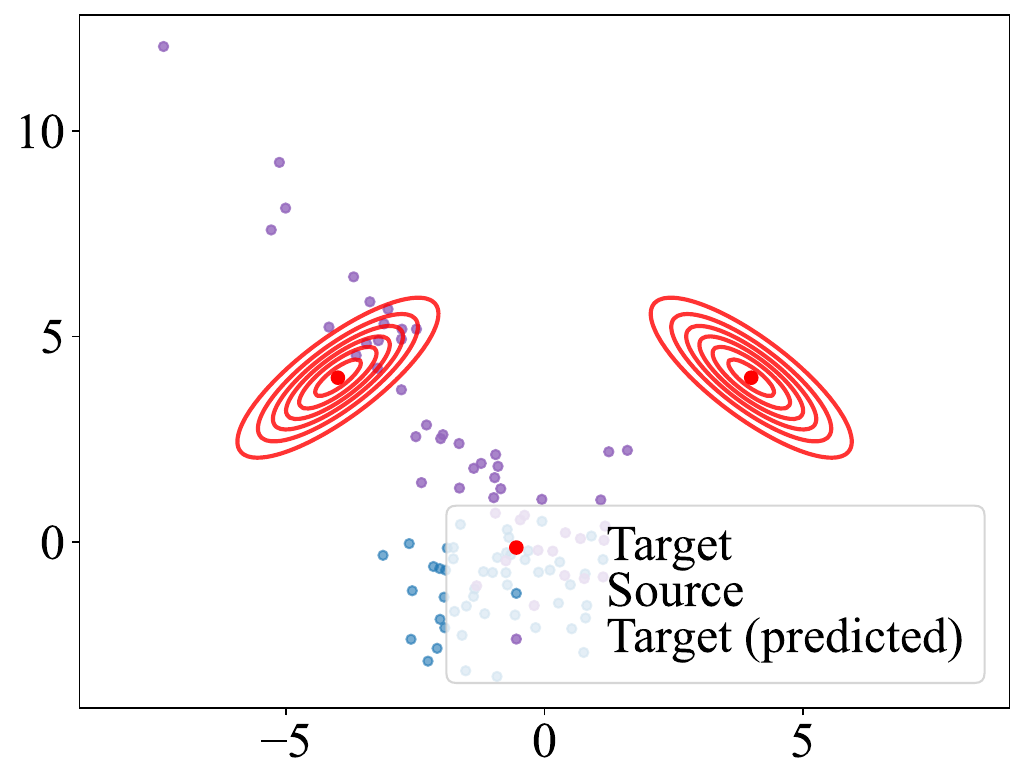}}
\subfigure[GMFlow, $\mathcal{W}_2^2\approx 8.9\text{E}-1$.\label{subfig:samplingGMFM000}]{\includegraphics[width=0.245\linewidth]{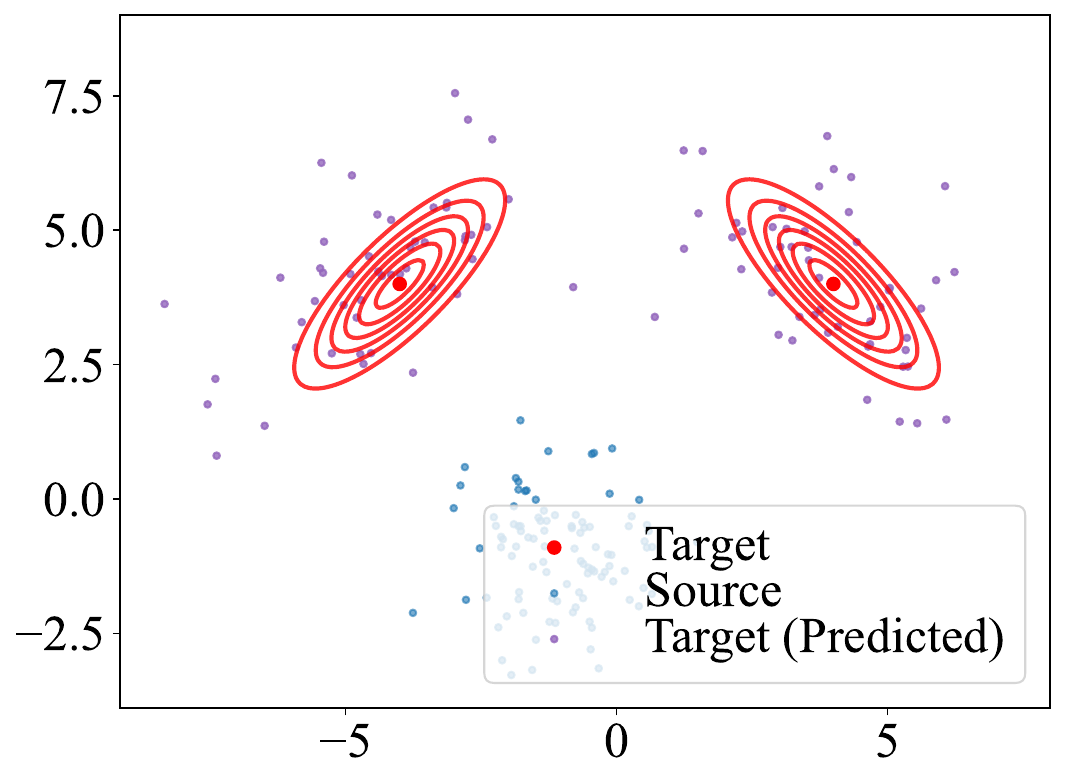}}
    \caption{Comparison between different EstTrans approaches with respect to DirTrans.}\label{fig:placeholder2}
\end{figure}

As shown in~\Cref{fig:placeholder2}, most EstTrans approaches are unable to effectively transport source-domain samples to the target domain. In particular,~\Cref{subfig:samplingLang000,subfig:samplingNVP000,subfig:samplingFM000,subfig:samplingKSD000,subfig:samplingSSGE000,subfig:samplingFlowGMM000} demonstrate that the transported samples tend to be dispersed around the target distribution, leading to poor alignment. In contrast, approaches that perform sample-wise estimation, without explicitly estimating the PDF or directly smoothing the empirical samples, yield relatively lower Wasserstein distances, as shown in~\Cref{subfig:samplingGKDE000,subfig:samplingGMFM000}. This phenomenon further validates our motivation for constructing the velocity field for sample transportation directly from samples.

\newpage
\subsection{Sensitivity Analysis}
From~\Cref{subfig:sens1,subfig:sens2,subfig:sens3,subfig:sens4}, we systematically investigate the sensitivity of our E-SUOT model with respect to key hyperparameters, including batch size $\mathcal{B}$, discretization step size $\eta$, simulation steps $T$, and entropy regularization strength $\epsilon$ on the Portraits dataset.

\begin{figure}[!h]
    \centering
    \subfigure[Acc. (\%) along $\mathcal{B}$.\label{subfig:sens1}]{\includegraphics[width=0.24\linewidth]{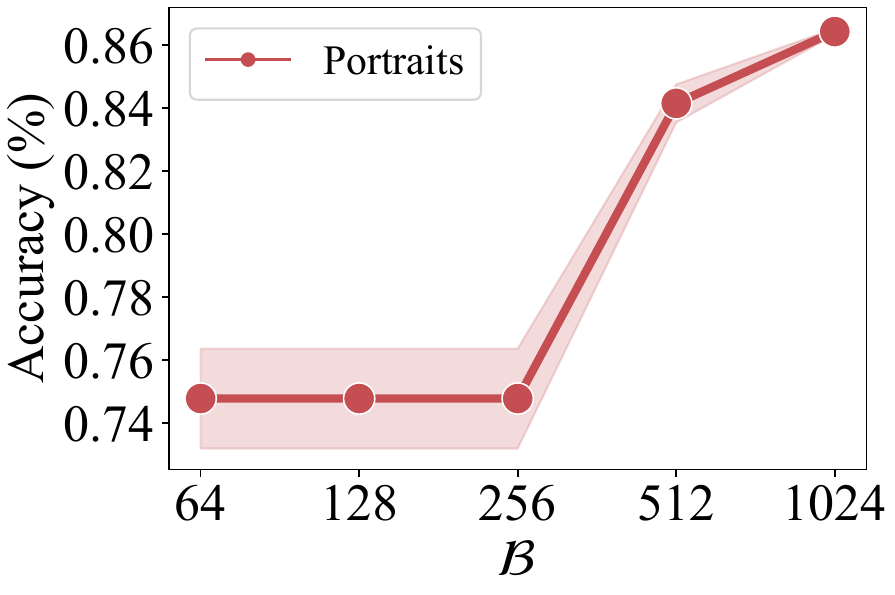}}
    \subfigure[Acc. (\%) along $\eta$.\label{subfig:sens2}]{\includegraphics[width=0.24\linewidth]{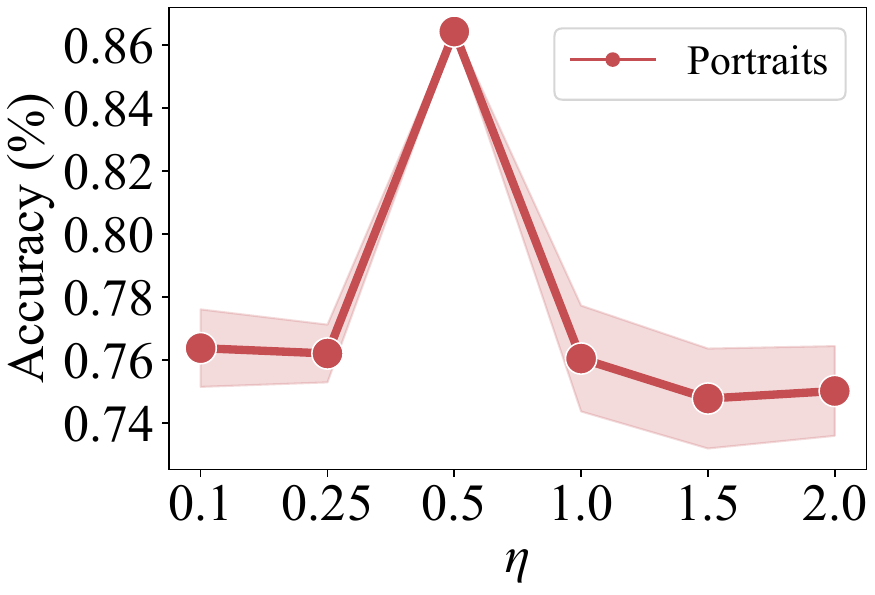}}
        \subfigure[Acc. (\%) along $T$.\label{subfig:sens3}]{\includegraphics[width=0.24\linewidth]{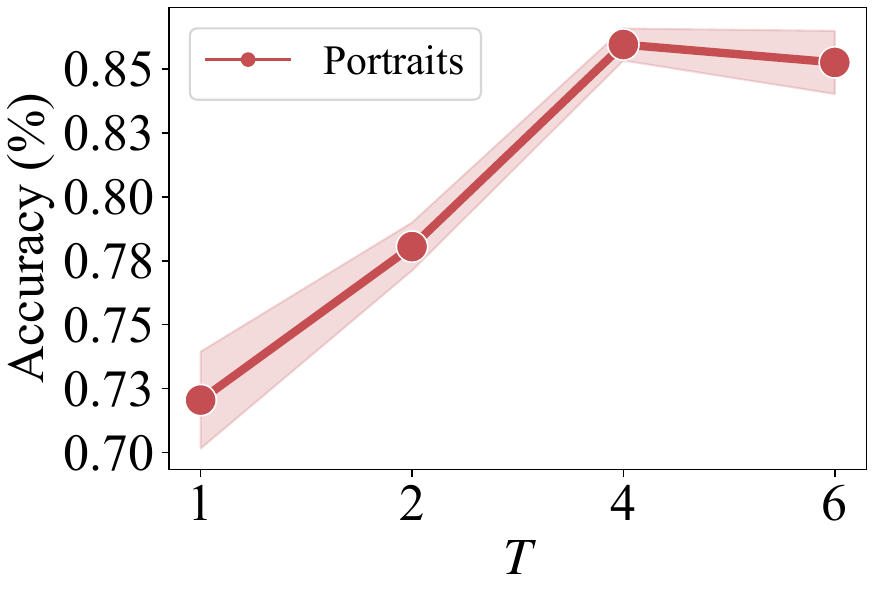}}
            \subfigure[Acc. (\%) along $\epsilon$.\label{subfig:sens4}]{\includegraphics[width=0.24\linewidth]{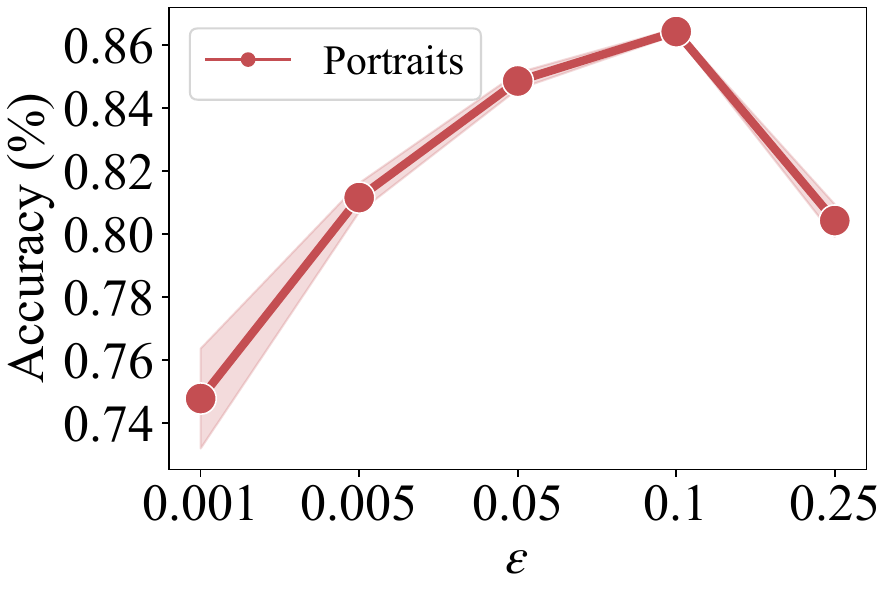}}
    \caption{Sensitivity analysis results on Portrait Dataset. The shaded area indicates the $\pm$ 1.0 times standard deviation error.}\label{fig:sensToalResults}
\end{figure}

Specifically, as shown in~\Cref{subfig:sens1}, increasing the batch size $\mathcal{B}$ consistently improves model performance. This suggests that, in simulations of WGF-based approaches (including ours), using a sufficiently large batch size is beneficial, as it reduces stochastic sampling noise and yields a more stable approximation of the underlying distributions.

A non-monotonic trend is observed when varying the discretization step size $\eta$, as illustrated in~\Cref{subfig:sens2}. When $\eta$ is too small, the simulation trajectory may not reach the target distribution within a finite number of steps, limiting learning efficiency. In contrast, an overly large $\eta$ introduces substantial discretization error, which also degrades performance. These empirical observations are consistent with our theoretical guidance on step-size selection provided in~\Cref{subsec:StepSizeSelectionKLDivergence}.

Moreover, as demonstrated in~\Cref{subfig:sens3}, increasing the number of simulation steps $T$ likewise yields a non-monotonic effect: after a certain threshold, additional steps actually deteriorate performance. A plausible explanation is that enforcing excessively strict alignment between feature and target distributions does not necessarily correspond to optimal performance in the target domain. This observation further supports our introduction of divergence-based regularization, which relaxes strict alignment constraints compared with traditional OT-based methods.

Finally, as shown in~\Cref{subfig:sens4}, the entropy regularization parameter $\epsilon$ has a pronounced impact on the results. Different values of $\epsilon$ can lead to markedly different performance, underscoring the need to carefully examine and tune the entropy regularization strength in practice.

In summary, this sensitivity analysis highlights the importance of properly choosing the batch size $\mathcal{B}$, step size $\eta$, and end time $T$ for achieving good E-SUOT performance, and indicates that the optimal entropy regularization strength $\epsilon$ is dataset-dependent, calling for systematic validation on the target dataset.

\subsection{Impact of Early Transport Steps on Adaptation Performance}

The multi-step structure of E-SUOT involves a series of learned transport maps $\{T_{\theta,0}, T_{\theta,1}, \ldots\}$, where each step aims to progressively align intermediate feature distributions between domains. While this design promotes smooth domain alignment, it also raises an important question: how sensitive the overall adaptation performance is to the quality of early transport maps, and whether suboptimal early mappings introduce cumulative errors that affect subsequent steps.  

To investigate this, we conduct an experiment on the Portraits dataset, where we selectively disable the training of certain transport maps to simulate incomplete or inaccurate early-stage optimization. Two complementary training strategies are designed:
\begin{itemize}[leftmargin=*]
    \item{\textbf{Forward strategy:} Progressively remove the training of early transport maps ($T_{\theta,0}, T_{\theta,1}, \ldots$) while keeping the later ones active, thereby testing whether missing early steps hinder later GDA performance.}
    \item{\textbf{Backward strategy:} progressively disable the training of later maps ($T_{\theta,4}, T_{\theta,3}, \ldots$) while retaining the trained early steps, examining whether well-trained initial stages are sufficient to sustain strong performance.}
\end{itemize}

Table~\ref{tab:comparisonEarlyStage} summarizes the results. We observe that in the ``Forward'' direction, excluding early transport steps causes a substantial accuracy drop (up to 17.7\%), indicating that early-stage mappings are crucial for forming a reliable transport foundation. In contrast, the ``Backward'' experiments show that once these early steps are properly optimized, subsequent refinements yield consistent performance gains, suggesting that later transport maps mainly provide fine adjustments on top of an already well-aligned feature space.
\begin{table}[htbp]
\centering
\caption{
Performance of the E-SUOT under different training stage on the Portraits dataset.}
\label{tab:comparisonEarlyStage}
\resizebox{\linewidth}{!}{
    \begin{threeparttable}
     \small\setlength{\tabcolsep}{2pt}\renewcommand\arraystretch{0.95}
\begin{tabular}{c|lllllll|lllllll}
\toprule
\multicolumn{1}{l|}{Direction}                                              & \multicolumn{7}{c|}{Forward}                                                                                                                                                                          & \multicolumn{7}{c}{Backward}                                                                                                                                                                          \\ \midrule
\multicolumn{1}{l|}{Time Index}                                             & $t=0$                       & $t=1$                       & $t=2$                       & $t=3$                       & $t=4$                       & Acc. (\%)              & $\Delta$           & $t=0$                     & $t=1$                       & $t=2$                       & $t=3$                       & $t=4$                       & Acc. (\%)                  & $\Delta$         \\ \midrule
\multirow{5}{*}{\begin{tabular}[c]{@{}c@{}}Training \\ Status\end{tabular}} & \XSolidBrush & \Checkmark   & \Checkmark   & \Checkmark   & \Checkmark   & 76.4 & $\uparrow$7.2\%    & \Checkmark & \XSolidBrush & \XSolidBrush & \XSolidBrush & \XSolidBrush & 81.5     & $\uparrow$14.4\% \\& \XSolidBrush & \XSolidBrush & \Checkmark   & \Checkmark   & \Checkmark   & 75.0 & $\uparrow$5.3\%    & \Checkmark & \Checkmark   & \XSolidBrush & \XSolidBrush & \XSolidBrush & 83.2     & $\uparrow$16.8\% \\& \XSolidBrush & \XSolidBrush & \XSolidBrush & \Checkmark   & \Checkmark   & 74.4 & $\uparrow$4.4\%    & \Checkmark & \Checkmark   & \Checkmark   & \XSolidBrush & \XSolidBrush & 83.9     & $\uparrow$17.8\% \\& \XSolidBrush & \XSolidBrush & \XSolidBrush & \XSolidBrush & \Checkmark   & 74.0 & $\uparrow$3.9\%    & \Checkmark & \Checkmark   & \Checkmark   & \Checkmark   & \XSolidBrush & 84.0     & $\uparrow$17.9\% \\& \XSolidBrush & \XSolidBrush & \XSolidBrush & \XSolidBrush & \XSolidBrush & 58.6 & $\downarrow$17.7\% & \Checkmark & \Checkmark   & \Checkmark   & \Checkmark   & \Checkmark   & 86.4 & $\uparrow$21.5\% \\ \bottomrule
\end{tabular}
     \begin{tablenotes}
        \item{\textit{Kindly Note}: ${\Delta}$ denotes performance change percentage of the initial classifier accuracy. 
        }
        \end{tablenotes}
\end{threeparttable}
}

\end{table}

Overall, the results highlight that the early transport steps are the key drivers of successful adaptation. When the early mappings are well trained, the rest of the chain benefits from a stabilized feature representation, leading to larger and more consistent improvements. This behavior parallels diffusion-like processes, where the early transport transformations largely determine the shape and quality of the final distribution, as also illustrated in Figure 3 in reference~\citep{8890903} and Figure 1 in reference~\citep{liu2016stein}.

\subsection{Investigation of the UOT Formulation}\label{subsec:formulationUOT}


\begin{table*}[!h]
\centering
\caption{Comparison of vanilla and unbalanced OT formulations for GDA task on Portraits dataset.}\label{tab:comparisonOTFormulation}
    \begin{threeparttable}
   \small\setlength{\tabcolsep}{2pt}\renewcommand\arraystretch{0.95}
   \centering

\begin{tabular}{l|ll|ll|ll|llll}
\toprule
\multicolumn{1}{c|}{\multirow{2}{*}{Method}} & \multicolumn{2}{c|}{$p(y=1)=0.0$}               & \multicolumn{2}{c|}{$p(y=1)=0.1$}                                     & \multicolumn{2}{c|}{$p(y=1)=0.2$}                & \multicolumn{2}{c}{$p(y=1)=0.3$}                                      & \multicolumn{2}{c}{$p(y=1)=0.4$}                 \\ \cmidrule{2-11} 
\multicolumn{1}{c|}{}                        &  Acc. (\%)               & $\Delta$          &  Acc. (\%)                                    & $\Delta$           &  Acc. (\%)               & $\Delta$           &  Acc. (\%)               & \multicolumn{1}{l|}{$\Delta$}           &  Acc. (\%)               & $\Delta$           \\ \midrule
Initial                                      & 35.4                        & -                 & 41.0                                               & -                  & 47.1                        & -                  & 53.2                        & \multicolumn{1}{l|}{-}                  & 59.6                        & -                  \\
E-SOT                                        & 55.4  & $\uparrow$56.41\% & 61.1                       & $\uparrow$48.94\%  & 67.1  & $\uparrow$42.55\%  & 56.7  & \multicolumn{1}{l|}{$\uparrow$6.54\%}   & 57.7  & $\downarrow$3.22\% \\
E-SUOT                                       & 64.5  & $\uparrow$82.32\% & 78.2                       & $\uparrow$90.50\%  & 74.5  & $\uparrow$58.28\%  & 79.8  & \multicolumn{1}{l|}{$\uparrow$49.92\%}  & 77.7  & $\uparrow$30.42\%  \\ \midrule
\multicolumn{1}{c|}{\multirow{2}{*}{Method}} & \multicolumn{2}{c|}{$p(y=1)=0.6$}               & \multicolumn{2}{c|}{$p(y=1)=0.7$}                                     & \multicolumn{2}{c|}{$p(y=1)=0.8$}                & \multicolumn{2}{c|}{$p(y=1)=0.9$}                                     & \multicolumn{2}{c}{$p(y=1)=1.0$}                 \\ \cmidrule{2-11} 
\multicolumn{1}{c|}{}                        &  Acc. (\%)               & $\Delta$          &  Acc. (\%)                                    & $\Delta$           &  Acc. (\%)               & $\Delta$           &  Acc. (\%)               & \multicolumn{1}{l|}{$\Delta$}           &  Acc. (\%)               & $\Delta$           \\ \midrule
Initial                                      & 73.8                        & -                 & \multicolumn{1}{l|}{80.0}                          & -                  & 85.9                        & -                  & 91.4                        & \multicolumn{1}{l|}{-}                  & 97.1                        & -                  \\
E-SOT                                        & 75.2  & $\uparrow$1.95\%  & \multicolumn{1}{l|}{74.1 } & $\downarrow$7.33\% & 80.0  & $\downarrow$6.89\% & 89.9  & \multicolumn{1}{l|}{$\downarrow$1.65\%} & 96.0  & $\downarrow$1.08\% \\
E-SUOT                                       & 79.1  & $\uparrow$7.24\%  & \multicolumn{1}{l|}{84.0 } & $\uparrow$5.05\%   & 87.8  & $\uparrow$2.18\%   & 91.8  & \multicolumn{1}{l|}{$\uparrow$0.45\%}   & 97.6  & $\uparrow$0.54\%   \\ \bottomrule
\end{tabular}
    \begin{tablenotes}
        \item{\textit{Kindly Note}: ${\Delta}$ denotes performance change percentage compared to E-SUOT with entropy regularization and KL divergence. For source domain, $p(y=1)=0.63$. The acronym E-SOT stands for entropy-regularized semi-dual optimal transport.
        }
        \end{tablenotes}
\end{threeparttable}
\end{table*}
While our main contribution lies in introducing the semi-dual UOT formulation to analyze and improve the flow-based GDA approach, we further investigate ``why the UOT-based method performs better than the vanilla OT formulation''. To this end, we conduct experiments under label-shift and missing-class scenarios using the Portrait dataset (binary classification task). Specifically, we resample the target domain to vary the class prior $p(y=1)$; when $p(y=1)=0.0$ or $p(y=1)=1.0$, a missing-class situation is realized. The comparison results between vanilla OT and UOT are reported in~\Cref{tab:comparisonOTFormulation}. Here, ``E-SOT'' denotes entropy-regularized semi-dual optimal transport. From the table, we observe that in the missing-class cases ($p(y=1)=0.0$ or $p(y=1)=1.0$), the vanilla OT formulation not only fails to improve but even {degrades} the classifier’s performance in the target domain. Moreover, under moderate label shift ($p(y=1)$ between 0.3 and 0.9), the performance of vanilla OT fluctuates strongly and lacks stability, whereas UOT consistently improves performance. These observations demonstrate that incorporating the unbalanced OT formulation provides a more robust and effective approach for handling domain adaptation under label distribution mismatch.

\subsection{Further Analysis of Table~\ref{tab:comparisonUDAApproachesImprove}}\label{subsec:additionalDiscussionsOnImprovement}


We have reported the main results in~\Cref{tab:comparisonUDAApproachesImprove} of the main content. However, the improvements on Office-Home appear to be marginal for FixBi and CoVi. Therefore, to better understand when E-SUOT is most effective, we conduct a more detailed discrepancy-based analysis in \Cref{tab:comparisonUDAApproachesImproveWass}. In this table, we report the accuracy change after applying E-SUOT to two UDA backbones, FixBi and CoVi, and analyze the results using three discrepancy measures: the 2-Wasserstein distance $\mathcal{W}_2$, the classwise Wasserstein distance $\mathcal{CW}_2$, and the Gromov-Wasserstein distance $\mathcal{GW}$. The definitions of the classwise Wasserstein distance and the Gromov-Wasserstein distance are given as follows. Specifically, the classwise Wasserstein distance is computed by first estimating the empirical class-conditional distributions in the source and target domains, and then averaging the class-level Wasserstein distances:
\begin{equation}
\mathcal{CW}_2
\coloneqq
\sum_{c \in {C}_{\mathrm{valid}}}
\omega_c
\mathcal{W}_2^2
\left(
\hat{\mu}_s^c,
\hat{\mu}_t^c
\right),\qquad
\hat{\mu}_s^c
=
\frac{1}{n_s^c}
\sum_{i:x_s^{(i)}=c}
\updelta_{x_s^{(i)}},
\quad
\hat{\mu}_t^c
=
\frac{1}{n_t^c}
\sum_{j:x_t^{(j)}=c}
\updelta_{x_t^{(j)}},
\end{equation}
where $\omega_c$ denotes the normalized class weight, and ${C}_{\mathrm{valid}}$ contains the classes with enough samples in both domains. The Gromov-Wasserstein distance is obtained as follows:
\begin{equation}
\mathcal{GW}(\mu_s,\mu_t)
\coloneqq
\min_{\pi \in \Pi(a,b)}
\sum_{i,i'=1}^{n_s}
\sum_{j,j'=1}^{n_t}
[
\Vert x^{(i)}_s-x^{(i')}_s\Vert_2^2
-
\Vert x^{(j)}_t-x^{(j')}_t\Vert_2^2
]^2
\pi_{ij}\pi_{i'j'} .
\end{equation}

The Wasserstein distance measures the overall distributional gap between the source and target embeddings, the classwise Wasserstein distance reflects class-level semantic mismatch, and the Gromov-Wasserstein distance characterizes the discrepancy in the relational geometry of the two domains.

\begin{table*}[h]
\centering
\caption{Accuracy (\%) improvement over different UDA feature extractor backbones and corresponding statistical analysis.}\label{tab:comparisonUDAApproachesImproveWass}
\small\setlength{\tabcolsep}{2pt}\renewcommand\arraystretch{0.95}
  \resizebox{\linewidth}{!}{
\begin{threeparttable}
\begin{tabular}{l|l|lllllllllllll}
\toprule
\multicolumn{2}{c|}{Method} & Ar$\to$Cl & Ar$\to$Pr & Ar$\to$Rw & Cl$\to$Ar & Cl$\to$Pr & Cl$\to$Rw & Pr$\to$Ar & Pr$\to$Cl & Pr$\to$Rw & Rw$\to$Ar & Rw$\to$Cl & Rw$\to$Pr & Avg. \\
\midrule
\multirow{6}{*}{FixBi} & vanilla & 58.1 & 77.3 & 80.4 & 67.7 & 79.5 & 78.1 & 65.8 & 57.9 & 81.7 & 76.4 & 62.9 & 86.7 & 72.7 \\
 & +E-SUOT & 61.7 & 79.1 & 81.7 & 67.6 & 77.6 & 78.2 & 67.3 & 61.3 & 82.7 & 76.0 & 62.5 & 85.3 & 73.4 \\
 & $\Delta$ & $\uparrow$6.2\% & $\uparrow$2.3\% & $\uparrow$1.6\% & $\downarrow$0.1\% & $\downarrow$2.4\% & $\uparrow$0.1\% & $\uparrow$2.3\% & $\uparrow$5.9\% & $\uparrow$1.2\% & $\downarrow$0.5\% & $\downarrow$0.6\% & $\downarrow$1.6\% & $\uparrow$1.0\% \\
 & $\mathcal{W}_2^2$ & 8.1 & 12.3 & 8.1 & 12.5 & 8.7 & 4.1 & 9.2 & 5.5 & 4.0 & 8.7 & 3.3 & 6.4 & - \\
 & $\mathcal{CW}_2$ & 66.3 & 32.8 & 34.1 & 54.5 & 20.9 & 28.4 & 87.3 & 67.3 & 19.8 & 49.1 & 66.3 & 15.1 & - \\
 & $\mathcal{GW}$ & 11.1 & 92.3 & 9.7 & 22.0 & 143.8 & 145.1 & 12.5 & 155.9 & 128.5 & 10.4 & 100.1 & 116.1 & - \\
\midrule
\multirow{6}{*}{CoVi} & vanilla & 58.5 & 78.1 & 80.0 & 68.1 & 80.0 & 77.0 & 66.4 & 60.2 & 82.1 & 76.6 & 63.6 & 86.5 & 73.1 \\
 & +E-SUOT & 61.6 & 79.3 & 81.8 & 67.6 & 77.7 & 78.1 & 67.4 & 61.2 & 82.9 & 76.3 & 62.5 & 85.2 & 73.5 \\
 & $\Delta$ & $\uparrow$5.3\% & $\uparrow$1.5\% & $\uparrow$2.2\% & $\downarrow$0.7\% & $\downarrow$2.9\% & $\uparrow$1.4\% & $\uparrow$1.5\% & $\uparrow$1.7\% & $\uparrow$1.0\% & $\downarrow$0.4\% & $\downarrow$1.7\% & $\downarrow$1.5\% & $\uparrow$0.5\% \\
 & $\mathcal{W}_2^2$ & 6.8 & 11.0 & 8.7 & 12.2 & 8.8 & 3.9 & 11.0 & 5.8 & 6.4 & 6.7 & 4.9 & 4.8 & - \\
 & $\mathcal{CW}_2$ & 55.5 & 23.4 & 31.6 & 60.6 & 22.9 & 25.9 & 81.1 & 83.5 & 22.3 & 54.3 & 72.5 & 14.2 & - \\
 & $\mathcal{GW}$ & 15.1 & 96.1 & 78.9 & 20.5 & 130.1 & 167.0 & 76.1 & 144.2 & 101.5 & 77.7 & 132.5 & 82.7 & - \\
\bottomrule
\end{tabular}
\begin{tablenotes}
    \item{\textit{Kindly Note}: $\Delta$ denotes the relative accuracy change of E-SUOT over the corresponding vanilla UDA backbone.
    }
\end{tablenotes}
\end{threeparttable}
}
\end{table*}

From~\Cref{tab:comparisonUDAApproachesImproveWass}, we observe that E-SUOT is more beneficial when the source and target domains share relatively compatible global and structural geometry. In such cases, reducing the global transport discrepancy can effectively improve cross-domain alignment without substantially distorting the intrinsic feature structure. For example, on the challenging Ar$\to$Cl transfer, E-SUOT improves FixBi and CoVi by 6.2\% and 5.3\%, respectively. This transfer has relatively low global discrepancy compared with several other directions, and its structural discrepancy is also not excessively large. Therefore, the static endpoint coupling optimized by E-SUOT provides an effective correction to the pretrained feature extractor.

In contrast, E-SUOT may bring limited or even negative gains when the two domains exhibit large class-level or structural mismatch. For instance, on Cl$\to$Pr and Rw$\to$Cl, the improvements become negative for both FixBi and CoVi. These cases are associated with either large classwise Wasserstein distances or large Gromov-Wasserstein discrepancies, indicating that the semantic correspondence or relational geometry between domains is less compatible. Under such conditions, reducing the Wasserstein discrepancy alone may not be sufficient and can even lead to negative transfer, since samples from different classes may be incorrectly aligned or the intrinsic relational structure may be distorted.

It is also worth noting that E-SUOT does not rely on pseudo-labels or explicit class-level supervision during the transport optimization, and it does not directly exploit the relational geometry of the data. This is different from strong UDA methods such as FixBi and CoVi, which already incorporate additional regularization and task-specific adaptation mechanisms. Therefore, the improvement obtained by E-SUOT on top of these strong backbones is expected to be moderate. Nevertheless, the consistent average gains over both FixBi and CoVi suggest that E-SUOT can serve as a complementary module, especially for transfer tasks where the source and target domains have compatible global geometry and limited class-level mismatch.

\subsection{Computational Time Comparison}
As shown in~\Cref{fig:timeComplexityResult}, MMDSW, SMMD, and GOAT are generally among the most time-consuming methods, while GGF also incurs high cost on the smaller Portraits dataset. This is mainly due to their algorithmic designs: MMDSW and SMMD require repeated computation of distributional discrepancies based on pairwise distances, GOAT involves OT-related optimization, and GGF relies on a forward Euler discretization that typically requires many small steps to control simulation errors. In addition, AST and STDW have different computational bottlenecks. AST requires an additional backward pass to generate adversarial perturbations before each classifier update, while STDW combines sample evolution, rectified-flow updates, stochastic perturbations, and dynamically weighted source-target classifier training. These components increase their runtime compared with simple self-training. In contrast, E-SUOT remains computationally efficient and stable across datasets. It directly parameterises the transport map with a neural network and generates transported samples via a single forward pass. Together with the JKO scheme, which requires only a few backward discretization steps, E-SUOT achieves low runtime while maintaining the best adaptation performance.
\begin{figure}[!h]
  \begin{center}
\includegraphics[width=0.48\textwidth]{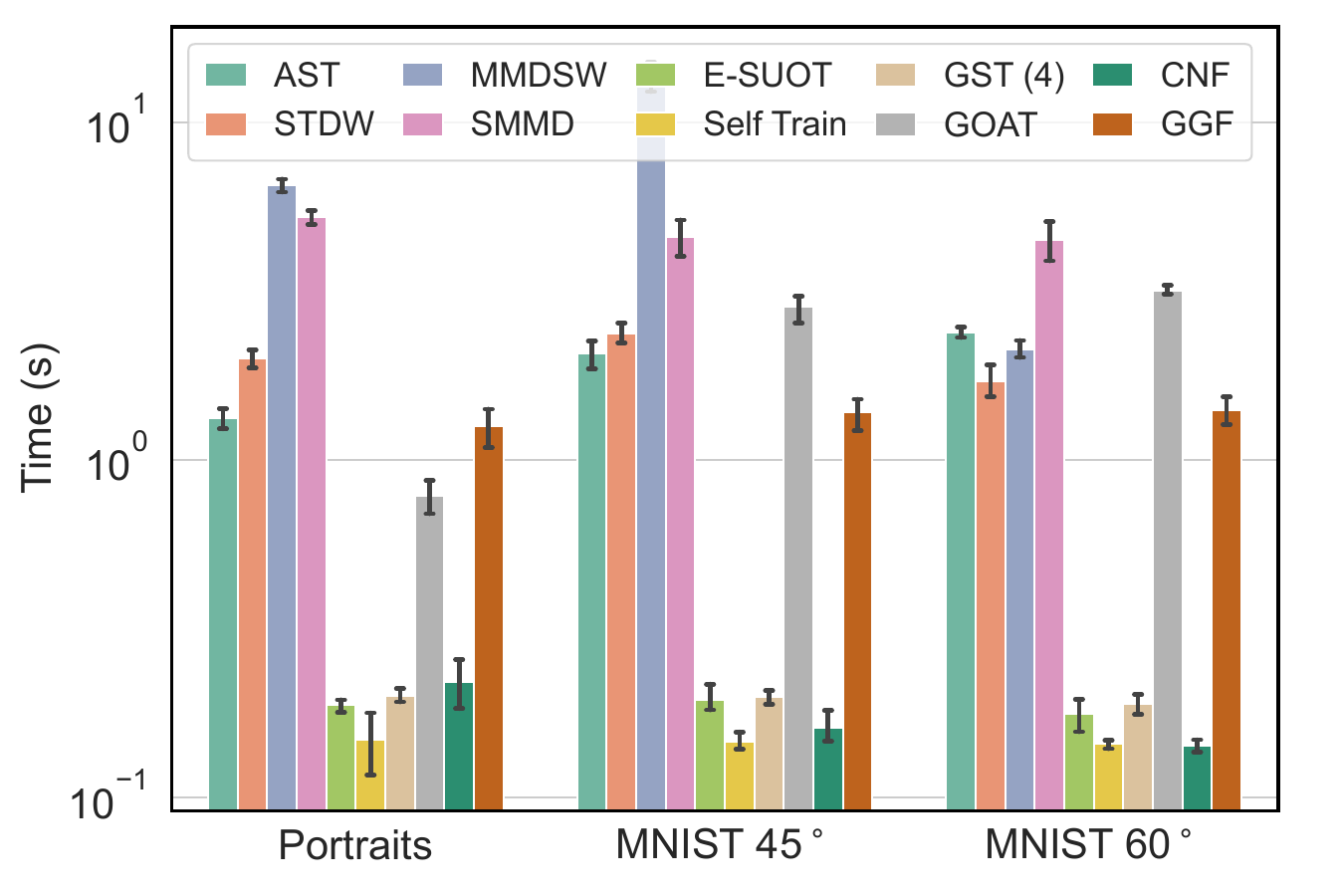}
    \caption{Computational time (s).}\label{fig:timeComplexityResult}
  \end{center}
\end{figure}

\section{Discussions on Limitations and Future Directions}\label{sec:limitationsDiscussions}
The limitations and future research directions of this work can be summarized as follows:
\begin{itemize}[leftmargin=*]
\item{\textbf{Consideration of Label Information:} In this work, we focused primarily on feature adaptation and did not explicitly incorporate label or discriminator information into the adaptation process. As a result, the performance of the proposed E-SUOT framework may degrade under scenarios involving significant covariate shift~\citep{sugiyama2007covariate,sugiyama2012machine}. An important direction for future research is to integrate label information into the transportation process, for example, classifier-guidance approach~\citep{courty2017joint,dhariwal2021diffusion,zhuang2024gradual}, which could further enhance model robustness and adaptation performance.
}
\item{\textbf{Regularization of the Transportation Plan:} To facilitate computation, we introduced entropy regularization on the transport plan; however, this may introduce potential instability or blur sparsity in the map~\citep{yin2025wasserstein}. Future work may explore alternative regularization strategies~\citep{courty2014domain,7586038}, such as group sparsity (to better incorporate label priors) or Laplacian regularization (to preserve local relationships), in order to further stabilize training and improve the properties of the learned potential function $w$.}
\item{\textbf{Exploration of Other Discrepancy Measures:} In this work, we adopted the Wasserstein distance as the primary metric for measuring domain discrepancy. However, other discrepancy measures, such as the Fisher-Rao distance~\citep{ijcai2022p679,wang2023gadpvi,zhuinclusive} and Bures–Wasserstein discrepancy~\citep{11456113,wang2026distdf,wang2026quadratic}, could also be explored to enable more flexible or principled adaptation approaches. Future work may investigate the use of alternative metrics~\citep{neklyudov2023wasserstein,skretafeynman} to further improve the quality of intermediate domains and thereby enhance GDA performance.}

\item{\textbf{Assumption of Label Invariance Along the Transportation Path:} 
The current formulation assumes that labels remain invariant during adaptation, i.e., $y_{t+1}\leftarrow y_t$, and thus primarily focuses on aligning the marginal feature distributions $p(x_t)$. 
This assumption may limit performance under pronounced label shift scenarios, where the conditional relationship $p(y|x)$ varies across domains, a case often encountered in unbalanced or fine-grained settings. Although our framework can be extended by incorporating classifier uncertainty or pseudo-label refinement (drawing inspiration from self-training schemes such as Equations~(3) and (4) in reference~\citep{kumar2020understanding}), handling substantial concept drift remains an open challenge. Future work may consider integrating adaptive label transport~\citep{courty2017joint} or uncertainty-aware pseudo-labeling~\citep{kumar2020understanding,zhuang2024gradual} to explicitly account for label-shift dynamics along the adaptation trajectory.
}

\item{\textbf{Higher Efficiency Utilization of Neural Networks:} In our current design, each stage requires training three separate networks, namely $w_\phi$, $\boldsymbol{T}_\theta$, and $h_\omega$, to generate each intermediate domain. Although this strategy can save computation time compared to existing approaches that perform intermediate-domain generation online during the domain adaptation stage, it may still be suboptimal in terms of overall training efficiency in the offline stage. A promising future direction is to reformulate the training of $\boldsymbol{T}_\theta$ into a more parameter-efficient form, such as adopting a LoRA-style adaptation~\citep{hulora,zhuang2025come} or using the reparameterization trick to parameterize the differences between different stages~\citep{choi2024scalable}. 
For $w_\phi$ and $h_\omega$, one possible improvement is to finetune the last layer with variational Bayesian techniques~\citep{rudner2022tractable,harrison2024variational,brunzemabayesian}, which could further reduce training cost.
}
\end{itemize}

\end{document}